\newtheorem{post}{Postulate}
\newtheorem{thrm}{Theorem}
\newtheorem{assp}{Assumption}
\theoremstyle{definition}
\newtheorem{dfnt}{Definition}
\DeclareMathOperator*{\argmax}{arg\,max}
\DeclareMathOperator{\tr}{tr}
\newcommand\boldred[1]{\textit{\textbf{#1}}}
\title{Domain Generalization via Multidomain Discriminant Analysis}
\author{ 
	Shoubo Hu\thanks{~~Correspondence: Shoubo Hu $<$\href{mailto:sbhu@cse.cuhk.edu.hk}{sbhu@cse.cuhk.edu.hk}$>$ }~~, Kun Zhang$^{\dagger}$, Zhitang Chen{\small $^{\ddagger}$}, Laiwan Chan{$^{\ast}$} \\
	$^{\ast}$Department of Computer Science and Engineering, The Chinese University of Hong Kong \\ 
	$^{\dagger}$Department of Philosophy, Carnegie Mellon University~
	$^{\ddagger}$Huawei Noah's Ark Lab \\
}
\begin{document}

\maketitle

\begin{abstract}
Domain generalization (DG) aims to incorporate knowledge from multiple source domains into a single model that could generalize well on unseen target domains. This problem is ubiquitous in practice since the distributions of the target data may rarely be identical to those of the source data. In this paper, we propose Multidomain Discriminant Analysis (MDA) to address DG of classification tasks in general situations. MDA learns a domain-invariant feature transformation that aims to achieve appealing properties, including a minimal divergence among domains within each class, a maximal separability among classes, and overall maximal compactness of all classes. Furthermore, we provide the bounds on excess risk and generalization error by learning theory analysis. Comprehensive experiments on synthetic and real benchmark datasets demonstrate the effectiveness of MDA.
\end{abstract}

\section{INTRODUCTION}
Supervised learning has made considerable progress in tasks such as image classification \citep{krizhevsky2012imagenet}, object recognition \citep{DBLP:journals/corr/SimonyanZ14a}, and object detection \citep{girshick2014rich}. In standard setting, a model is trained on training or source data and then applied on test or target data for prediction, where one implicitly assumes that both source and target data follow the same distribution. However, this assumption is very likely to be violated in real problems. For example, in image classification, images from different sources may be collected under different conditions (e.g., viewpoints, illumination, backgrounds, etc), which makes classifiers trained on one domain perform poorly on instances of \textit{previously unseen} domains. These problems of transferring knowledge to unseen domains are known as domain generalization (DG; \citep{blanchard2011generalizing}). Note that no data from target domain is available in DG, whereas unlabeled data from the target domain is usually available in domain adaptation, for which a much richer literature exists (e.g., see \cite{patel2015visual}).

Denote the space of feature $X$ by $\mathcal{X}$ and the space of label $Y$ by $\mathcal{Y}$. A domain is defined as a joint distribution $\mathbb{P}(X, Y)$ over $\mathcal{X} \times \mathcal{Y}$. In DG of classification tasks, one is given $m$ sample sets, which were generated from $m$ source domains, for model training. The goal is to incorporate the knowledge from source domains to improve the model generalization ability on an \textit{unseen} target domain. An example of DG is shown in Figure \ref{fig:illustration}. 

\begin{figure}[htb]
    \centering
    \includegraphics[width=0.45 \textwidth]{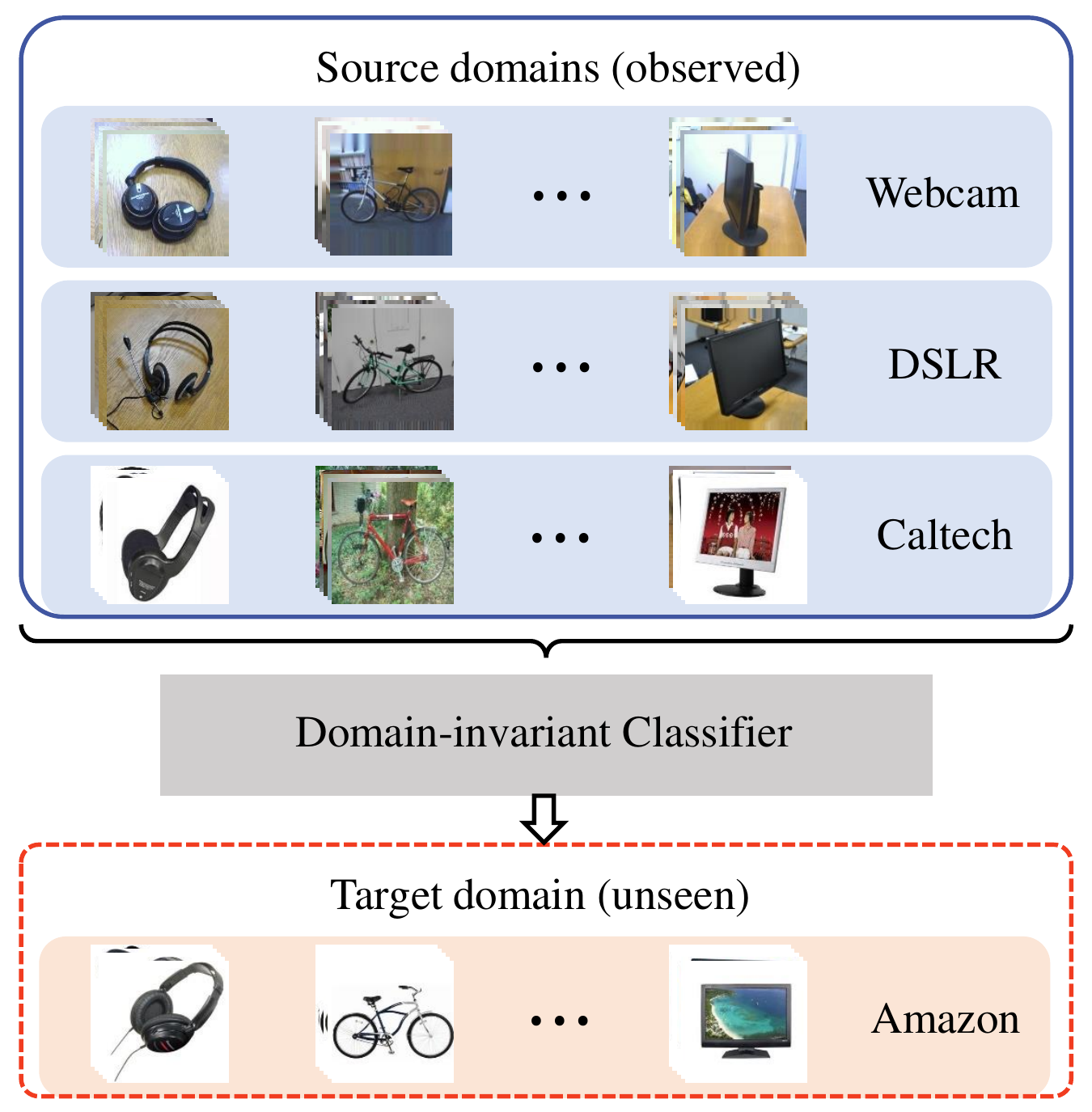}
    \caption{Illustration of DG on Office+Caltech Dataset. One is given source domains: Webcam, DSLR, Caltech, and aims to train a classifier generalizes well on target domain Amazon, which is unavailable in training.}
    \label{fig:illustration}
\end{figure}

Although various techniques such as kernel methods \citep{muandet2013domain, ghifary2017scatter, AAAI1816595}, support vector machine (SVM) \citep{khosla2012undoing, xu2014exploiting}, and deep neural network \citep{ghifary2015domain, Motiian_2017_ICCV, 8237853, lihl2018domain, Li_2018_ECCV}, have been adopted to solve DG problem, the general idea, which is learning a domain-invariant representation with stable (conditional) distribution in all domains, is shared in most works. Among previous works, kernel-based methods interpret the domain-invariant representation as a feature transformation from the original input space to a transformed space $\mathbb{R}^{q}$, in which the (conditional) distribution shift across domains is minimized.

Unlike previous kernel-based methods, which assume that $\mathbb{P}(Y|X)$ keeps stable and only $\mathbb{P}(X)$ changes across domains (i.e., the covariate shift situation \citep{shimodaira2000improving}), the problem of DG or domain adaptation has also been investigated from a causal perspective \citep{zhang2015multi}. In particular, \cite{zhang2013domain} pointed out that for many learning problems, especially for classification tasks, $Y$ is usually the cause of $X$, and proposed the setting of target shift ($\mathbb{P}(Y)$ changes while $\mathbb{P}(X|Y)$ stays the same across domains), conditional shift ($\mathbb{P}(Y)$ stays the same and $\mathbb{P}(X|Y)$ changes across domains), and their combination accordingly. \cite{pmlr-v48-gong16} proposed to do domain adaptation with conditionally invariant components of $X$, i.e., the transformations of $X$ that have invariant conditional distribution given $Y$ across domains. \cite{AAAI1816595} then used this idea for DG, under the assumption of conditional shift. 
Their assumptions stem from the following postulate of causal independence \citep{janzing2010causal,daniuvsis2010inferring}:
\begin{post}[Independence of cause and mechanism]\label{post:1}
If $Y$ causes $X$ ($Y \to X$), then the marginal distribution of the cause, $\mathbb{P}(Y)$, and the conditional distribution of the effect given cause, $\mathbb{P}(X|Y)$, are ``independent'' in the sense that $\mathbb{P}(X|Y)$ contains no information about $\mathbb{P}(Y)$.
\end{post}
According to postulate \ref{post:1}, $\mathbb{P}(X|Y)$ and $\mathbb{P}(Y)$ would behave independently across domains. However, this independence typically does not hold in the anti-causal direction \citep{ScholkopfJPSZMJ2012}, so $\mathbb{P}(Y|X)$ and $\mathbb{P}(X)$ tends to vary in a coupled manner across domains. Under assumptions that $\mathbb{P}(X|Y)$ changes while $\mathbb{P}(Y)$ keeps stable, generally speaking, both $\mathbb{P}(Y|X)$ and $\mathbb{P}(X)$ change across domains in the anti-causal direction, which is clearly different from the covariate shift situation.

In this paper, we further relax the causally motivated assumptions in \cite{AAAI1816595} and propose a novel DG method, which is applicable when both $\mathbb{P}(X|Y)$ and $\mathbb{P}(Y)$ change across domains. 
Our method focuses on separability between classes and does not enforce the transformed marginal distribution of features to be stable, which allows us to relax the assumption of stable $\mathbb{P}(Y)$. To improve the separability, a novel measure named average class discrepancy, which measures the class discriminative power of source domains, is proposed. Average class discrepancy and other three measures are unified in one objective for feature transformation learning to improve its generalization ability on the target domain.
As the second contribution, we derive the bound on excess risk and generalization error\footnote{\cite{blanchard2011generalizing} proved the generalization error bound of DG in general settings} for kernel-based domain-invariant feature transformation methods. To the best of our knowledge, this is one of the first works to give theoretical learning guarantees on excess risk of DG. 
Lastly, experimental results on synthetic and real datasets demonstrate the efficacy of our method in handling varying class prior distributions $\mathbb{P}(Y)$ and complex high-dimensional distributions, respectively.

This paper is organized as follows. Section 2 gives the background on kernel mean embedding. Section 3 introduces our method in detail. 
Section 4 gives the bounds on excess risk and generalization error for kernel-based methods. Section 5 gives experimental settings and analyzes the results. Section 6 concludes this work.

\section{PRELIMINARY ON KERNEL MEAN EMBEDDING}
Kernel mean embedding is the main technique to characterize probability distributions in this paper. Kernel mean embedding represents probability distributions as elements in a reproducing kernel Hilbert space (RKHS). More precisely, an RKHS $\mathcal{H}$ over domain $\mathcal{X}$ with a kernel $k$ is a Hilbert space of functions $f:\mathcal{X} \rightarrow \mathbb{R}$. Denoting its inner product by $\langle \cdot, \cdot \rangle_{\mathcal{H}}$, RKHS $\mathcal{H}$ fulfills the reproducing property $\langle f(\cdot), k(\bm{x},\cdot) \rangle_{\mathcal{H}} = f(\bm{x})$, where $\phi(\bm{x}) \coloneqq k(\bm{x},\cdot)$ is the canonical feature map of $\bm{x}$. The kernel mean embedding of a distribution $\mathbb{P}(X)$ is defined as \citep{smola2007hilbert,gretton2007kernel}:
\begin{equation}
	\mu_{X} \coloneqq \mathbb{E}_{X}[\phi(X)] = \int _{ \mathcal{X}  }^{  }{ \phi(\bm{x})d \mathbb{P} (\bm{x}) },
	\label{eq:ebd_def}
\end{equation}
where $\mathbb{E}_{X}[\phi(X)]$ is the expectation of $\phi(X)$ with respect to $\mathbb{P}(X)$. It was shown that $\mu_{X}$ is guaranteed to be an element in the RKHS if $\mathbb{E}_{X}[k(\bm{x},\bm{x})]<\infty$ is satisfied \citep{smola2007hilbert}. In practice, given a finite sample of size $n$, the kernel mean embedding of $\mathbb{P}(X)$ is empirically estimated as $\hat{\mu}_{X} = \frac{1}{n} \sum^{n}_{i=1} \phi(\bm{x}_{i})$, where $\lbrace \bm{x}_{i} \rbrace^{n}_{i=1}$ are independently drawn from $\mathbb{P}(X)$. When $k$ is a characteristic kernel \citep{973}, 
$\mu_{X}$ captures all information about $\mathbb{P}(X)$ \citep{sriperumbudur2008injective}, which means that $\Vert \mu_{X} - \mu_{X^{\prime}} \Vert_{\mathcal{H}} = 0$ if and only if $\mathbb{P}(X)$ and $\mathbb{P}(X^{\prime})$ are the same distribution.


\section{MULTIDOMAIN DISCRIMINANT ANALYSIS}
\subsection{PROBLEM DEFINITION}
DG of classification tasks is studied in this paper. Let $\mathcal{X}$ be the feature space, $\mathcal{Y}$ be the space of class labels, and $c$ be the number of classes. A domain is defined to be a joint distribution $\mathbb{P}(X,Y)$ on $\mathcal{X} \times \mathcal{Y}$.
Let $\mathfrak{P}_{\mathcal{X} \times \mathcal{Y}}$ denote the set of domains $\mathbb{P}(X,Y)$ and $\mathfrak{P}_\mathcal{X}$ denote the set of distributions $\mathbb{P}(X)$ on $\mathcal{X}$. We assume that there is an underlying finite-variance unimodal distribution over $\mathfrak{P}_{\mathcal{X} \times \mathcal{Y}}$. In practice, domains are not observed directly, but given in the form of finite sample sets.

\begin{assp}[Data-generating process]\label{assp:data}
Each sample set is assumed to be generated in two separate steps: 1) a domain $\mathbb{P}^{s}(X,Y)$ is sampled from $\mathfrak{P}_{\mathcal{X} \times \mathcal{Y}}$, where $s$ is the domain index; 2) $n^{s}$ independent and identically distributed (i.i.d.) instances are then drawn from $\mathbb{P}^{s}(X,Y)$.
\end{assp}

Suppose there are $m$ domains sampled from $\mathfrak{P}_{\mathcal{X} \times \mathcal{Y}}$, the set of $m$ observed sample sets is denoted by $\mathcal{D}=\lbrace \mathcal{D}^{s} \rbrace^{m}_{s=1}$, where each $\mathcal{D}^{s} = \lbrace ( \bm{x}^{s}_{i}, y^{s}_{i}) \rbrace^{n^{s}}_{i=1}$ consists of $n^{s}$ i.i.d. instances from $\mathbb{P}^{s}(X,Y)$. Since in general $\mathbb{P}^{s}(X,Y) \neq \mathbb{P}^{s^\prime}(X,Y)$, instances in $\mathcal{D}$ are not i.i.d.

In DG of classification tasks, one aims to incorporate the knowledge in $\mathcal{D}$ into a model which could generalize well on a previously unseen target domain. In this work, features $X$ are first mapped to an RKHS $\mathcal{H}$.
Then we resort to learning a transformation from the RKHS $\mathcal{H}$ to a $q$-dimensional transformed space $\mathbb{R}^{q}$, in which instances of the same class are close and instances of different classes are distant from each other. 1-nearest neighbor is adopted to conduct classification in $\mathbb{R}^{q}$.

\begin{table}[htbp] 
	\centering
	\caption{Notations used in the paper}
	\vspace{-3mm}
	\label{tb:notation}
	\resizebox{\linewidth}{!}{%
	\begin{tabular}{cccc}
		\toprule
		Notation & Description & Notation & Description \\
		\midrule
		$X$, $Y$ & feature/label variable & $\bm{x}$, $y$ & feature/label instance \\
		$m$, $c$ & \# domains/classes & $s$, $j$ & domain/class index \\
		$\mathfrak{P}_{\mathcal{X} \times \mathcal{Y}}$, $\mathfrak{P}_{\mathcal{X}}$  & the set of $\mathbb{P}(X,Y)$ / $\mathbb{P}(X)$ & $\mathcal{D}^{s}$ & sample set of domain $s$ \\
		$\mathbb{P}^{s}_{j}$ & class-conditional distribution & $\mu^{s}_{j}$ & kernel mean embedding of $\mathbb{P}^{s}_{j}$ \\
		$u_{j}$ & mean representation of class $j$ & $\bar{u}$ & mean representation of $\mathcal{D}$ \\
		$k$ & kernel & $\mathcal{H}_{k}$ & RKHS associated with $k$ \\
		\bottomrule
	\end{tabular} }
\end{table}

\subsection{REGULARIZATION MEASURES}

\subsubsection{Average Domain Discrepancy}
To achieve the goal that instances of the same class are close to each other, we first consider minimizing the discrepancy of the class-conditional distributions, $\mathbb{P}^{s}(X|Y=j)$, within each class over all source domains. 

For ease of notation, the class-conditional distribution of class $j$ in domain $s$, $\mathbb{P}^{s}(X|Y=j)$, is denoted by $\mathbb{P}^{s}_{j}$. Denoting the kernel mean embedding \eqref{eq:ebd_def} of $\mathbb{P}^{s}_{j}$ by $\mu^{s}_{j}$, the average domain discrepancy is defined below.

\begin{dfnt}[Average domain discrepancy]
Given the set of all class-conditional distributions $\mathcal{P} = \lbrace \mathbb{P}^{s}_{j} \rbrace$ for $s\in \lbrace 1, \dots, m \rbrace$ and $j \in \lbrace 1,\dots,c \rbrace$, the average domain discrepancy, $\Psi^{\textit{add}}(\mathcal{P})$, is defined as
\begin{align}
\Psi^{\textit{add}}(\mathcal{P}) \coloneqq \frac{1}{c\binom{m}{2}} \sum^{c}_{j=1} \sum_{1 \le s < s^{\prime} \le m} \Vert \mu^{s}_{j} - \mu^{s^{\prime}}_{j} \Vert^{2}_{\mathcal{H}},
\label{eq:add_def}
\end{align}
where $\binom{m}{2}$ is the number of 2-combinations from a set of $m$ elements, $\Vert \cdot \Vert^{2}_{\mathcal{H}}$ denotes the squared norm in RKHS $\mathcal{H}$, and $\Vert \mu^{s}_{j} - \mu^{s^{\prime}}_{j} \Vert_{\mathcal{H}}$ is thus the Maximum Mean Discrepancy (MMD; \citep{gretton2007kernel}) between $\mathbb{P}^{s}_{j}$ and $\mathbb{P}^{s^{\prime}}_{j}$.
\end{dfnt}

The following theorem shows that $\Psi^{\textit{add}}(\mathcal{P})$ is suitable for measuring the discrepancy between class-conditional distributions of the same class from multiple domains.

\begin{thrm}
Let $\mathcal{P}$ denote the set of all class-conditional distributions. If $k$ is a characteristic kernel \citep{973}, $\Psi^{\text{add}}(\mathcal{P})=0$ if and only if $\mathbb{P}^{1}_{j} = \mathbb{P}^{2}_{j}=\dots=\mathbb{P}^{m}_{j}$, for $j=1,\dots, c$.
\end{thrm}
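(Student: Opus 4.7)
The plan is to exploit two elementary structural facts about $\Psi^{\text{add}}(\mathcal{P})$: it is a (positively weighted) sum of squared RKHS norms, so each summand is non-negative, and each summand is an MMD between two class-conditional distributions. Once both observations are in place, the characterizing property of the kernel does the rest.

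First I would handle the easy direction. If $\mathbb{P}^{1}_{j} = \mathbb{P}^{2}_{j} = \cdots = \mathbb{P}^{m}_{j}$ for every $j = 1, \dots, c$, then the kernel mean embeddings coincide, $\mu^{s}_{j} = \mu^{s^{\prime}}_{j}$ for all pairs $s, s^{\prime}$ and every class $j$, so every term $\|\mu^{s}_{j} - \mu^{s^{\prime}}_{j}\|^{2}_{\mathcal{H}}$ in the double sum of \eqref{eq:add_def} vanishes, yielding $\Psi^{\text{add}}(\mathcal{P}) = 0$.

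For the converse, I would argue term by term. Since $\|\cdot\|^{2}_{\mathcal{H}} \ge 0$, the sum \eqref{eq:add_def} is a sum of non-negative terms normalized by the positive factor $1/(c\binom{m}{2})$; it can equal zero only if each summand is zero, i.e.\ $\|\mu^{s}_{j} - \mu^{s^{\prime}}_{j}\|_{\mathcal{H}} = 0$ for all $j \in \{1,\dots,c\}$ and all $1 \le s < s^{\prime} \le m$. Because $k$ is characteristic, the kernel mean embedding map $\mathbb{P} \mapsto \mu$ is injective on probability distributions (as recalled in Section~2 from \citep{sriperumbudur2008injective}), so this vanishing MMD forces $\mathbb{P}^{s}_{j} = \mathbb{P}^{s^{\prime}}_{j}$ for all such $s, s^{\prime}$ and $j$. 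Transitivity of equality on each class index $j$ then gives $\mathbb{P}^{1}_{j} = \cdots = \mathbb{P}^{m}_{j}$, as required.

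There is essentially no hard step here — the result is a direct repackaging of the injectivity property of kernel mean embeddings under a characteristic kernel, combined with the non-negativity of RKHS norms. The only thing to be careful about is making the quantifier swap explicit, i.e.\ observing that pairwise equality of all class-conditional embeddings, across every pair $(s, s^{\prime})$ and each $j$ independently, is exactly the claimed multi-way equality within each class.
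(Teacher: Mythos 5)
Your proof is correct and follows essentially the same route as the paper's own argument: non-negativity of each squared-MMD term forces every summand to vanish, and the characteristic-kernel injectivity of the mean embedding (via \citep{sriperumbudur2008injective}) then gives pairwise, hence multi-way, equality within each class. You merely spell out the non-negativity step that the paper leaves implicit, so no further changes are needed.
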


\begin{proof}
Since $k$ is a characteristic kernel, $\Vert \mu_{\mathbb{P}} - \mu_{\mathbb{Q}}\Vert_{\mathcal{H}}$ is a metric and attains 0 if and only if $\mathbb{P} = \mathbb{Q}$ for any distributions $\mathbb{P}$ and $\mathbb{Q}$ \citep{sriperumbudur2008injective}. Therefore, $\Vert \mu^{s}_{j} - \mu^{s^{\prime}}_{j}\Vert_{\mathcal{H}} = 0$ if and only if $\mathbb{P}^{s}_{j} = \mathbb{P}^{s^{\prime}}_{j}$ for all $s$ and $s^{\prime}$ given $j$, which means $\mathbb{P}^{1}_{j} = \mathbb{P}^{2}_{j}=\dots=\mathbb{P}^{m}_{j}$ within each class $j$. Conversely, if $\mathbb{P}^{1}_{j} = \mathbb{P}^{2}_{j}=\dots=\mathbb{P}^{m}_{j}$ for $j=1,\dots, c$, then each term $\Vert \mu^{s}_{j} - \mu^{s^{\prime}}_{j} \Vert=0$ and $\Psi^{\text{add}}(\mathcal{P})$ is thus 0. 
\end{proof}

\subsubsection{Average Class Discrepancy}
Minimizing average domain discrepancy $\Psi^{\textit{add}}$ \eqref{eq:add_def} would make the means of class-conditional distributions of the same class close in $\mathcal{H}$. However, it is possible that the means of class-conditional distributions of different classes are also close, which is a major source of performance reduction of existing kernel-based DG methods. To this end, average class discrepancy is proposed. 

\begin{dfnt}[Average class discrepancy]
Let $\mathcal{P}$ denote the set of all class-conditional distributions. The average class discrepancy is defined as
\begin{align}
\Psi^{\textit{acd}}(\mathcal{P}) \coloneqq \frac{1}{\binom{c}{2}} \sum_{1 \le j < j^{\prime} \le c} \Vert u_{j} - u_{j^{\prime}} \Vert^{2}_{\mathcal{H}},
\label{eq:acd_def}
\end{align}
where $u_{j} = \sum^{m}_{s=1} \mathbb{P}(S=s|Y=j) \mu^{s}_{j}$ is the mean representation of class $j$ in RKHS $\mathcal{H}$. 
\end{dfnt}


It was shown in \cite{sriperumbudur2010hilbert} that the MMD between two distributions $\mathbb{P}$ and $\mathbb{Q}$, $\text{MMD}[ \mathbb{P}, \mathbb{Q} ] \le \sqrt{C} W_{1} ( \mathbb{P}, \mathbb{Q} )$ for some constant $C$ satisfying $\sup_{\bm{x} \in \mathcal{X}} k(\bm{x}, \bm{x}) \le C < \infty$, where $W_{1} ( \mathbb{P}, \mathbb{Q} )$ denotes the first Wasserstein distance \citep{del1999tests} between distributions $\mathbb{P}$ and $\mathbb{Q}$. In other words, if $\mathbb{P}$ and $\mathbb{Q}$ are distant in MMD metric, they are also distant in the first Wasserstein distance. Therefore, distributions of different classes tend to be distinguishable by maximizing average class discrepancy, $\Psi^{\textit{acd}}(\mathcal{P})$. 

\subsubsection{Incorporating Instance-level Information}
Both average domain discrepancy $\Psi^{\textit{add}}(\mathcal{P})$ \eqref{eq:add_def} and average class discrepancy $\Psi^{\textit{acd}}(\mathcal{P})$ \eqref{eq:acd_def} are defined based on the kernel mean embedding of class-conditional distributions $\mathbb{P}^{s}_{j}$. By simultaneously minimizing $\Psi^{\textit{add}}$ \eqref{eq:add_def} and maximizing $\Psi^{\textit{acd}}$ \eqref{eq:acd_def}, one would make class-conditional kernel mean embeddings within each class close and the those of different classes distant in $\mathcal{H}$. However, certain subtle information, such as the compactness of the distribution, is not captured in $\Psi^{\textit{add}}$ and $\Psi^{\textit{acd}}$. As a result, although all mean embeddings satisfy the desired condition, there may still be a high chance of misclassification for some instances. To incorporate such information conveyed in each instance, we propose two extra measures based on kernel Fisher discriminant analysis \citep{mika1999fisher}. The first is multidomain between-class scatter.

\begin{dfnt}[Multidomain between-class scatter] Let $\mathcal{D}$ denote the set of $n$ instances from $m$ domains, each of which consists of $c$ classes. The multidomain between-class scatter is
\begin{align}
    \Psi^{\textit{mbs}}(\mathcal{D}) \coloneqq \frac{1}{n} \sum^{c}_{j=1} n_{j} \Vert u_{j} - \bar{ u } \Vert^{2}_{\mathcal{H}},
    \label{eq:mbs_def}
\end{align}
where $n_{j}$ is the total number of instances in class $j$, and $\bar{ u } = \sum^{c}_{j=1} P(Y=j) u_{j}$ is the the mean representation of the entire set $\mathcal{D}$ in $\mathcal{H}$.
\end{dfnt}

Both $\Psi^{\textit{mbs}}(\mathcal{D})$ and $\Psi^{\textit{acd}}(\mathcal{P})$ measure the discrepancy between the distributions of different classes. The difference stems from the weight $n_{j}$ in $\Psi^{\textit{mbs}}(\mathcal{D})$ \eqref{eq:mbs_def}. By adding $n_{j}$, each term in $\Psi^{\textit{mbs}}(\mathcal{D})$ is equivalent to pooling all instances of the same class together and summing up their distance to $\bar{u}$. In other words, $\Psi^{\textit{mbs}}(\mathcal{D})$ corresponds to a simple pooling scheme. Note that when the proportion of instances of each class is the same across all domains (i.e., $n^{s}_{j} / n^{s} = n^{s'}_{j} / n^{s'}, \forall s, s'$ for $j = 1, \dots, c$, where $n^{s}_{j}$ is the number of instances of class $j$ in domain $s$), $\Psi^{\textit{mbs}}(\mathcal{D})$ is consistent with the between-class scatter in \cite{mika1999fisher}.

Multidomain within-class scatter, as a straightforward counterpart of multidomain between-class scatter \eqref{eq:mbs_def}, is defined as follows.
\begin{dfnt}[Multidomain within-class scatter]
    Let $\mathcal{D}$ denote the set of $n$ instances from $m$ domains, each of which consists of $c$ classes. The multidomain within-class scatter is
    \begin{align}
        \Psi^{\textit{mws}}(\mathcal{D}) \coloneqq \frac{1}{n} \sum^{c}_{j=1} \sum^{m}_{s=1} \sum^{n^{s}_{j}}_{i=1} \Vert \phi(\bm{x}^{s}_{i \in j}) - u_{j} \Vert^{2}_{\mathcal{H}},
        \label{eq:mws_def}
	\end{align}
	where $\bm{x}^{s}_{i \in j}$ denotes the feature vector of $i$th instance of class $j$ in domain $s$.
\end{dfnt}

The definition above indicates that multidomain within-class scatter measures the sum of the distance between the canonical feature map of each instance and the mean representation in RKHS $\mathcal{H}$ of the class it belongs to. It differs from average domain discrepancy in that the information of every instance is considered in multidomain within-class scatter. As a result, by minimizing $\Psi^{\textit{mws}}(\mathcal{D})$, one increases the overall compactness of the distributions across classes. Similar to $\Psi^{\textit{mbs}}(\mathcal{D})$, when the proportion of instances of each class is the same across all domains (i.e., $n^{s}_{j} / n^{s} = n^{s^{\prime}}_{j} / n^{s'}, \forall s, s'$ for $j = 1, \dots, c$), $\Psi^{\textit{mws}}(\mathcal{D})$ is consistent with the within-class scatter in \cite{mika1999fisher}.

We note that each of the measures has its unique contribution and that ignoring any of them may lead to sub-optimal solutions, as demonstrated by the empirical results and illustrated in Appendix A.

\subsection{FEATURE TRANSFORMATION}

Our method resorts to finding a suitable transformation from RKHS $\mathcal{H}$ to a $q$-dimensional transformed space $\mathbb{R}^{q}$, i.e., $\mathbf{W}: \mathcal{H} \mapsto \mathbb{R}^{q}$. We elaborate how the proposed measures are transformed to $\mathbb{R}^{q}$ in this section.

According to the property of norm in RKHS, $\Psi^{\textit{add}}(\mathcal{P})$ can be equivalently computed as
\begin{align}
\tr \left( \frac{1}{c\binom{m}{2}} \sum^{c}_{j=1} \sum_{1 \le s < s^{\prime} \le m} (\mu^{s}_{j} - \mu^{s^{\prime}}_{j}) (\mu^{s}_{j} - \mu^{s^{\prime}}_{j})^{T} \right), \label{eq:add_tr} 
\end{align}
where $\tr(\cdot)$ denotes the trace operator.

Let the data matrix $\mathbf{X}=\left[ \bm{x}_{1}, \dots, \bm{x}_{n} \right]^{T} \in \mathbb{R}^{n \times d}$, where $d$ is the dimension of input features $X$ and $n = \sum^{m}_{s=1}n^{s}$, and the feature matrix $\bm{\Phi}=\left[ \phi(\bm{x}_{1}), \dots, \phi(\bm{x}_{n}) \right]^{T}$, where $\phi: \mathbb{R}^{d} \mapsto \mathcal{H}$ denotes the canonical feature map. Then $\mathbf{W}$ can be expressed as a linear combination of all canonical feature maps in $\bm{\Phi}$ \citep{scholkopf1998nonlinear}, i.e., $\mathbf{W} = \bm{\Phi}^{T}\mathbf{B}$, where $\mathbf{B}$ is a matrix collecting coefficients of canonical feature maps. Then by applying the transformation $\mathbf{W}$, $\Psi^{\textit{add}}(\mathcal{P})$ in trace formulation \eqref{eq:add_tr} becomes\
\begin{small}
\begin{align}
& \Psi^{\textit{add}}_{\mathbf{B}}
= \tr \left( \mathbf{B}^{T} \mathbf{GB} \right),
\label{eq:add_B}
\end{align}
\end{small}
where 
\begin{small}
\begin{align}
\mathbf{G} = \frac{1}{c\binom{m}{2}} \sum^{c}_{j=1} \sum_{1 \le s < s^{\prime} \le m} \bm{\Phi} (\mu^{s}_{j} - \mu^{s^{\prime}}_{j} ) (\mu^{s}_{j} - \mu^{s^{\prime}}_{j} )^{T} \bm{\Phi}^{T}. \label{eq:M}
\end{align}
\end{small}

Similarly, after applying the transformation $\mathbf{W}$, average class discrepancy $\Psi^{\textit{acd}}(\mathcal{P})$ \eqref{eq:acd_def}, multidomain between-class scatter $\Psi^{\textit{mbs}}(\mathcal{D})$ \eqref{eq:mbs_def}, and multidomain within-class scatter $\Psi^{\textit{mws}}(\mathcal{D})$ \eqref{eq:mws_def} are given by:
\begin{small}
\begin{align}
& \Psi^{\textit{acd}}_{\mathbf{B}} = \tr \left( \mathbf{B}^{T} \mathbf{FB} \right), \Psi^{\textit{mbs}}_{\mathbf{B}} = \tr \left( \mathbf{B}^{T} \mathbf{PB} \right), \nonumber \\
& \Psi^{\textit{mws}}_{\mathbf{B}} = \tr \left( \mathbf{B}^{T} \mathbf{QB} \right),
\label{eq:acd_B}
\end{align}
\end{small}
where
\begin{small}
\begin{align}
&\mathbf{F} = \frac{1}{\binom{c}{2}} \sum_{1 \le j < j^{\prime} \le c} \bm{\Phi} (u_{j} - u_{j^{\prime}}) ( u_{j} - u_{j^{\prime}})^{T} \bm{\Phi}^{T}, \label{eq:F} \\
& \mathbf{P} = \frac{1}{n} \sum^{c}_{j=1} n_{j} \bm{\Phi} ( u_{j} - \bar{ u } ) ( u_{j} - \bar{ u } )^{T} \bm{\Phi}^{T}, \label{eq:P} \\
& \mathbf{Q} = \frac{1}{n} \sum^{c}_{j=1} \sum^{m}_{s=1} \sum^{n^{s}_{j}}_{i=1} \bm{\Phi} ( \phi(\bm{x}^{s}_{i \in j}) - u_{j}) ( \phi(\bm{x}^{s}_{i \in j}) - u_{j} )^{T} \bm{\Phi}^{T}. \label{eq:Q}
\end{align}
\end{small}

\subsection{EMPIRICAL ESTIMATION}

In practice, one exploits a finite number of instances from $m$ source domains to estimate the transformed measures in $\mathbb{R}^{q}$. Since all measures depend on $\mu^{s}_{j}$ and $u_{j}$, the estimation of measures reduces to the estimation of $\mu^{s}_{j}$ and $u_{j}$ ($s = 1,\dots,m, j = 1, \dots, c$) using the source data. Let $\bm{x}^{s}_{i \in j}$ denote the feature vector of $i$th instance of class $j$ in domain $s$ and $n^{s}_{j}$ denote the total number of instances of class $j$ in domain $s$, each $\mu^{s}_{j}$ can be empirically estimated as
\begin{align}
\hat{ \mu }^{s}_{j} & = \frac{1}{n^{s}_{j}} \sum^{n^{s}_{j}}_{i=1} \phi(\bm{x}^{s}_{i \in j}). \label{eq:emp_mu_s_j}
\end{align}

The empirical estimation of $u_{j}$ requires $\mathbb{P}(S=s|Y=j)$, which can be estimated using Bayes rule as $\mathbb{P}(S=s|Y=j) = \frac{\text{Pr}(Y=j|S=s) \text{Pr}(S=s) }{\text{Pr}(Y=j)}$. Since it is usually hard to model the underlying distribution over $\mathfrak{P}_{\mathcal{X} \times \mathcal{Y}}$, we assume that the probabilities of sampling all source domains are equal, i.e., $\text{Pr}(S=s) = \frac{1}{m}$ for $s=1,\dots, m$ given $\mathcal{D}$. As a result, $\mathbb{P}(S=s|Y=j) = \frac{n^{s}_{j} / n^{s}}{ \sum^{m}_{s'=1}(n^{s'}_{j}/ n^{s'}) }$. Then the empirical estimation of the mean representation of class $j$ in RKHS $\mathcal{H}$ is given by
\begin{align}
\hat{u}_{j} & = \sum^{ m }_{s=1} \frac{n^{s}_{j} / n^{s}}{ \sum^{m}_{s'=1}(n^{s'}_{j}/ n^{s'}) } \hat{ \mu }^{s}_{j}. \label{eq:emp_u_j}
\end{align}

By substituting the empirical class-conditional kernel mean embedding \eqref{eq:emp_mu_s_j} and empirical mean representation of each class \eqref{eq:emp_u_j} into formulation \eqref{eq:M}, \eqref{eq:F}, \eqref{eq:P}, and \eqref{eq:Q}, these matrices can be estimated from $m$ observed sample sets using the \emph{kernel trick} \citep{Theodoridis:2008:PRF:1457541}.

\subsection{THE OPTIMIZATION PROBLEM}
Following the solution in \cite{ghifary2017scatter} and in the spirit of Fisher's discriminant analysis \citep{mika1999fisher}, we unify measures introduced in previous sections and solve the matrix $\mathbf{B}$ as
\begin{align}
\argmax_{\mathbf{B}} \frac{ \Psi^{\textit{acd}}_{\mathbf{B}}+ \Psi^{\textit{mbs}}_{\mathbf{B}} }{ \Psi^{\textit{add}}_{\mathbf{B}} + \Psi^{\textit{mws}}_{\mathbf{B}} }.
\label{eq:obj_ori}
\end{align}
It can be seen that through maximizing the numerator, the objective \eqref{eq:obj_ori} preserves the separability among different classes. Through minimizing the denominator, \eqref{eq:obj_ori} tries to find a domain-invariant transformation which improves the overall compactness of distributions of all classes and make the class-conditional distributions of the same class as close as possible.

By substituting the transformed average domain discrepancy \eqref{eq:add_B}, average class discrepancy, multidomain between-class scatter, and multidomain within-class scatter \eqref{eq:acd_B}, adding $\mathbf{W}^{T}\mathbf{W} = \mathbf{B}^{T}\mathbf{KB}$ for regularization, and introducing a trade-off between the measures for further flexibility into the objective \eqref{eq:obj_ori}, we aim to achieve
\begin{align}
\argmax_{\mathbf{B}} = \frac{ \tr \left( \mathbf{B}^{T} \left( \beta \mathbf{F} + (1 - \beta) \mathbf{P} \right) \mathbf{B} \right) }{ \tr \left( \mathbf{B}^{T}( \gamma \mathbf{G} + \alpha \mathbf{Q} + \mathbf{K})\mathbf{B} \right) },
\label{eq:obj_params}
\end{align}
where $\alpha$, $\beta$, and $\gamma$ are trade-off parameters controlling the significance of corresponding measures. Since the objective \eqref{eq:obj_params} is invariant to re-scaling of $\mathbf{B}$, rewriting \eqref{eq:obj_params} as a constrained optimization problem and setting the derivative of its Lagrangian to zero (see Appendix B) yields the following generalized eigenvalue problem:
\begin{align}
\left( \beta \mathbf{F} + (1 - \beta) \mathbf{P} \right) \mathbf{B} = \left( \gamma \mathbf{G} + \alpha \mathbf{Q} + \mathbf{K} \right) \mathbf{B} \bm{\Gamma},
\label{eq:obj_last}
\end{align}
where $\bm{\Gamma}=\text{diag}(\lambda_{1}, \dots, \lambda_{q})$ is the diagonal matrix collecting $q$ leading eigenvalues, $\mathbf{B}$ is the matrix collecting corresponding eigenvectors.\footnote{In practice, $\gamma \mathbf{G} + \alpha \mathbf{Q} + \mathbf{K}$ is replaced by $\gamma \mathbf{G} + \alpha \mathbf{Q} + \mathbf{K} + \epsilon \mathbf{I}$ for numerical stability, where $\epsilon$ is a small constant and set to be $1\mathrm{e}{-5}$ for kernel-based DG methods in all experiments.} 

Computing the matrices $\mathbf{G}$, $\mathbf{F}$, $\mathbf{P}$, and $\mathbf{Q}$ takes $\mathcal{O}(n^2)$. Solving the generalized eigenvalue problem \eqref{eq:obj_last} takes $\mathcal{O}(qn^2)$. In sum, the overall computational complexity is $\mathcal{O}(n^2+qn^2)$, which is the same as existing kernel-based methods. After the transformation learning, unseen target instances can then be transformed into $\mathbb{R}^{q}$ using $\mathbf{B}$ and $\bm{\Gamma}$. 
We term the proposed method Multidomain Discriminant Analysis (MDA) and summarize the algorithm in Algorithm \ref{alg:mda}.

\LinesNotNumbered
\begin{algorithm}[ht]                
	\SetKwData{Left}{left}
	\SetKwData{This}{this}
	\SetKwData{Up}{up}
	\SetKwFunction{Union}{Union}
	\SetKwFunction{FindCompress}{FindCompress}
	\SetKwInOut{Input}{input}
	\SetKwInOut{Output}{output}
	\Input{$\mathcal{D} = \{ \mathcal{D}^{s} \}_{s=1}^{m}$ - the set of instances from $m$ domains;\\ $\alpha$, $\beta$, $\gamma$ - trade-off parameters.}
	\Output{Optimal projection $\mathbf{B}_{n \times q}$; \\ corresponding eigenvalues $\bm{\Gamma}$.}
	\BlankLine
	
	Construct kernel matrix $\mathbf{K}$ from $\mathcal{D}$, whose entry on $i$th row and $i^{\prime}$th column $[ \mathbf{K} ]_{ii^{\prime}} = k( \bm{x}_{i}, \bm{x}_{i^{\prime}} )$\;
	Compute matrices $\mathbf{G}$, $\mathbf{F}$, $\mathbf{P}$, $\mathbf{Q}$ from \eqref{eq:M}, \eqref{eq:F}, \eqref{eq:P}, \eqref{eq:Q}, respectively\;
	Center the kernel matrix as $\mathbf{K} \leftarrow \mathbf{K} - \mathbf{1}_{n} \mathbf{K} - \mathbf{K} \mathbf{1}_{n}$ $+ \mathbf{1}_{n} \mathbf{K} \mathbf{1}_{n}$, where $\mathbf{1}_{n}\in \mathbb{R}^{n\times n}$ denotes a matrix with all entries equal to $\frac{1}{n}$; \\
	Solve \eqref{eq:obj_last} for the projection $\mathbf{B}$ and corresponding eigenvalues $\bm{\Gamma}$, then select $q$ leading components.\
	\textbf{Target domain transformation}\ \\
	Denote the set of instances from the target domain by $\mathcal{D}^{t}$, one first constructs the kernel matrix $\mathbf{K}^{t}$, where $[ \mathbf{K}^{t} ]_{i^{\prime}i}=k( \bm{x}^{t}_{i'}, \bm{x}_{i} )$, $\forall \bm{x}^{t}_{i'} \in \mathcal{D}^{t}$, $\forall \bm{x}_{i} \in \mathcal{D}$; \\
	Center the kernel matrix as $\mathbf{K}^{t} \leftarrow \mathbf{K}^{t} - \mathbf{1}_{n^{t}} \mathbf{K}^{t} - $ $ \mathbf{K}^{t} \mathbf{1}_{n} + \mathbf{1}_{n^{t}} \mathbf{K}^{t} \mathbf{1}_{n}$, where $n^{t}$ is the number of instances in $\mathcal{D}^{t}$; \\
	Then the transformed features of the target domain are given by $\mathbf{X}^{t} = \mathbf{K}^{t} \mathbf{B} \bm{ \Gamma }^{-\frac{1}{2}} $.
	\caption{Multidomain discriminant analysis}\label{alg:mda}
\end{algorithm}

\section{LEARNING THEORY ANALYSIS}
We analyze the the excess risk and generalization error bound after applying feature transformation $\mathbf{W}$.

In standard setting of learning theory analysis, the decision functions of interest are $f: \mathcal{X} \mapsto \mathcal{Y}$. However, our DG problem setting is much more general in the sense that not only $\mathbb{P}(X)$ changes (as in the covariate shift setting), but $\mathbb{P}(Y|X)$, which corresponds to $f$ in learning theory, also changes across domains. As a result, the decision functions of interest in our analysis are $f: \mathfrak{P}_{\mathcal{X}} \times \mathcal{X} \mapsto \mathcal{Y}$. $\mathbb{P}^{s}$ and $\mathbb{P}^{s}_{X}$ are used interchangeably to denote the marginal distribution of $X$ in domain $s$.

Let $\bar{k}$ be a kernel on $\mathfrak{P}_{\mathcal{X}} \times \mathcal{X}$ and $\mathcal{H}_{\bar{k}}$ be the associated RKHS. As in \cite{blanchard2011generalizing}, we consider kernel $\bar{k} = k_{\mathbb{P}} (\mathbb{P}^{1}, \mathbb{P}^{2}) k_{x}(x_{1}, x_{2})$, where $k_{\mathbb{P}}$ and $k_{x}$ are kernels on $\mathfrak{P}_{\mathcal{X}}$ and $\mathcal{X}$, respectively. To ensure that $\bar{k}$ is universal, we consider a particular form for $k_{\mathbb{P}}$. Let $k^{\prime}_{x}$ be another kernel on $\mathcal{X}$ and $\mathcal{H}_{ k^{\prime}_{x} }$ be its associated RKHS, $\gamma$ be a mapping $\gamma: \mathfrak{P}_{\mathcal{X}} \mapsto \mathcal{H}_{k^{\prime}_{x}}$. Then $k_{\mathbb{P}}$ defined as a kernel $k_{\gamma}$ on $\mathcal{H}_{ k^{\prime}_{x} }$, i.e. $k_{\mathbb{P}} (\mathbb{P}^{1}, \mathbb{P}^{2}) = k_{\gamma}( \gamma( \mathbb{P}^{1} ), \gamma( \mathbb{P}^{2} ) )$ would lead $\bar{k}$ to be universal \citep{blanchard2011generalizing}. We consider following assumptions regarding the kernels and loss function in our analysis:
\begin{assp}\label{assp:k_1}
The kernels $k_{x}$, $k^{\prime}_{x}$ and $k_{\gamma}$ are bounded respectively by $U^{2}_{ k_{x} }$, $U^{2}_{ k^{\prime}_{x} }$ and $U^{2}_{k_{\gamma}}$.
\end{assp}
\begin{assp}\label{assp:k_2}
The canonical feature map $\gamma_{ k_{\gamma} }: \mathcal{H}_{ k^{\prime}_{x} } \mapsto \mathcal{H}_{ k_{\gamma} }$, where $\mathcal{H}_{ k_{\gamma} }$ is the RKHS associated with $k_{\gamma}$, fulfills that $\forall v, w \in \mathcal{H}_{ k^{\prime}_{X} }$, there is a constant $L_{ k_{\gamma} }$ satisfying
$$ \Vert \gamma_{ k_{\gamma} }(v) - \gamma_{ k_{\gamma} }(w) \Vert \le L_{ k_{\gamma} } \Vert v - w \Vert. $$
\end{assp}
\begin{assp}\label{assp:loss}
The loss function $\ell: \mathbb{R} \times \mathcal{Y} \mapsto \mathbb{R}_{+}$ is $L_{\ell}$-Lipschitz in its first variable and bounded by $U_{\ell}$.
\end{assp}

Assumption 2 and 3 are satisfied when the kernels are bounded. An example of widely adopted bounded kernel is the Gaussian kernel. As a result, we also adopt Gaussian kernel throughout our algorithm. 

Let $\tilde{X}^{t} = ( \mathbb{P}^{t}_{X}, X^{t} )$ and $Y^{t}$ denote the extended input and output pattern of decision function $f$ over target domain, respectively. The quantity of interest is the excess risk, which is the difference between expected test loss of empirical loss minimizer and expected loss minimizer. For functions in the unit ball centered at the origin in the RKHS of $\phi(\tilde{X}^{t})$, the control of the excess risk is given in the following theorem.

\begin{thrm}
Under assumptions 2 -- 4, and further assuming that $\Vert \hat{f} \Vert_{\mathcal{H}_{ \bar{k} } } \le 1$ and $\Vert f^{*} \Vert_{\mathcal{H}_{ \bar{k} } } \le 1$, where $\hat{f}$ denotes the empirical risk minimizer and $f^{*}$ denotes the expected risk minimizer, then with probability at least $1 - \delta$ there is
\begin{align}
& \mathbb{E}[ \ell( \hat{f}(\tilde{X^{t}} \mathbf{W}), Y^{t} ) ] - \mathbb{E}[ \ell ( f^{*}(\tilde{X^{t}} \mathbf{W}), Y^{t} ) ] \nonumber \\
\le & \ 4 L_{\ell} L_{ k_{\gamma} } U_{k^{\prime}_{x}} U_{k_{x}} \sqrt{ \frac{ \tr( \mathbf{B}^{T} \mathbf{KB}) }{ n } } + \sqrt{ \frac{ 2 \log 2 \delta^{-1} } { n } },
\label{eq:bound_1}
\end{align}
where the expectations are taken over the joint distribution of the test domain $\mathbb{P}^{t}(X^{t}, Y^{t})$, $n$ is the number of training samples, and $\mathbf{K} = \bm{\Phi} \bm{\Phi}^{T}$.
\end{thrm}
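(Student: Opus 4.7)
The plan is to follow the standard route for excess risk bounds based on uniform deviation, symmetrization, and Rademacher complexity of the RKHS unit ball, and then to specialize the generic bound using the product kernel $\bar{k}$ and the structure of the learned transformation $\mathbf{W}$.

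First, I would reduce the excess risk to a uniform deviation. Since $\hat{f}$ minimizes the empirical risk over the unit ball in $\mathcal{H}_{\bar{k}}$ and $f^{*}$ is feasible for that constraint, a standard add-and-subtract manipulation yields
\[
\mathbb{E}[\ell(\hat{f}(\tilde{X}^{t}\mathbf{W}), Y^{t})] - \mathbb{E}[\ell(f^{*}(\tilde{X}^{t}\mathbf{W}), Y^{t})] \le 2\sup_{\|f\|_{\mathcal{H}_{\bar{k}}}\le 1} \bigl| \mathbb{E}[\ell(f)] - \hat{\mathbb{E}}[\ell(f)] \bigr|.
\]
Since $\ell$ is bounded by $U_{\ell}$ (Assumption~4), McDiarmid's inequality applied to this supremum, combined with the usual Rademacher symmetrization, shows that with probability at least $1-\delta$ the supremum is controlled by $2\,\mathbb{E}\mathcal{R}_{n}(\ell\circ\mathcal{F}) + \sqrt{2\log(2/\delta)/n}$, where $\mathcal{F}$ denotes the unit ball in $\mathcal{H}_{\bar{k}}$ and $\mathcal{R}_{n}$ the Rademacher complexity.

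Second, I would strip the loss via Talagrand's contraction principle, which uses the $L_{\ell}$-Lipschitz property of $\ell$ to give $\mathcal{R}_{n}(\ell\circ\mathcal{F}) \le L_{\ell}\,\mathcal{R}_{n}(\mathcal{F})$. For the unit ball $\mathcal{F}$ in $\mathcal{H}_{\bar{k}}$ evaluated at transformed samples $\tilde{x}_{i}\mathbf{W}$, Cauchy--Schwarz and Jensen's inequality yield
\[
\mathcal{R}_{n}(\mathcal{F}) \le \frac{1}{n}\sqrt{\sum_{i=1}^{n}\bar{k}(\tilde{x}_{i}\mathbf{W}, \tilde{x}_{i}\mathbf{W})}.
\]

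Third, I would exploit the product structure $\bar{k}(\tilde{x},\tilde{x}') = k_{\gamma}(\gamma(\mathbb{P}),\gamma(\mathbb{P}'))\,k_{x}(x,x')$ together with Assumptions~2 and~3. The Lipschitz bound on $\gamma_{k_{\gamma}}$, applied with reference point $0\in\mathcal{H}_{k'_{x}}$, controls $k_{\gamma}(\gamma(\mathbb{P}),\gamma(\mathbb{P})) = \|\gamma_{k_{\gamma}}(\gamma(\mathbb{P}))\|^{2}$ by $L_{k_{\gamma}}^{2}\|\gamma(\mathbb{P})\|_{\mathcal{H}_{k'_{x}}}^{2}$, and the latter is in turn bounded by $U_{k'_{x}}^{2}$ since $\gamma(\mathbb{P})$ is a kernel mean embedding in an RKHS whose kernel is bounded by $U_{k'_{x}}^{2}$. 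Similarly, Assumption~2 gives $k_{x}(x\mathbf{W}, x\mathbf{W}) \le U_{k_{x}}^{2}$ pointwise. These two estimates produce the prefactor $L_{k_{\gamma}} U_{k'_{x}} U_{k_{x}}$.

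Finally, I would relate the remaining sample-averaged term to $\tr(\mathbf{B}^{T}\mathbf{K}\mathbf{B})$ through the representation $\mathbf{W} = \bm{\Phi}^{T}\mathbf{B}$ and the identity $\|\mathbf{W}\|_{\mathrm{HS}}^{2} = \tr(\mathbf{B}^{T}\bm{\Phi}\bm{\Phi}^{T}\mathbf{B}) = \tr(\mathbf{B}^{T}\mathbf{K}\mathbf{B})$, by tracking how $\mathbf{W}$ acts on the canonical feature map of $\bar{k}$ at each training point. This last step is the main obstacle: one has to verify that the sample-level propagation of $\mathbf{W}$ through both kernel factors is compatible with the Hilbert--Schmidt norm of $\mathbf{W}$, rather than only a dimension-free pointwise bound, so that $n$ pointwise kernel values collapse into $\tr(\mathbf{B}^{T}\mathbf{K}\mathbf{B})$ under the square root. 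The concentration and contraction steps are routine, and combined with the factor $2$ from the uniform deviation reduction and the factor $2$ from symmetrization, they account for the constant $4$ that appears in front of the Rademacher term in \eqref{eq:bound_1}.
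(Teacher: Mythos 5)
Your proposal is correct and follows essentially the same route as the paper's proof: a uniform-deviation/Rademacher bound with constant $4$ (which the paper invokes as a packaged theorem and you re-derive via McDiarmid plus symmetrization), Lipschitz contraction to strip the loss, a bound on the kernel-class Rademacher complexity via the product structure of $\bar{k}$, the Lipschitz feature map $\gamma_{k_{\gamma}}$, and the bounded kernels, and finally the identification $\Vert \mathbf{W} \Vert_{HS}^{2} = \tr(\mathbf{B}^{T}\mathbf{K}\mathbf{B})$ from $\mathbf{W} = \bm{\Phi}^{T}\mathbf{B}$. The final step you flag as the main obstacle is handled in the paper exactly as you sketch it, by bounding $\Vert \bar{k}(\tilde{X}^{t}\mathbf{W}, \cdot)\Vert \le L_{k_{\gamma}} U_{k^{\prime}_{x}} U_{k_{x}} \Vert \mathbf{W} \Vert_{HS}$ pointwise, so no additional argument is needed beyond what you describe.
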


See Appendix C for proof. The first term in the bound above involves the size of the distortion $\tr(\mathbf{B}^{T} \mathbf{KB})$ introduced by $\mathbf{B}$. Therefore, a poor choice of $\mathbf{B}$ would loose the guarantee. The second term is of order $O(n^{-1/2})$ so it would converge to zero as $n$ tends to infinity given $\delta$.

Another quantity of interest is the generalization error bound, which is the difference between the expected test loss and empirical training loss of the empirical loss minimizer. The generalization error bound of DG in a general setting is given in \cite{blanchard2011generalizing}. Therefore, we derive it for the case where one applies feature transformation involving $\mathbf{B}$. Let $\hat{ \tilde{X} }^{s}_{i}$ denote the input pattern $(\hat{ \mathbb{P} }^{s}, x^{s}_{i})$, where $\hat{ \mathbb{P} }^{s}$ is the empirical distribution over features in domain $s$, and $x^{s}_{i}$ is the $i$th observed feature in domain $s$. Similarly, $y^{s}_{i}$ is the $i$th label in domain $s$. With $\mathcal{E}(f, \infty)$ being the expected test loss, the generalization bound involving $\mathbf{B}$ is given in the following theorem.

\begin{table*}[t]
	\centering
	\caption{Generating Distributions of Synthetic Data}
	\label{tb:syn_data}
	\resizebox{\linewidth}{!}{%
	\begin{tabular}{cccccccccc}
		\toprule
        Domain & \multicolumn{3}{c}{Domain 1} & \multicolumn{3}{c}{Domain 2} & \multicolumn{3}{c}{Domain 3} \\
        \cmidrule(r){2-4} \cmidrule(r){5-7} \cmidrule(r){8-10}
		Class & 1 & 2 & 3 & 1 & 2 & 3 & 1 & 2 & 3 \\
		\midrule
		$X_{1}$ & (1, 0.3) & (2, 0.3) & (3, 0.3) & (3.5, 0.3) & (4.5, 0.3) & (5.5, 0.3) & (8, 0.3) & (9.5, 0.3) & (10, 0.3) \\
		$X_{2}$ & (2, 0.3) & (1, 0.3) & (2, 0.3) & (2.5, 0.3) & (1.5, 0.3) & (2.5, 0.3) & (2.5, 0.3) & (1.5, 0.3) & (2.5, 0.3) \\
		\# instances & 50 & 50 & 50 & 50 & 50 & 50 & 50 & 50 & 50 \\
		\bottomrule
	\end{tabular}}
\end{table*}

\begin{thrm}
Under assumptions 2 -- 4, and assuming that all source sample sets are of the same size, i.e. $n^{s} = \bar{n}$ for $s=1, \dots, m$, then with probability at least $1 - \delta$ there is
\begin{flalign}
    & \sup_{ \Vert f \Vert_{ \mathcal{H}_{ \bar{k} } } \le 1 } \left\vert \frac{1}{m} \sum^{m}_{s=1} \frac{1}{n^{s}} \sum^{n^{s}}_{i=1} \ell \left( f( \hat{\tilde{X}}^{s}_{i} \mathbf{W} ), y^{s}_{i} \right) - \mathcal{E}(f, \infty) \right\vert \nonumber \\
    \le & U_{\ell} \left( \sqrt{ \frac{ \log 2 \delta^{-1} }{ 2m \bar{n} } } + \sqrt{ \frac{ \log \delta^{-1} }{ 2m } } \right) + \sqrt{ \tr(\mathbf{B}^{T} \mathbf{KB} ) } \nonumber \\
    & \ \left( c_{1} \sqrt{ \frac{\log 2 \delta^{-1} m }{ \bar{n}} } + c_{2} \left( \sqrt{ \frac{1}{m \bar{n}} } + \sqrt{ \frac{1}{m} } \right) \right), \end{flalign} 
where $c_{1} = 2\sqrt{2} L_{\ell} U_{k_{x}} L_{k_{\gamma}} U_{k^{\prime}_{x}}$, $c_{2} = 2 L_{\ell} U_{k_{x}} U_{k_{\gamma}}$.
\end{thrm}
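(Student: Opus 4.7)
The plan is to follow the generalization-bound strategy of Blanchard et al.\ (2011) but to track carefully how the feature transformation $\mathbf{W} = \bm{\Phi}^T \mathbf{B}$ enters the kernel $\bar{k}$. I would introduce three intermediate risks and bound the supremum by a triangle inequality. Writing
\begin{align*}
\hat{\mathcal{R}}(f) & = \tfrac{1}{m}\sum_{s=1}^m \tfrac{1}{n^s}\sum_{i=1}^{n^s} \ell\!\left(f(\hat{\tilde{X}}^s_i \mathbf{W}), y^s_i\right), \\
\tilde{\mathcal{R}}(f) & = \tfrac{1}{m}\sum_{s=1}^m \tfrac{1}{n^s}\sum_{i=1}^{n^s} \ell\!\left(f(\tilde{X}^s_i \mathbf{W}), y^s_i\right), \\
\mathcal{R}_m(f) & = \tfrac{1}{m}\sum_{s=1}^m \mathbb{E}_{(X^s,Y^s)\sim\mathbb{P}^s}\!\left[\ell\!\left(f(\tilde{X}^s \mathbf{W}), Y^s\right)\right],
\end{align*}
the left-hand side of the theorem equals $\sup_f |\hat{\mathcal{R}}(f) - \mathcal{E}(f, \infty)|$, which is at most $A_1 + A_2 + A_3$ with $A_1 = \sup_f|\hat{\mathcal{R}}(f)-\tilde{\mathcal{R}}(f)|$, $A_2 = \sup_f|\tilde{\mathcal{R}}(f)-\mathcal{R}_m(f)|$, and $A_3 = \sup_f|\mathcal{R}_m(f) - \mathcal{E}(f,\infty)|$. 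I would bound each on an event of probability at least $1-\delta/3$ and then apply a union bound, absorbing the split of $\delta$ into the stated constants.

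For $A_3$, the $m$ domains are i.i.d.\ draws from $\mathfrak{P}_{\mathcal{X}\times\mathcal{Y}}$, so Hoeffding on the bounded losses gives a pointwise $U_\ell\sqrt{\log\delta^{-1}/(2m)}$ deviation and a symmetrization plus Rademacher argument on the unit ball of $\mathcal{H}_{\bar{k}}$ adds a data-dependent term. The Rademacher step, via Talagrand's contraction lemma (Assumption 4), reduces the loss to the base function class; Assumption 2 bounds $k_\gamma \le U_{k_\gamma}^2$, and expanding $k_x(x\mathbf{W},x\mathbf{W})$ through $\mathbf{W}=\bm{\Phi}^T\mathbf{B}$ yields a factor $\sqrt{\tr(\mathbf{B}^T\mathbf{K}\mathbf{B})}$. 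For $A_2$, conditioning on the $m$ sampled domains, within each domain the $n^s=\bar n$ samples are i.i.d.; Hoeffding over the $m\bar n$ total samples together with the same Rademacher analysis (now at scale $(m\bar n)^{-1/2}$) gives the $U_\ell\sqrt{\log\delta^{-1}/(2m\bar n)}$ and the $c_2/\sqrt{m\bar n}$ pieces of the stated bound.

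The delicate term is $A_1$, which measures the effect of feeding $\hat{\mathbb{P}}^s$ rather than $\mathbb{P}^s$ into the first argument of $f$. By the $L_\ell$-Lipschitz property of $\ell$ and the reproducing property on the unit ball of $\mathcal{H}_{\bar{k}}$,
\begin{align*}
|f(\hat{\tilde{X}}^s_i\mathbf{W}) - f(\tilde{X}^s_i\mathbf{W})|^2 \le \bar{k}(\hat{\tilde{X}}^s_i\mathbf{W},\hat{\tilde{X}}^s_i\mathbf{W}) + \bar{k}(\tilde{X}^s_i\mathbf{W},\tilde{X}^s_i\mathbf{W}) - 2\bar{k}(\hat{\tilde{X}}^s_i\mathbf{W},\tilde{X}^s_i\mathbf{W}).
\end{align*}
The product structure $\bar{k}(z_1,z_2)=k_\gamma(\gamma(\mathbb{P}_1),\gamma(\mathbb{P}_2))\,k_x(x_1\mathbf{W},x_2\mathbf{W})$, Assumption 3 on $\gamma_{k_\gamma}$, and the boundedness in Assumption 2 control this difference by $L_{k_\gamma}U_{k_x}\,\Vert\gamma(\mathbb{P}^s)-\gamma(\hat{\mathbb{P}}^s)\Vert_{\mathcal{H}_{k'_x}}$ multiplied by the norm of the transformed feature, which again averages to $\sqrt{\tr(\mathbf{B}^T\mathbf{K}\mathbf{B})}$. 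A McDiarmid concentration of the empirical kernel mean embedding (bounded differences $2U_{k'_x}/\bar n$) gives $\Vert\gamma(\mathbb{P}^s)-\gamma(\hat{\mathbb{P}}^s)\Vert = O\!\left(U_{k'_x}\sqrt{\log(m\delta^{-1})/\bar n}\right)$ uniformly in $s$ after a union bound over domains, producing the $c_1\sqrt{\log(2m\delta^{-1})/\bar n}\cdot\sqrt{\tr(\mathbf{B}^T\mathbf{K}\mathbf{B})}$ contribution.

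The principal obstacle is the coupling inside $A_1$: $\hat{\mathbb{P}}^s$ appears simultaneously inside $\gamma$ and as the sampling law of $\hat{\tilde{X}}^s_i$, which prevents a direct symmetrization. My strategy is to first restrict to the high-probability event on which all $m$ discrepancies $\Vert\gamma(\mathbb{P}^s)-\gamma(\hat{\mathbb{P}}^s)\Vert$ are uniformly small; on that event the Lipschitz bound above is applied pointwise in $f$, so the randomness driving the concentration of empirical distributions is decoupled from the supremum over the unit ball, and the Rademacher machinery applies cleanly to what remains. Combining $A_1, A_2, A_3$ through the union bound and collecting constants $c_1=2\sqrt{2}L_\ell U_{k_x}L_{k_\gamma}U_{k'_x}$ and $c_2=2L_\ell U_{k_x}U_{k_\gamma}$ then yields the stated inequality.
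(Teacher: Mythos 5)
Your proposal is correct and follows essentially the same route as the paper's proof: the three terms $A_1$, $A_2$, $A_3$ coincide with the paper's (I), (IIa), (IIb), and you bound them with the same ingredients (Lipschitz loss plus the product-kernel/reproducing-property argument and Hilbert-space concentration of the empirical embeddings with a union bound over domains for $A_1$; McDiarmid plus symmetrization/Rademacher at scales $(m\bar n)^{-1/2}$ and $m^{-1/2}$ for $A_2$ and $A_3$), with $\sqrt{\tr(\mathbf{B}^{T}\mathbf{K}\mathbf{B})}$ entering through $\Vert \mathbf{W}\Vert_{HS}$ exactly as in the paper. The only cosmetic difference is your explicit $\delta/3$ split, which the paper handles with the same (loose) union-bound bookkeeping, so no substantive gap remains.
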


\begin{figure*}[t]
	\begin{subfigure}{.19\linewidth}
		\centering
		\includegraphics[width=\linewidth]{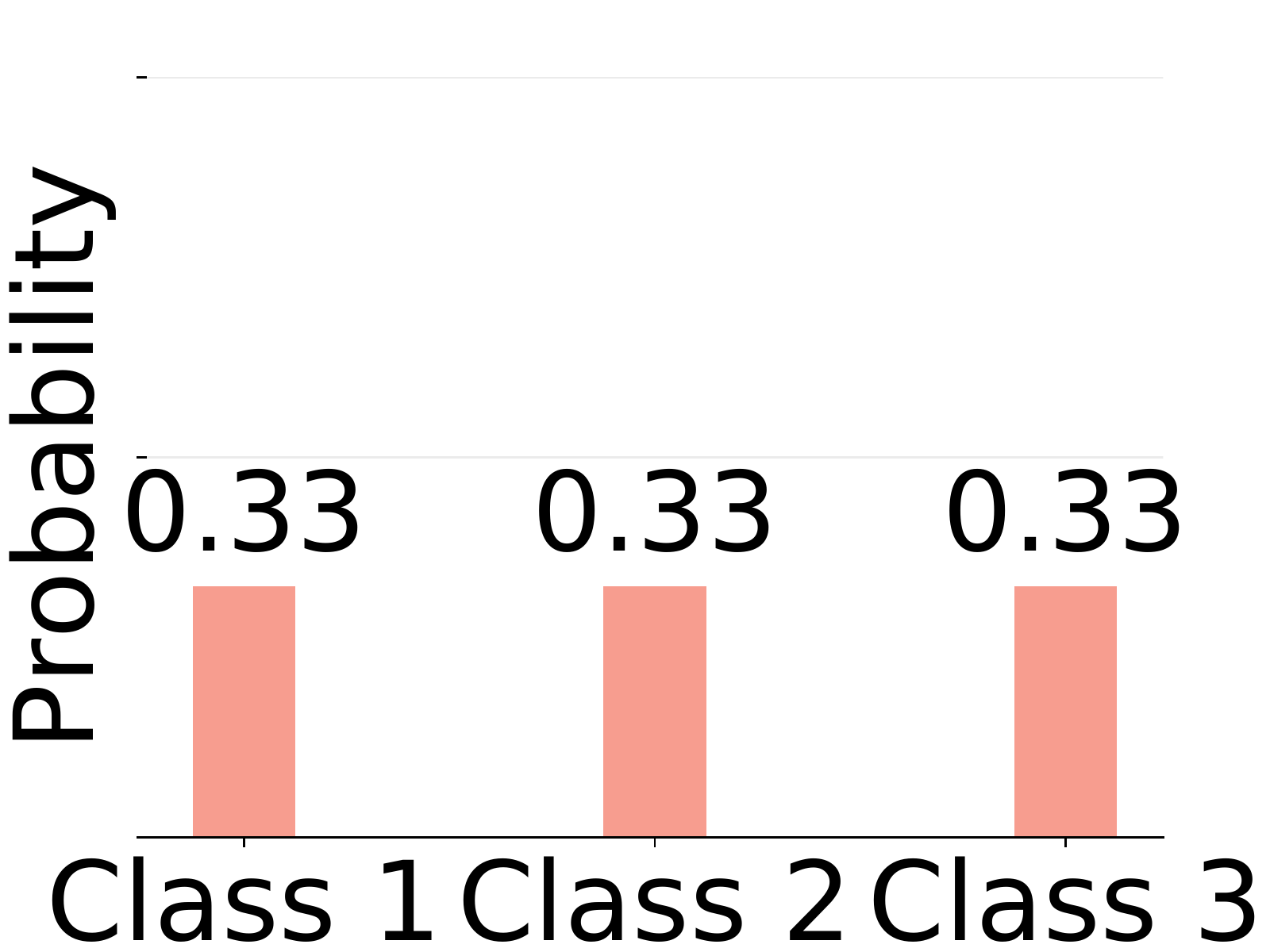}
		\caption{}
		\label{Fig:syn_0}
	\end{subfigure}
	\begin{subfigure}{.19\linewidth}
		\centering
		\includegraphics[width=\linewidth]{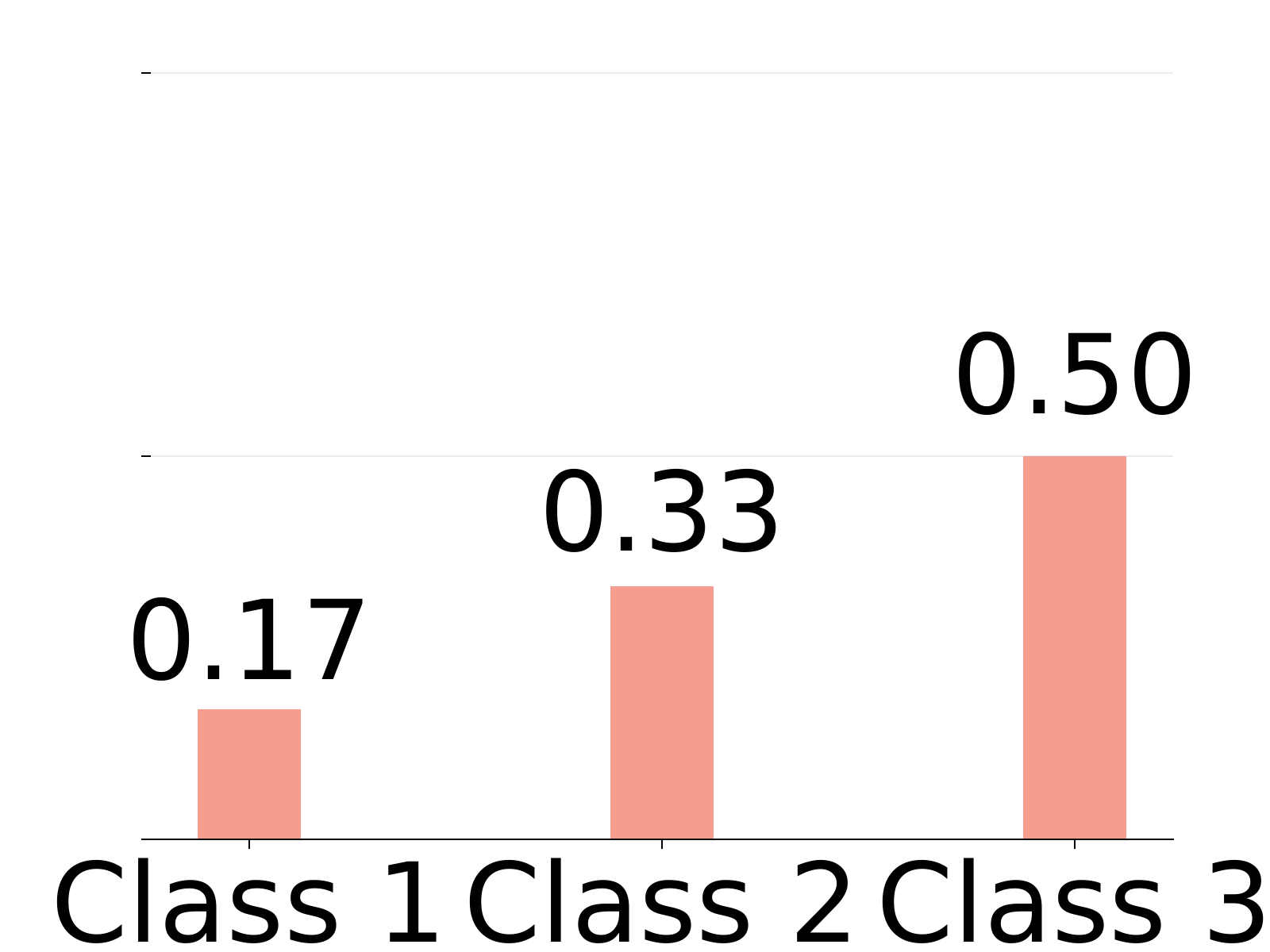}
		\caption{}
		\label{Fig:syn_1}
	\end{subfigure}
	\begin{subfigure}{.19\linewidth}
		\centering
		\includegraphics[width=\linewidth]{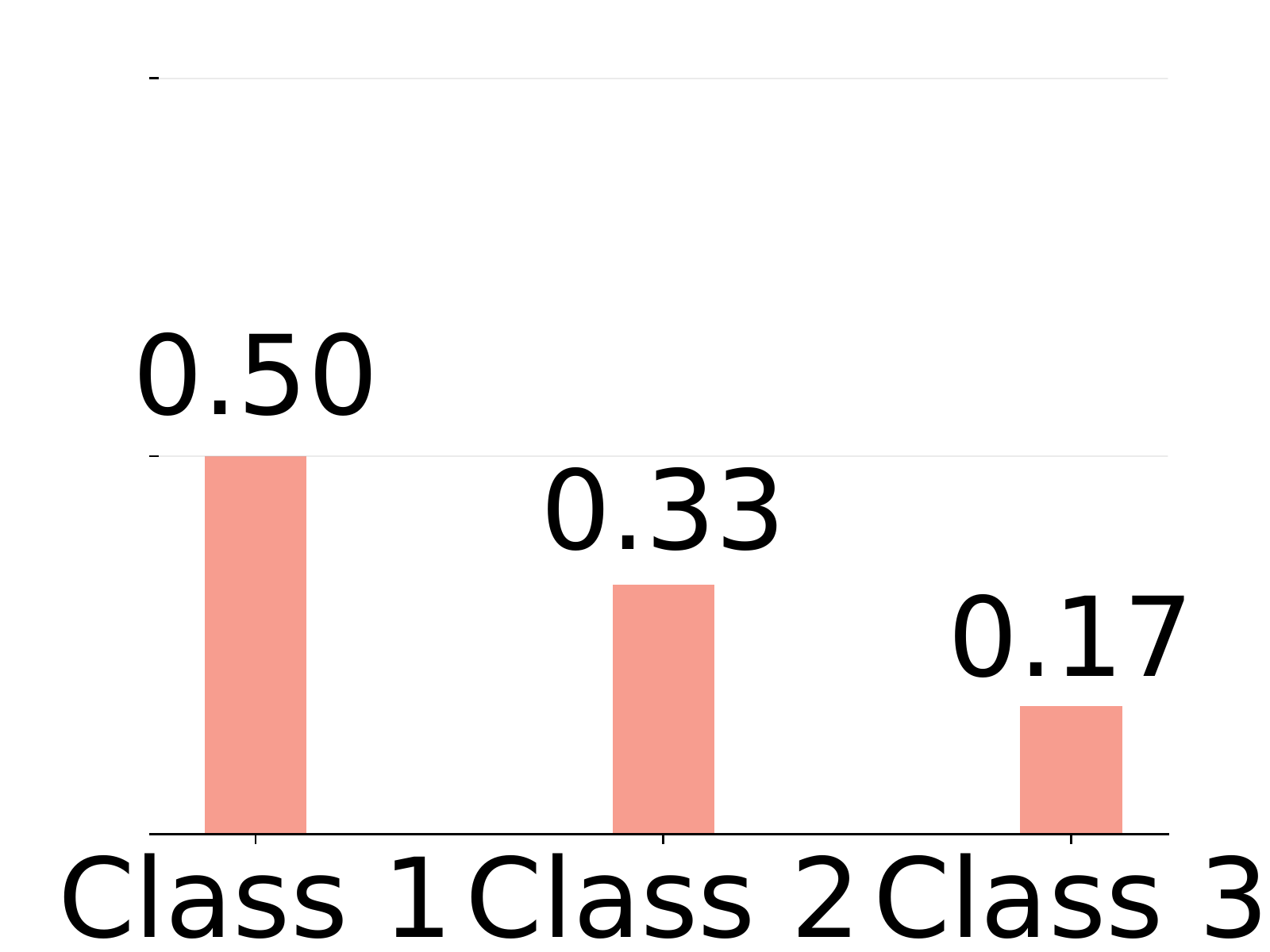}
		\caption{}
		\label{Fig:syn_2}
	\end{subfigure}
	\begin{subfigure}{.19\linewidth}
		\centering
		\includegraphics[width=\linewidth]{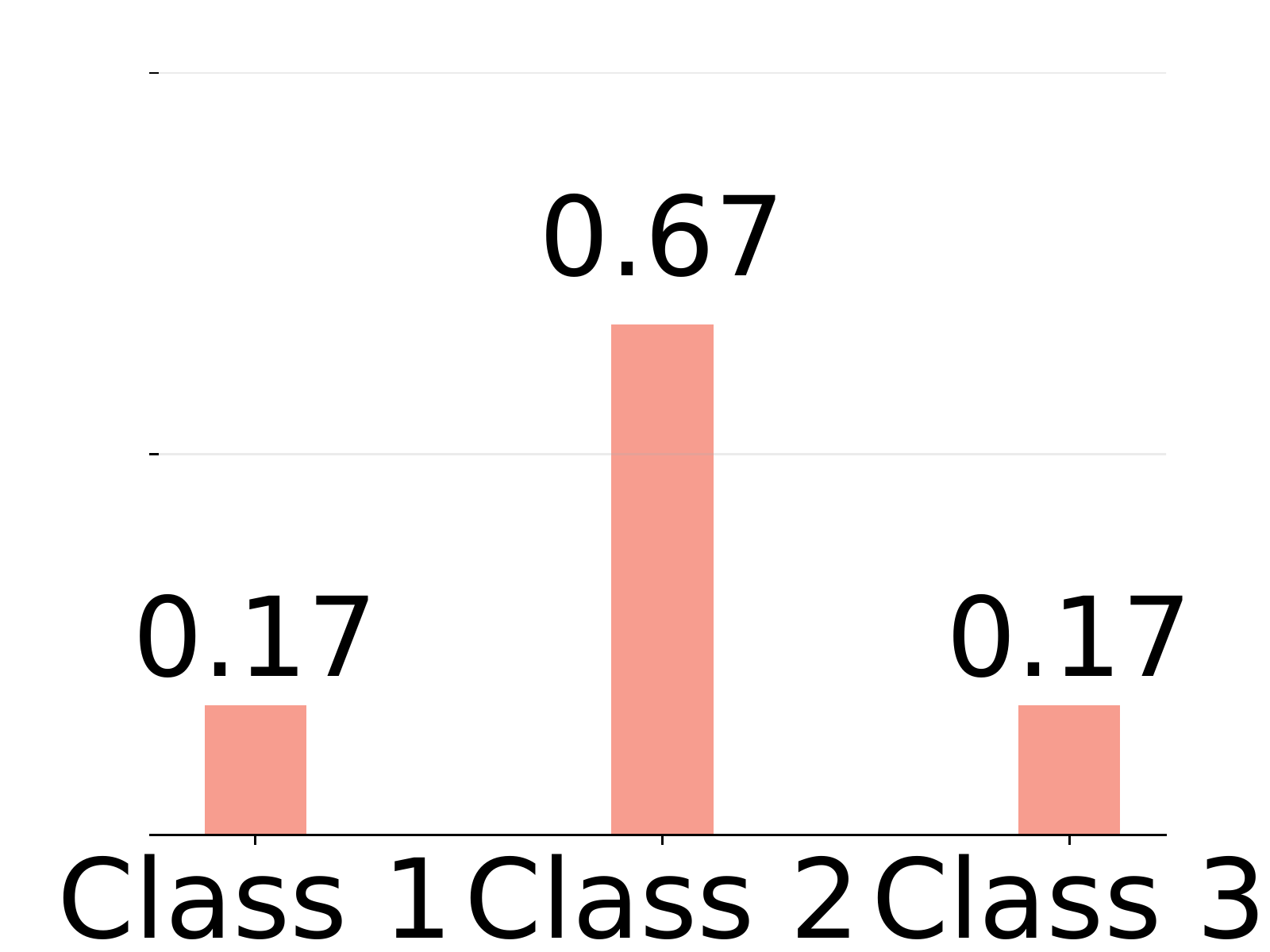}
		\caption{}
		\label{Fig:syn_3}
	\end{subfigure}
	\begin{subfigure}{.19\linewidth}
		\centering
		\includegraphics[width=\linewidth]{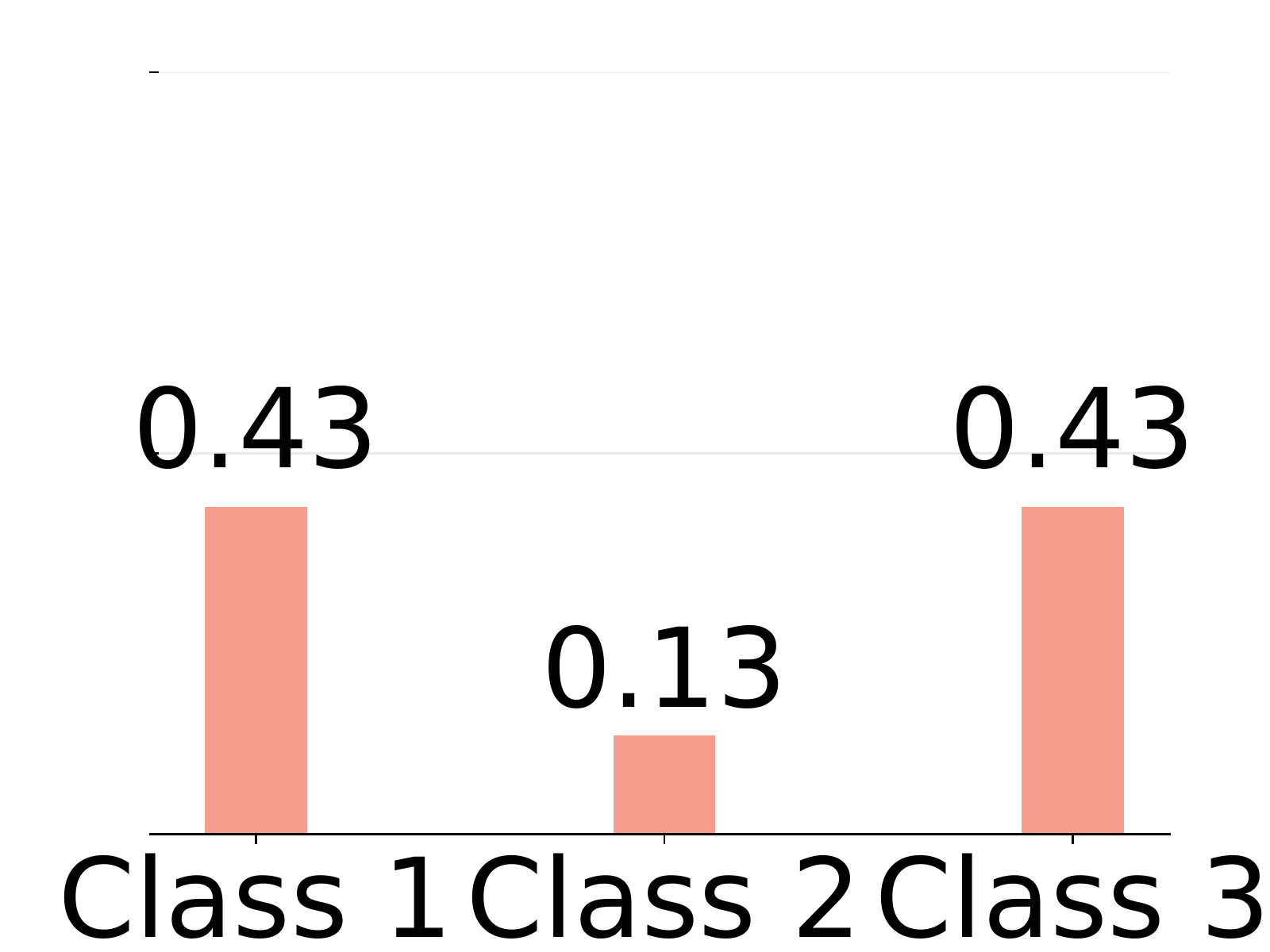}
		\caption{}
		\label{Fig:syn_4}
	\end{subfigure}
	\caption{Class Prior Distributions $\mathbb{P}(Y)$ in Synthetic Experiments.}
\end{figure*}

\begin{table*}[htbp]
	\centering
	\caption{Accuracy (\%) of Synthetic Experiments (\boldred{bold italic} and \textbf{bold} indicate the best and second best).}
	\label{tb:re_syn}
	\resizebox{\linewidth}{!}{%
	\begin{tabular*}{\textwidth}{l @{\extracolsep{\fill}} ccccccccc}
		\toprule
		$\mathbb{P}^{1}(Y)$ & \ref{Fig:syn_0} & \ref{Fig:syn_1} & \ref{Fig:syn_2} & \ref{Fig:syn_3} & \ref{Fig:syn_4} & \ref{Fig:syn_0} & \ref{Fig:syn_0} & \ref{Fig:syn_0} & \ref{Fig:syn_0} \\
		$\mathbb{P}^{2}(Y)$ & \ref{Fig:syn_0} & \ref{Fig:syn_0} & \ref{Fig:syn_0} & \ref{Fig:syn_0} & \ref{Fig:syn_0} & \ref{Fig:syn_1} & \ref{Fig:syn_2} & \ref{Fig:syn_3} & \ref{Fig:syn_4} \\ 
		\midrule
		SVM & 56.00 & 34.00 & 33.33 & 33.33 & 33.33 & 33.33 & 40.00 & 36.00 & 60.00 \\
		KPCA & 66.00 & 62.00 & 66.67 & 33.33 & 33.33 & 65.33 & 36.00 & 40.00 & 14.00 \\
		KFD & 78.67 & 38.67 & 46.00 & 74.67 & 47.33 & 49.33 & 34.00 & 19.33 & 76.00 \\
		L-SVM & 56.00 & 60.00 & 64.00 & 62.00 & 60.67 & 64.67 & 45.33 & 46.00 & 59.33 \\
        DICA & \textbf{93.33} & 84.67 & 76.00 & \textbf{84.00} & 84.67 & 54.00 & \boldred{95.33} & 71.33 & \textbf{88.67} \\
        SCA & 79.33 & 72.00 & \textbf{84.67} & 57.33 & 76.00 & 59.33 & 84.67 & 61.33 & 81.33 \\
        CIDG & 90.67 & \textbf{87.33} & 74.67 & 77.33 & \textbf{86.67} & \textbf{83.33} & \textbf{92.00} & \textbf{82.00} & 86.00 \\
        MDA & \boldred{96.67} & \boldred{96.00} & \boldred{97.33} & \boldred{94.00} & \boldred{94.00} & \boldred{91.33} & \boldred{95.33} & \boldred{94.00} & \boldred{94.00} \\
		\bottomrule
	\end{tabular*} }
\end{table*}

See Appendix D for proof. The first term is of order $O(m^{-1/2})$ and converges to zero as $m \rightarrow \infty$. The second term, involving $\tr(\mathbf{B}^{T} \mathbf{KB})$, again depends on the choice of $\mathbf{B}$. The remaining part would converge to zero only if both $m$ and $\bar{n}$ tend to infinity and $\log m / \bar{n} = o(1)$. In a general perspective, our method, as well as existing ones relying on feature extraction 
can all be viewed as ways of finding transformation $\mathbf{B}$, which could minimize the generalization bound on the test domain, under different understandings of the DG problem.

\section{EXPERIMENTS}

\subsection{EXPERIMENTAL CONFIGURATION}
We compare MDA with the following 9 methods:
\begin{itemize}
    \item Baselines: 1-nearest neighbor (1NN) and support vector machine (SVM) with RBF kernel.
    \item Feature extraction methods: kernel principal component analysis (KPCA; \citep{scholkopf1998nonlinear}) and kernel Fisher discriminant analysis (KFD; \citep{mika1999fisher}). 1NN is applied on the transformed features for classification.
    \item SVM-based DG method: low-rank exemplar-SVMs (L-SVM; \citep{xu2014exploiting}).
    \item Neural network-based DG method: CCSA \citep{Motiian_2017_ICCV}. The network setting follows \citep{Motiian_2017_ICCV}.
    \item Kernel-based DG methods: domain invariant component analysis (DICA; \citep{muandet2013domain}), scatter component analysis (SCA; \citep{ghifary2017scatter}), and conditional invariant DG (CIDG; \citep{AAAI1816595}). 1NN is applied on the domain-invariant representations for classification.
\end{itemize}

For 1NN and SVM baselines, instances in source domains are directly combined for training in both synthetic and real data experiments. For other methods, in experiments with synthetic data, the models are trained on two source sample sets, validated on one target sample set, and tested on the other target sample set. In real data experiments, we first selected hyper-parameters by 5-fold cross-validation using only labeled source sample sets. Then the model with optimal parameter settings was applied on the target domain. The classification accuracy on the target domain serves as the evaluation criterion for different methods. Since measures in section 3.2 are defined as the averaged distance, we naturally put them on a equal footing by setting $\beta = 0.5$, $\alpha = \gamma = 1$. Thus in practice, these parameters are set to be an interval containing values in the above balanced case. The hyper-parameters required by each method and the values validated in the experiments are given in Appendix E.

\begin{table}[htbp] 
	\centering
	\caption{Accuracy (\%) of Office+Caltech Dataset}
	\label{tb:re_office}
	\resizebox{\linewidth}{!}{%
	\begin{tabular}{lcccccc}
		\toprule
		Target & A & C & A, C & W, D & W, C & D, C \\
		\midrule
		1NN & 89.80 & 84.16 & 78.63 & 80.60 & 86.29 & 85.28 \\
		SVM & 91.96 & \textbf{85.75} & 77.66 & \textbf{84.51} & 87.31 & 86.72 \\
		KPCA & 89.87 & 83.35 & 66.46 & 79.65 & 85.83 & 84.45 \\
		KFD & 91.75 & 85.66 & 74.68 & 82.96 & 87.59 & 86.64 \\
		L-SVM & 91.64 & 85.39 & \textbf{80.55} & 83.33 & \textbf{88.09} & \textbf{87.10} \\
		CCSA & 90.98 & 83.37 & 77.56 & 80.04 & 85.80 & 84.91 \\
        DICA & \textbf{92.59} & 83.17 & 63.67 & 83.85 & 87.59 & 86.25 \\
        SCA & 91.96 & 83.35 & 73.04 & 83.85 & 87.31 & 86.25 \\
        CIDG & 92.38 & 81.39 & 69.87 & 82.74 & 87.45 & 85.63 \\
        MDA & \boldred{93.47} & \boldred{86.89} & \boldred{82.56} & \boldred{84.89} & \boldred{88.91} & \boldred{88.23} \\
		\bottomrule
	\end{tabular} }
\end{table}

\begin{table*}[htbp]
	\centering
	\caption{Accuracy (\%) of VLCS Dataset}\label{tb:re_vlcs}
	\resizebox{\linewidth}{!}{%
	\begin{tabular*}{\textwidth}{l@{\extracolsep{\fill}} cccccccccc}
		\toprule
		Target & V & L & C & S & V, L & V, C & V, S & L, C & L, S & C, S \\
		\midrule
		1NN & 60.19 & 53.57 & 89.94 & 55.74 & 57.26 & 58.54 & 50.59 & 66.06 & 58.13 & 66.25 \\
		SVM & \boldred{68.57} & 59.26 & \boldred{93.99} & \boldred{65.27} & \boldred{61.80} & \boldred{64.39} & \textbf{55.89} & 70.08 & \boldred{64.10} & \textbf{71.09} \\
		KPCA & 60.69 & 54.86 & 83.89 & 55.61 & 57.54 & 57.50 & 49.46 & 67.48 & 56.05 & 66.15 \\
		KFD & 61.64 & \textbf{60.54} & 86.78 & 58.75 & 57.33 & 46.84 & 53.20 & 70.03 & 61.64 & 67.87 \\
		L-SVM & 58.14 & 39.87 & 75.56 & 52.92 & 52.25 & 56.64 & 48.27 & 61.24 & 56.65 & 66.27 \\
		CCSA & 60.39 & 58.80 & 86.88 & 59.87 & 59.27 & 55.02 & 51.56 & 69.94 & 61.41 & 68.49 \\
        DICA & 62.71 & 59.38 & 86.15 & 57.28 & 58.11 & 55.08 & 55.17 & 70.01 & 61.44 & 70.30 \\
        SCA & 62.13 & 58.24 & 88.48 & \textbf{60.66} & \textbf{60.66} & 57.59 & 54.66 & \textbf{71.90} & 61.57 & 70.71 \\
        CIDG & 64.16 & 57.91 & 90.11 & 59.48 & 60.54 & 54.56 & 55.77 & 70.74 & 62.48 & 69.83 \\
        MDA & \textbf{66.86} & \boldred{61.78} & \textbf{92.64} & 59.58 & 59.60 & \textbf{63.72} & \boldred{55.98} & \boldred{72.88} & \textbf{62.83} & \boldred{72.00} \\
		\bottomrule
	\end{tabular*} }
\end{table*}

\subsection{SYNTHETIC DATA}
We investigate the influence of variation in the class prior distribution, $\mathbb{P}(Y)$, on different DG methods. Two-dimensional data is generated from three different domains and each domain consists of three classes. Each dimension of the data follows a Gaussian distribution $\mathcal{N}(\mu, \sigma)$, where $\mu$ is the mean and $\sigma$ is the standard deviation. The settings of the distribution of the synthetic data are listed in Table \ref{tb:syn_data}. Domains 1 and 2 are source domains and domain 3 is the target domain. 
The setting in Table \ref{tb:syn_data} is the base condition where class prior distributions are uniform in all domains, i.e., $\mathbb{P}^{1}(Y)= \mathbb{P}^{2}(Y)=\mathbb{P}^{3}(Y)$. Then we change $\mathbb{P}(Y)$ of one source domain to be distributions shown in Figure \ref{Fig:syn_1} to \ref{Fig:syn_4} and keep $\mathbb{P}(Y)$ of the other source domain and target domain uniform to compare different DG methods. Note that CCSA is based on convolutional neural network and thus not suitable for 2-dimensional synthetic data.

The results of different methods on different settings of class prior distributions in source domains are given in Table \ref{tb:re_syn} (also visualized in Appendix F). The accuracy of 1NN is 33.33\% in all cases thus omitted in Table \ref{tb:re_syn}. It can be seen that MDA performs best in the base setting, as well as all settings with different $\mathbb{P}(Y)$ in source domains. DICA performs equally well as MDA in (2a, 2c) setting but its accuracy is heavily influenced by the variation in $\mathbb{P}(Y)$. Compared with other methods, MDA is much more robust against the variation in $\mathbb{P}(Y)$, which is consistent with our expectation because we essentially work with the class-conditional, not the marginal, distributions.

\subsection{OFFICE+CALTECH DATASET}
We evaluate the performance of different DG methods on Office+Caltech dataset \citep{gong2012geodesic}, which is a widely used benchmark for DG tasks. Office+Caltech consists of photos from four different datasets: Amazon (A), Webcam (W), DSLR (D), and Caltech-256 (C) \citep{griffinHolubPerona}. Since there are 10 shared classes in these datasets, photos of these classes are selected and those from the same original dataset form one domain in Office+Caltech. Thus, the domains within Office+Caltech corresponds to the biases of different data collection procedures \citep{torralba2011unbiased}.
The 4096-dimensional $\text{DeCAF}_{6}$ features \citep{donahue2014decaf} are adopted in the experiments to ensure that the feature spaces, $\mathcal{X}$, are consistent across all domains.

The accuracies on different choices of target domains are shown in Table \ref{tb:re_office}. MDA again performs best, yet by a smaller margin of improvement compared to that of the synthetic experiment. In particular, MDA is the only kernel-based method that outperforms 1NN in ($A$, $C$) case which is probably because of the newly proposed average class discrepancy \eqref{eq:acd_def}. L-SVM outperforms other kernel-based methods and ranks the second. Note that other 4 cases, such as A, D, C $\to$ W, are not reported since 1NN baseline could already achieve accuracies higher than $90\%$.

\subsection{VLCS DATASET}
The second real data experiment uses the VLCS dataset. It consists of photos of five common classes extracted from four datasets: Pascal VOC2007 (V) \citep{everingham2010pascal}, LabelMe (L) \citep{russell2008labelme}, Caltech-101 (C) \citep{griffinHolubPerona}, and SUN09 (S) \citep{choi2010exploiting}. Photos from the same dataset form one domain in VLCS. $\text{DeCAF}_{6}$ features of 4096 dimensions are again adopted in the experiments to ensure the consistency of feature spaces over different domains. The training and test procedures are the same as in experiments on the Office+Caltech dataset. The parameters of L-SVM were trained (validated) on 70\% (30\%) source instances due to its high complexity.

The accuracies are given in Table \ref{tb:re_vlcs}. It is interesting to see that SVM baseline outperforms all DG methods in 6 cases. This is probably because many instances of different classes are overlapped in VLCS, so using 1NN in the transformed space is more likely to misclassify them compared with SVM. Apart from SVM baseline, MDA performs best in 8 out of 10 cases compared with other DG methods. CCSA outperforms MDA in the case of S being the target domain, which may indicate that neural networks extracted better features in this case. 
Inspired by the results of SVM, kernel-based methods together with SVM classifier may be a promising direction for further VLCS accuracy improvement. 

\section{CONCLUSION}
In this paper, we proposed a method called Multidomain Discriminant Analysis (MDA) to solve the DG problem of classification tasks. Unlike existing works, which typically assume stability of certain (conditional) distributions, 
MDA is able to solve DG problems in a more general setting where both $\mathbb{P}(Y)$ and $\mathbb{P}(X|Y)$ change across domains. The newly proposed measures, average domain discrepancy and average class discrepancy, together with two measures based on kernel Fisher discriminant analysis, are theoretically analyzed and incorporated into the objective for learning the domain-invariant feature transformation. We also prove bounds on the excess risk and generalization error for kernel-based DG methods. The effectiveness of MDA is verified by experiments on synthetic and two real benchmark datasets. 

\subsubsection*{Acknowledgements}
SH thanks Lequan Yu for comments on a previous draft of this paper. KZ acknowledges the support by National Institutes of Health (NIH) under Contract No. NIH-1R01EB022858-01, FAINR01EB022858, NIH-1R01LM012087, NIH-5U54HG008540-02, and FAIN- U54HG008540, by the United States Air Force under Contract No. FA8650-17-C-7715, and by National Science Foundation (NSF) EAGER Grant No. IIS-1829681. The NIH, the U.S. Air Force, and the NSF are not responsible for the views reported in this article. This work was partially funded by the Hong Kong Research Grants Council. 

\bibliography{egbib}

\appendix
\newpage
\onecolumn

\section*{Appendix}

\section{Quantities' Property Illustration}
The illustrations comparing average domain discrepancy with multidomain within-class scatter, and average class discrepancy with multidomain between-class scatter are given in Figure \ref{Fig:illus_1} and \ref{Fig:illus_2}.

\begin{figure}[ht]
	\centering
	\includegraphics[width=0.95\linewidth]{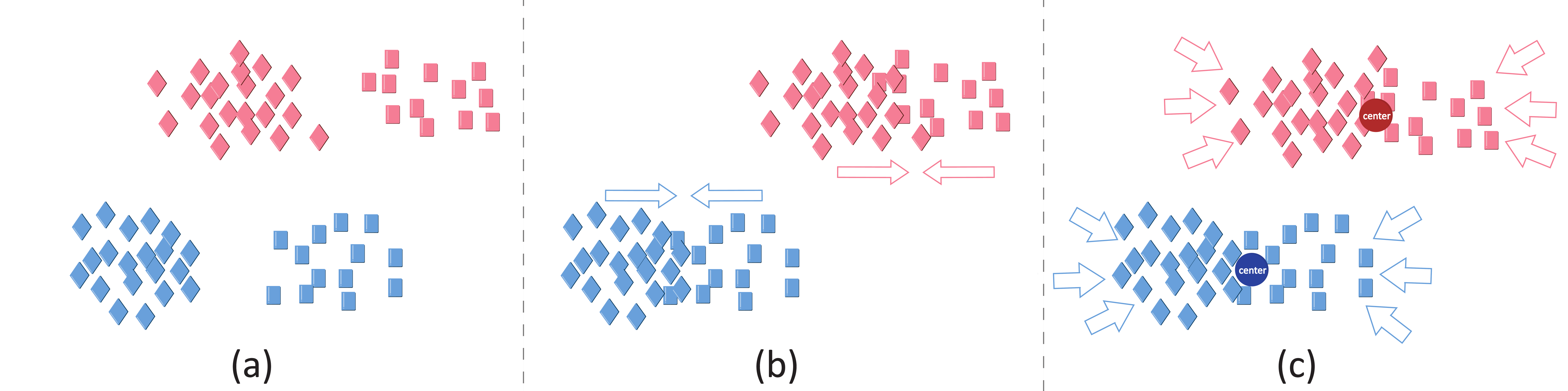}
	\caption{Comparison Between Average Domain Discrepancy and Multidomain Within-class Scatter. Colors denote classes and markers denote domains. (a) The distribution of data in the subspace $\mathbb{R}^{q}$ transformed from RKHS $\mathcal{H}$ using $\mathbf{W}^{0}$. (b) By minimizing average domain discrepancy, the resulting transformation $\mathbf{W}^{add}$ makes the means within each class closer. (c) By minimizing multidomain within-class scatter, the resulting transformation $\mathbf{W}^{mws}$ makes distribution of each class more compact towards the corresponding mean representation.}
    \label{Fig:illus_1}
\end{figure}

\begin{figure}[h]
	\centering
	\includegraphics[width=0.95\linewidth]{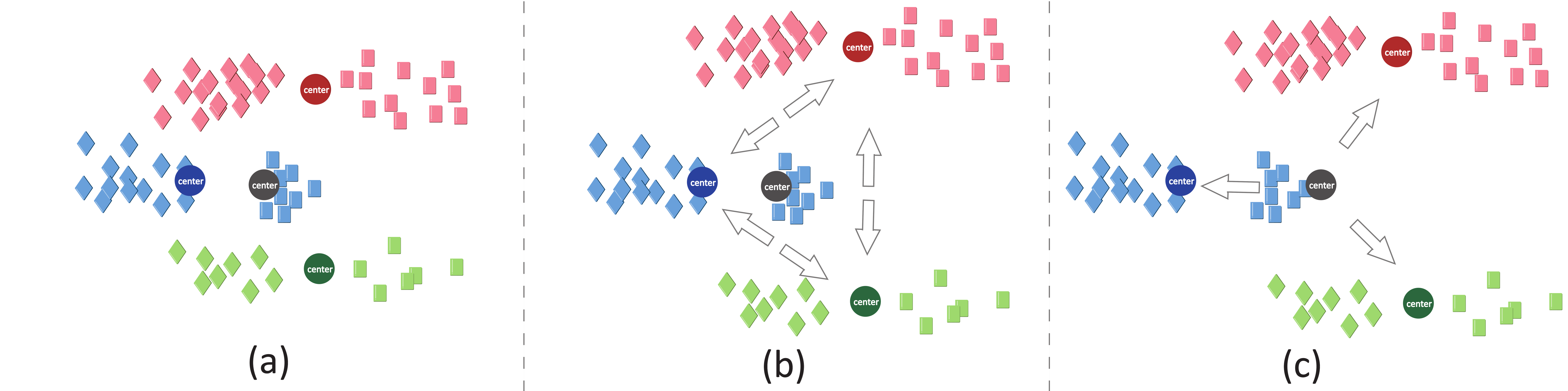}
	\caption{Comparison Between Average Class Discrepancy and Multidomain Between-class Scatter. Colors denote classes and markers denote domains. (a) The distribution of data in the subspace $\mathbb{R}^{q}$ transformed from RKHS $\mathcal{H}$ using $\mathbf{W}^{0}$. (b) By maximizing average class discrepancy, the resulting transformation $\mathbf{W}^{acd}$ treats the distances between each pair of mean representations equally and maximizes them; (c) By maximizing multidomain between-class scatter, the resulting transformation $\mathbf{W}^{mbs}$ maximizes the average distance between the overall mean and the mean representation of different classes. However, each distance is added a weight, which is proportional to the number of instances in the corresponding class. As a result, it is approximate equivalent to the scheme where one pools data of different domains of the same class together and trains classifier.}
    \label{Fig:illus_2}
\end{figure}

\section{Derivation of the Lagrangian}
Since the objective 
\begin{align}
	\argmax_{\mathbf{B}} = \frac{ \tr \left( \mathbf{B}^{T} \left( \beta \mathbf{F} + (1 - \beta) \mathbf{P} \right) \mathbf{B} \right) }{ \tr \left( \mathbf{B}^{T}( \gamma \mathbf{G} + \alpha \mathbf{Q} + \mathbf{K})\mathbf{B} \right) }
	\label{eq:obj_params_supp}
\end{align}
is invariant to re-scaling $\mathbf{B} \to \delta \mathbf{B}$, we rewrite \eqref{eq:obj_params_supp} as a constrained optimization problem:
\begin{align}
\argmax_{\mathbf{B}} & \; \tr \left( \mathbf{B}^{T} \left( \beta  \mathbf{F} + (1 - \beta) \mathbf{P} \right) \mathbf{B} \right) \\
s.t. & \; \tr \left( \mathbf{B}^{T}( \gamma \mathbf{G} + \alpha \mathbf{Q}  + \mathbf{K} )\mathbf{B} \right) = 1,
\end{align}
which yields the Lagrangian
\begin{align}
\mathcal{L} = & \tr \left( \mathbf{B}^{T} \left( \beta  \mathbf{F} + (1 - \beta) \mathbf{P} \right) \mathbf{B} \right) \nonumber \\ 
& - \tr \left( \left( \mathbf{B}^{T}( \gamma \mathbf{G} + \alpha \mathbf{Q}  + \mathbf{K} )\mathbf{B} - \mathbf{I}_{q} \right) \bm{\Gamma} \right),
\label{eq:lagrange}
\end{align}
where $\bm{\Gamma}$ is a diagonal matrix containing the Lagrange multipliers and $\mathbf{I}_{q}$ denotes the identity matrix of dimension $q$. Setting the derivative with respect to $\mathbf{B}$ in the Lagrangian \eqref{eq:lagrange} to zero yields the following generalized eigenvalue problem:
\begin{align}
	\left( \beta \mathbf{F} + (1 - \beta) \mathbf{P} \right) \mathbf{B} = \left( \gamma \mathbf{G} + \alpha \mathbf{Q}  + \mathbf{K} \right) \mathbf{B} \bm{\Gamma}.
	\label{eq:obj_last_supp}
\end{align}

\section{Proof of Theorem 2}
\begin{thrm}
Under assumptions 2 -- 4, and further assuming that $\Vert \hat{f} \Vert_{\mathcal{H}_{ \bar{k} } } \le 1$ and $\Vert f^{*} \Vert_{\mathcal{H}_{ \bar{k} } } \le 1$, where $\hat{f}$ denotes the empirical risk minimizer, $f^{*}$ denotes the expected risk minimizer, then with probability at least $1 - \delta$ there is
\begin{align}
& \mathbb{E}[ \ell( \hat{f}(\tilde{X^{t}} \mathbf{W}), Y^{t} ) ] - \mathbb{E}[  \ell ( f^{*}(\tilde{X^{t}} \mathbf{W}), Y^{t} ) ] \nonumber \\
\le & \ 4 L_{\ell}  L_{ k_{\gamma} } U_{k^{\prime}_{x}} U_{k_{x}} \sqrt{ \frac{ \tr( \mathbf{B}^{T} \mathbf{KB})  }{ n } }  + \sqrt{ \frac{ 2 \log 2 \delta^{-1} } { n } },
\label{eq:bound_1_supp}
\end{align}
where the expectations are taken over the joint distribution of the test domain $\mathbb{P}^{t}(X^{t}, Y^{t})$, $n$ is the number of training samples, and $\mathbf{K} = \bm{\Phi} \bm{\Phi}^{T}$.
\end{thrm}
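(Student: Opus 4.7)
The plan is the standard recipe for kernel excess-risk bounds: decompose the excess risk into a uniform deviation over the unit ball of $\mathcal{H}_{\bar k}$, concentrate that deviation via McDiarmid, reduce its expectation to a Rademacher complexity, and bound the Rademacher complexity of the RKHS unit ball composed with the transformation $\mathbf{W}$.

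First, because $\hat f$ minimizes the empirical risk and $f^{*}$ minimizes the expected risk over $\{f:\|f\|_{\mathcal{H}_{\bar k}}\le 1\}$, a standard add-and-subtract argument gives
\[
  \mathbb{E}[\ell(\hat f)] - \mathbb{E}[\ell(f^{*})] \;\le\; 2\,\sup_{\|f\|_{\mathcal{H}_{\bar k}}\le 1} \bigl| \mathbb{E}_n\,\ell(f) - \mathbb{E}\,\ell(f) \bigr|.
\]
Applying McDiarmid's bounded-differences inequality to this supremum, using boundedness of $\ell$ from Assumption 4, contributes the additive term $\sqrt{2\log(2\delta^{-1})/n}$ with probability at least $1-\delta/2$. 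Standard Gin\'e--Zinn symmetrization followed by Talagrand's contraction lemma (with Lipschitz constant $L_\ell$ from Assumption 4) bounds the expected supremum by $2L_\ell\,\mathbb{E}[\hat R_n(\mathcal{F})]$, where $\mathcal{F}$ is the unit ball of $\mathcal{H}_{\bar k}$ evaluated at the transformed samples; combining with the leading factor $2$ yields the $4L_\ell$ prefactor.

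The main technical step is bounding $\hat R_n(\mathcal{F})$. By the reproducing property and Cauchy--Schwarz in $\mathcal{H}_{\bar k}$,
\[
  \hat R_n(\mathcal{F}) \;\le\; \frac{1}{n}\sqrt{\,\sum_{i=1}^n \bar k\bigl(\tilde X_i\mathbf{W},\,\tilde X_i\mathbf{W}\bigr)\,}.
\]
Under the product structure $\bar k = k_\gamma(\gamma(\cdot),\gamma(\cdot))\,k_x(\cdot,\cdot)$, the diagonal factorizes and I would control the two pieces separately. (i) The $k_x$ factor, evaluated in transformed coordinates, becomes $\|\mathbf{W}^{\top}\phi(x)\|^{2}\le \|\phi(x)\|^{2}\|\mathbf{W}\|_{F}^{2}\le U_{k_x}^{2}\,\tr(\mathbf{B}^{\top}\mathbf{K}\mathbf{B})$, using Assumption 2's bound $k_x\le U_{k_x}^{2}$ together with the identity $\mathbf{W}=\bm{\Phi}^{\top}\mathbf{B}$, which gives $\|\mathbf{W}\|_{F}^{2}=\tr(\mathbf{B}^{\top}\mathbf{K}\mathbf{B})$. (ii) The $k_\gamma$ factor is controlled by combining the boundedness of $k'_x$ from Assumption 2 (so that $\|\gamma(\mathbb{P}_{\mathbf{W}})\|\le U_{k'_x}$ via Jensen) with the Lipschitz constant $L_{k_\gamma}$ of $\gamma_{k_\gamma}$ from Assumption 3, yielding $k_\gamma(\gamma(\mathbb{P}_{\mathbf{W}}),\gamma(\mathbb{P}_{\mathbf{W}}))\le L_{k_\gamma}^{2}U_{k'_x}^{2}$. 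Multiplying (i) and (ii), summing over $i$, taking the square root and dividing by $n$ gives $L_{k_\gamma}U_{k'_x}U_{k_x}\sqrt{\tr(\mathbf{B}^{\top}\mathbf{K}\mathbf{B})/n}$, which combined with the $4L_\ell$ prefactor is the first summand of the stated bound.

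The main obstacle I expect is the careful lifting of the transformation $\mathbf{W}$ from the RKHS on $\mathcal{X}$ to the extended feature map $\bar\phi$ on $\mathfrak{P}_{\mathcal{X}}\times\mathcal{X}$: transforming $X$ simultaneously push-forwards the marginal distribution $\mathbb{P}_X$ that enters $\gamma$, and tracking this through both $k_x$ and $\gamma_{k_\gamma}\circ\gamma$ is precisely what forces the Lipschitz assumption, since a naive use of the boundedness $U_{k_\gamma}$ alone would not produce the correct $\tr(\mathbf{B}^{\top}\mathbf{K}\mathbf{B})$ scaling. A union bound over the two $\delta/2$-events at the end delivers the claimed $1-\delta$ high-probability statement.
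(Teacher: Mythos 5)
Your proposal is correct and follows essentially the same route as the paper's proof: the paper invokes the packaged bound $L(\hat f)-L(f^{*})\le 4\mathcal{R}_{n}(\mathcal{A})+\sqrt{2\log(2/\delta)/n}$ as a cited lemma and you simply re-derive it (add-and-subtract, McDiarmid, symmetrization, contraction with $L_{\ell}$), then both arguments bound the RKHS-ball Rademacher complexity by the norm of the transformed feature map, $\Vert\gamma_{k_{\gamma}}(\gamma(\mathbb{P}^{t}))\otimes k_{x}(X^{t},\cdot)\mathbf{W}\Vert\le L_{k_{\gamma}}U_{k^{\prime}_{x}}U_{k_{x}}\Vert\mathbf{W}\Vert_{HS}$ with $\Vert\mathbf{W}\Vert_{HS}^{2}=\tr(\mathbf{B}^{T}\mathbf{K}\mathbf{B})$. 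Your treatment of the $k_{\gamma}$ factor via the Lipschitz constant involves the same implicit step (bounding $\Vert\gamma_{k_{\gamma}}(\gamma(\mathbb{P}))\Vert$ by $L_{k_{\gamma}}\Vert\gamma(\mathbb{P})\Vert$) that the paper itself uses, so there is no gap relative to the paper's argument.
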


\begin{proof}
First, we use the following result.

\begin{thrm}[Generalization bound based on Rademacher complexity]\label{thrm:gene_bnd}
Define $\mathcal{A} = \lbrace x \mapsto \ell( f(x), y): f \in \mathcal{H} \rbrace$ to be the loss class, the composition of the loss function with each of the hypotheses. With probability at least $1 - \delta$:
\begin{align}
L(\hat{f}) - L(f^{*}) \le 4 \mathcal{R}_{n}(\mathcal{A}) + \sqrt{ \frac{ 2 \log \frac{2}{\delta} }{ n } },
\end{align}
where $L(\hat{f})$ denotes the expected test risk of the empirical risk minimizer, $L(\hat{f})$ denotes the expected test risk of the expected risk minimizer, $\mathcal{R}_{n}(\mathcal{A})$ denotes the Rademacher complexity of loss class $\mathcal{A}$, and $n$ denotes the number of training points.
\end{thrm}

By applying theorem \ref{thrm:gene_bnd}, with probability at least $1- \delta$ there is
\begin{align}
\mathbb{E}_{\mathbb{P}^{t}_{X}} [ \ell( \hat{f}(\tilde{X^{t}} \mathbf{W}), Y^{t} ) ] - \mathbb{E}_{\mathbb{P}^{t}_{X}} [  \ell ( f^{*}(\tilde{X^{t}} \mathbf{W}), Y^{t} ) ] \le 4 \mathcal{R}_{n}(\mathcal{A}) + \sqrt{ \frac{ 2 \log 2 \delta^{-1} }{ n } },
\label{eq:pf_1}
\end{align}
where $\mathcal{A}$ denotes the loss class $\lbrace x \mapsto \ell(f( \mathbb{P}, x), y): \Vert f \Vert_{\mathcal{H}_{ \bar{k} }} \le 1 \rbrace$, $\mathcal{R}_{n}(\cdot)$ denotes the Rademacher complexity and $n$ is the number of training points.

Since the loss function $\ell$ is $L_{\ell}$-Lipschitz in its first variable, there is
\begin{align}
\mathcal{R}_{n}(\mathcal{A}) = \mathcal{R}_{n}(\ell \circ f) \le L_{\ell} \mathcal{R}_{n}( \mathcal{H}_{\bar{k}} ).
\end{align}

To obtain the Rademacher complexity of $\mathcal{H}_{\bar{k}}$, i.e. $\mathcal{R}_{n}( \mathcal{H}_{\bar{k}} )$, we adopt the following theorem.

\begin{thrm}[Rademacher complexity of $L_{2}$ ball]\label{thrm:l2_ball}
Let $\mathcal{F} = \lbrace z \mapsto \langle w, z \rangle : \Vert w \Vert_{2} \le B_{2} \rbrace$ (bound on weight vectors). Assume $\mathbb{E}_{Z \sim p^{*}} \left[ \Vert Z \Vert^{2}_{2} \right] \le C^{2}_{2}$ (bound on spread of data points). Then
\begin{align}
\mathcal{R}_{n}(\mathcal{F}) \le \frac{B_{2}C_{2}}{\sqrt{n}},
\end{align}
where $n$ denotes the number of training points.
\end{thrm}

According to the function class we restricted, $B_{2}$ in theorem \ref{thrm:l2_ball} in our case is 1. For the bound of feature maps of data in $\mathcal{H}_{\bar{k}}$ (corresponds to $C_{2}$), there is
\begin{align}
& \ \left\Vert \bar{k} \left( \tilde{X^{t}} \mathbf{W}, \cdot \right) \right\Vert \\
= & \ \Vert \gamma_{ k_{\gamma} }\left( \gamma( \mathbb{P}^{t} )\right) \otimes k_{X}(X^{t}, \cdot) \mathbf{W} \Vert \\
\le & \ L_{ k_{\gamma} } \Vert \gamma( \mathbb{P}^{t} ) \Vert \Vert k_{X}(X^{t}, \cdot) \mathbf{W} \Vert \\
\le & \ L_{ k_{\gamma} } U_{k^{\prime}} U_{k} \Vert \mathbf{W} \Vert_{HS}.
\end{align}

Note that $\mathbf{W} = \bm{\Phi}^{T} \mathbf{B}$ and $\mathbf{K} = \bm{\Phi} \bm{\Phi}^{T}$ is invertible. It follows that $\tr( \mathbf{B}^{T} \mathbf{KB} )$ defines a norm consistent with the Hilbert-Schmidt norm $\Vert \mathbf{W} \Vert_{HS}$. Therefore, by applying theorem \ref{thrm:l2_ball}, there is

\begin{align}
\mathcal{R}_{n}(\mathcal{A})  \le L_{\ell} L_{ k_{\gamma} } U_{k^{\prime}} U_{k} \sqrt{ \frac{ \tr( \mathbf{B}^{T} \mathbf{KB} )  }{ n } }.
\end{align}
Combining it with \eqref{eq:pf_1} gives the results.
\end{proof}

\section{Proof of Theorem 3}
\begin{thrm}
Under assumptions 2 -- 4, and assuming that all source sample sets are of the same size, i.e. $n^{s} = \bar{n}$ for $s=1, \dots, m$, then with probability at least $1 - \delta$ there is
\begin{flalign}
    & \sup_{ \Vert f \Vert_{ \mathcal{H}_{ \bar{k} } } \le 1 } \left\vert \frac{1}{m} \sum^{m}_{s=1} \frac{1}{n^{s}} \sum^{n^{s}}_{i=1} \ell \left( f( \hat{\tilde{X}}^{s}_{i} \mathbf{W} ), y^{s}_{i} \right) - \mathcal{E}(f, \infty)  \right\vert \nonumber \\
    \le & U_{\ell} \left( \left( \frac{ \log 2 \delta^{-1} }{ 2m \bar{n} } \right)^{\frac{1}{2}} + \left( \frac{ \log \delta^{-1} }{ 2m } \right)^{\frac{1}{2}} \right) + \sqrt{ \tr(\mathbf{B}^{T} \mathbf{KB} ) } \left( c_{1} \left( \frac{\log 2 \delta^{-1} m }{ \bar{n}} \right)^{\frac{1}{2}}  + c_{2} \left( \left( \frac{1}{m \bar{n}}  \right)^{ \frac{1}{2} } + \left( \frac{1}{m}  \right)^{ \frac{1}{2} } \right) \right) 
\end{flalign} 
where $c_{1} = 2\sqrt{2} L_{\ell} U_{k_{x}} L_{k_{\gamma}} U_{k^{\prime}_{x}}$, $c_{2} = 2 L_{\ell} U_{k_{x}} U_{k_{\gamma}}$.
\end{thrm}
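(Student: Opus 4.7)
The plan is to bound the supremum by a three-term decomposition that separates (i) the error from using empirical marginals $\hat{\mathbb{P}}^s$ in place of the true marginals $\mathbb{P}^s$, (ii) the within-domain empirical process error with the domains held fixed, and (iii) the between-domain error from having only $m$ sampled domains instead of the full meta-distribution. Concretely I would write, for the centered sup-functional,
\begin{align*}
\Delta \;\le\; \underbrace{\sup_{f}\Bigl|\tfrac{1}{m}\!\sum_{s}\!\tfrac{1}{n^{s}}\!\sum_{i}\!\bigl[\ell(f(\hat{\tilde X}^{s}_{i}\mathbf{W}),y^{s}_{i})-\ell(f(\tilde X^{s}_{i}\mathbf{W}),y^{s}_{i})\bigr]\Bigr|}_{A} + \underbrace{\sup_{f}\Bigl|\tfrac{1}{m}\!\sum_{s}\!\tfrac{1}{n^{s}}\!\sum_{i}\ell(f(\tilde X^{s}_{i}\mathbf{W}),y^{s}_{i})-\tfrac{1}{m}\!\sum_{s}\mathcal{E}(f,\mathbb{P}^{s})\Bigr|}_{B} + \underbrace{\sup_{f}\Bigl|\tfrac{1}{m}\!\sum_{s}\mathcal{E}(f,\mathbb{P}^{s})-\mathcal{E}(f,\infty)\Bigr|}_{C},
\end{align*}
where $\tilde X^{s}_{i}=(\mathbb{P}^{s}_{X},X^{s}_{i})$ is the ``true-marginal'' extended input. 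The $U_{\ell}$-containing additive term in the statement will come from direct applications of McDiarmid to $B$ (bounded differences of order $U_{\ell}/(m\bar n)$) and to $C$ (order $U_{\ell}/m$), giving the $(\log 2\delta^{-1}/2m\bar n)^{1/2}$ and $(\log\delta^{-1}/2m)^{1/2}$ pieces respectively.

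For term $A$, the substitution error, I would use Assumption 3 (Lipschitz loss) and the product-kernel structure: $\bar k(\tilde x,\cdot)=\gamma_{k_{\gamma}}(\gamma(\mathbb{P}))\otimes k_{x}(x,\cdot)$, whose $\mathcal{H}_{\bar k}$-norm after applying $\mathbf{W}=\bm\Phi^{T}\mathbf{B}$ is controlled, exactly as in the proof of Theorem 2, by $L_{k_{\gamma}}U_{k'_{x}}U_{k_{x}}\sqrt{\tr(\mathbf{B}^{T}\mathbf{KB})}$. The remaining piece is the uniform-in-$s$ control of $\|\gamma(\hat{\mathbb{P}}^{s})-\gamma(\mathbb{P}^{s})\|_{\mathcal{H}_{k'_{x}}}$, which is the standard kernel-mean-embedding concentration of rate $U_{k'_{x}}/\sqrt{\bar n}$; taking a union bound over $s=1,\dots,m$ produces the $\sqrt{\log(2\delta^{-1}m)/\bar n}$ factor and, combined with the Lipschitz chain, the constant $c_{1}=2\sqrt{2}L_{\ell}U_{k_{x}}L_{k_{\gamma}}U_{k'_{x}}$.

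For term $B$ (with domains fixed) I would symmetrize and bound the Rademacher complexity of the loss class $\{x\mapsto\ell(f(\tilde x\,\mathbf{W}),y):\|f\|_{\mathcal{H}_{\bar k}}\le 1\}$ over the $m\bar n$ points. Composition with an $L_{\ell}$-Lipschitz loss (Talagrand's contraction) and the $L_{2}$-ball Rademacher bound used in Theorem 2 (with the feature norm now $U_{k_{x}}U_{k_{\gamma}}$ and the Hilbert-Schmidt norm $\|\mathbf{W}\|_{HS}=\sqrt{\tr(\mathbf{B}^{T}\mathbf{KB})}$) give a term of order $L_{\ell}U_{k_{x}}U_{k_{\gamma}}\sqrt{\tr(\mathbf{B}^{T}\mathbf{KB})/(m\bar n)}$. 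Term $C$ is treated analogously but as a one-level Rademacher process over the $m$ sampled domains, yielding the parallel $\sqrt{1/m}$ contribution; together these produce $c_{2}=2L_{\ell}U_{k_{x}}U_{k_{\gamma}}$.

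I expect the main obstacle to be the uniform kernel-mean-embedding concentration in term $A$: one needs the deviation of $\gamma(\hat{\mathbb{P}}^{s})$ from $\gamma(\mathbb{P}^{s})$ to hold \emph{simultaneously} across all $m$ sampled source domains while remaining compatible with the Rademacher arguments for $B$ and $C$, which forces a careful probability-budget split (say $\delta/3$ per term) and the $\log(2\delta^{-1}m)$ factor inside $c_{1}$. The remaining steps are then essentially bookkeeping: collecting the three deviation pieces via a union bound and absorbing constants to obtain the stated bound.
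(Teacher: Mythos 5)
Your three-term decomposition (substitution error from $\hat{\mathbb{P}}^s$ vs.\ $\mathbb{P}^s$, within-domain empirical process conditioned on the drawn domains, and between-domain error over the $m$ sampled domains) is exactly the paper's split into (I), (IIa), and (IIb), and your tools match theirs step for step: Lipschitz loss plus the product-kernel/reproducing-property argument with $\|\mathbf{W}\|_{HS}=\sqrt{\tr(\mathbf{B}^T\mathbf{KB})}$ for the substitution term, Hilbert-space Hoeffding with a union bound over the $m$ domains giving the $\sqrt{\log(2\delta^{-1}m)/\bar n}$ factor and $c_1$, and McDiarmid plus symmetrization/Rademacher contraction for the other two terms giving the $U_\ell$ pieces and $c_2$. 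The proposal is correct and takes essentially the same approach as the paper.
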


\begin{proof}
Follow the idea in \cite{blanchard2011generalizing}, the supremum of the generalization error bound can be decomposed as
\begin{align}
    & \ \sup_{ \Vert f \Vert_{ \mathcal{H}_{ \bar{k} } } \le 1 } \left\vert \frac{1}{m} \sum^{m}_{s=1} \frac{1}{n^{s}} \sum^{n^{s}}_{i=1} \ell \left( f( \hat{\tilde{X}}^{s}_{i} \mathbf{W} ), y^{s}_{i} \right) - \mathcal{E}(f, \infty)  \right\vert \nonumber \\
    \le & \ \sup_{ \Vert f \Vert_{ \mathcal{H}_{ \bar{k} } } \le 1 } \left\vert \frac{1}{m} \sum^{m}_{s=1} \frac{1}{n^{s}} \sum^{n^{s}}_{i=1} \left( \ell \left( f( \hat{\tilde{X}}^{s}_{i} \mathbf{W} ), y^{s}_{i} \right) - \ell \left( f( \tilde{X}^{s}_{i} \mathbf{W} ), y^{s}_{i} \right) \right) \right\vert \\
    & \ + \sup_{ \Vert f \Vert_{ \mathcal{H}_{ \bar{k} } } \le 1 } \left\vert \frac{1}{m} \sum^{m}_{s=1} \frac{1}{n^{s}} \sum^{n^{s}}_{i=1}  \ell \left( f( \tilde{X}^{s}_{i} \mathbf{W} ), y^{s}_{i} \right) - \mathcal{E}(f, \infty) \right\vert \\
    := & \ (I) + (II),
\end{align}
where $\hat{\tilde{X}}^{s}_{i} = ( \hat{ \mathbb{P} }^{s}, x^{s}_{i} )$, $\tilde{X}^{s}_{i} = ( \mathbb{P}^{s}, x^{s}_{i} )$.

\textbf{Bound of term (I)}

According to the assumption that the loss $\ell$ is $L_{\ell}$ -Lipschitz in its first variable, we have
\begin{align}
    (I) \le & L_{\ell} \sup_{ \Vert f \Vert_{ \mathcal{H}_{ \bar{k} } } \le 1 } \frac{1}{m} \sum^{m}_{s=1} \frac{1}{n^{s}} \sum^{n^{s}}_{i=1} \left\vert f( \hat{\tilde{X}}^{s}_{i} \mathbf{W} ) - f( \tilde{X}^{s}_{i} \mathbf{W} ) \right\vert \\
    \le & L_{\ell} \sup_{ \Vert f \Vert_{ \mathcal{H}_{ \bar{k} } } \le 1 } \frac{1}{m} \sum^{m}_{s=1} \left\Vert  f\left( (\hat{\mathbb{P}}^{s}, \cdot) \mathbf{W} \right) - f\left( (\mathbb{P}^{s}, \cdot) \mathbf{W} \right) \right\Vert_{\infty}
    \label{ineq:1_1}
\end{align}

For any $x \in \mathcal{X}$ and $\Vert f \Vert_{ \mathcal{H}_{ \bar{k} } } \le 1$, using the reproducing property of the kernel $\bar{k}$ and Cauchy-Schwarz inequality, we have
\begin{align}
    \left\vert f\left( (\hat{\mathbb{P}}^{s}, x) \mathbf{W} \right) - f\left( (\mathbb{P}^{s}, x) \mathbf{W} \right) \right\vert & = \left\vert \left\langle \bar{k}\left( (\hat{\mathbb{P}}^{s}, x) \mathbf{W}, \cdot \right) - \bar{k}\left( (\mathbb{P}^{s}, x) \mathbf{W}, \cdot \right), f \right\rangle  \right\vert  \\
    & \le \left\Vert f \right\Vert \left\Vert \bar{k}\left( (\hat{\mathbb{P}}^{s}, x) \mathbf{W}, \cdot \right) - \bar{k}\left( (\mathbb{P}^{s}, x) \mathbf{W}, \cdot \right) \right\Vert 
    \label{ineq:1_5}
\end{align}

According to the assumption, there is $\Vert f \Vert \le 1$. For the second term in \eqref{ineq:1_5} we have
\begin{align}
    & \left\Vert \bar{k}\left( (\hat{\mathbb{P}}^{s}, x) \mathbf{W}, \cdot \right) - \bar{k}\left( (\mathbb{P}^{s}, x) \mathbf{W}, \cdot \right) \right\Vert \\
    = & \left\Vert \gamma_{ k_{\gamma} }\left( \gamma( \hat{\mathbb{P}}^{s} )\right) \otimes k_{X}(x, \cdot) \mathbf{W}  -  \gamma_{ k_{\gamma}}\left( \gamma( \mathbb{P}^{s} )\right) \otimes k_{X}(x, \cdot) \mathbf{W} \right\Vert \\
    \le & \left\Vert \gamma_{ k_{\gamma}}\left( \gamma( \hat{\mathbb{P}}^{s} )\right) \otimes k_{X}(x, \cdot)  -  \gamma_{ k_{\gamma} }\left( \gamma( \mathbb{P}^{s} )\right) \otimes k_{X}(x, \cdot) \right\Vert \left\Vert \mathbf{W} \right\Vert_{HS} \\
    = & \left\Vert \mathbf{W} \right\Vert_{HS} \left( \left\langle  \bar{k}( (\hat{\mathbb{P}}^{s}, x), \cdot ) - \bar{k}( (\mathbb{P}^{s}, x) , \cdot ),  \bar{k}( (\hat{\mathbb{P}}^{s}, x), \cdot ) - \bar{k}( (\mathbb{P}^{s}, x) , \cdot ) \right\rangle \right)^{ \frac{1}{2} } \\
    \le & \left\Vert \mathbf{W} \right\Vert_{HS} k(x,x)^{ \frac{1}{2} } \left( k_{\gamma}( \gamma( \mathbb{P}^{s} ), \gamma( \mathbb{P}^{s} ) ) + k_{\gamma}( \gamma( \hat{\mathbb{P}}^{s} ), \gamma( \hat{\mathbb{P}}^{s} ) ) - 2 k_{\gamma}( \gamma( \mathbb{P}^{s} ), \gamma( \hat{\mathbb{P}}^{s} ) ) \right)^{ \frac{1}{2} } \\
    \le & U_{k} \left\Vert \mathbf{W} \right\Vert_{HS} \left\Vert \gamma_{ k_{\gamma} }\left( \gamma( \mathbb{P}^{s} ) \right) - \gamma_{ k_{\gamma} }\left( \gamma( \hat{\mathbb{P}}^{s} ) \right) \right\Vert \\
    \le & U_{k} L_{ k_{\gamma} } \left\Vert \mathbf{W} \right\Vert_{HS} \left\Vert \gamma( \hat{\mathbb{P}}^{s} ) - \gamma( \mathbb{P}^{s} ) \right\Vert.
    \label{ineq:1_2}
\end{align}

Combining \eqref{ineq:1_5}, \eqref{ineq:1_2} and $\left\Vert f \right\Vert \le 1$, there is
\begin{align}
    \left\vert f\left( (\hat{\mathbb{P}}^{s}, x) \mathbf{W} \right) - f\left( (\mathbb{P}^{s}, x) \mathbf{W} \right) \right\vert \le U_{k} L_{ k_{\gamma} } \left\Vert \mathbf{W} \right\Vert_{HS} \left\Vert \gamma( \hat{\mathbb{P}}^{s} ) - \gamma( \mathbb{P}^{s} ) \right\Vert.
    \label{ineq:1_6}
\end{align}

Now we derive the bound on $ \left\Vert \gamma( \hat{\mathbb{P}}^{s} ) - \gamma( \mathbb{P}^{s} ) \right\Vert $. For independent real zero-mean random variables $x_{1}, \dots, x_{n}$ such that $\vert x_{i} \vert \le C $ for $i = 1, \dots, n$, Hoeffding's inequality states that  $\forall \epsilon > 0$:
\begin{align}
    \text{P}\left[ \left\vert \frac{1}{n} \sum^{n}_{i=1} x_{i} \right\vert > \epsilon \right] \le 2 \exp\left( -\frac{ n\epsilon^{2} }{ 2C^{2} } \right).
\end{align}

Set the $\delta = 2 \exp\left( -\frac{ n\epsilon^{2} }{ 2C^{2} } \right)$, then with probability at least $1- \delta$:
\begin{align}
    \left\vert \frac{1}{n} \sum^{n}_{i=1} x_{i} \right\vert < \sqrt{2}C \sqrt{ \frac{ \log 2 \delta^{-1} }{n} }.
\end{align}

Similar result holds for zero-mean independent random variables $\phi(x_{1}), \dots, \phi(x_{n})$ with values in a separable complex Hilbert space and such that $ \Vert \phi(x_{i}) \Vert \le C $, for $i = 1, \dots, n$:
\begin{align}
    \left\Vert \frac{1}{n} \sum^{n}_{i=1} \phi(x_{i}) \right\Vert < \sqrt{2}C \sqrt{ \frac{ \log 2 \delta^{-1} }{n} }.
\end{align}

For independent uncentered variables $\phi^{\prime}( x_{i} )$ with mean $M$, bounded by $C$. Let $\phi( x_{i} ) = \phi^{\prime}( x_{i} ) - M$ denote the re-centered variables, now bounded at worst by $2C$ by the triangle inequality. Set $\delta = 2 \exp\left( -\frac{ n\epsilon^{2} }{ 8 C^{2} } \right)$,  we obtain with probability at least $1-\delta$ that:
\begin{align}
    \left\Vert \frac{1}{n} \sum^{n}_{i=1} \phi^{\prime}(x_{i}) - M \right\Vert < 2 \sqrt{2}C \sqrt{ \frac{ \log 2 \delta^{-1} }{n} }
    \label{eq:un_hoeff}
\end{align}

Based on the result of \eqref{eq:un_hoeff}, we have 
\begin{align}
    \left\Vert \gamma( \hat{\mathbb{P}}^{s} ) - \gamma( \mathbb{P}^{s} ) \right\Vert = \left\Vert \frac{1}{n^{s}} \sum^{n}_{i=1} \phi^{\prime}(x^{s}_{i}) - \mathbb{E}_{X\sim \mathbb{P}^{s}} [ \phi^{\prime}(X) ] \right\Vert \le 3U_{k^{\prime}} \sqrt{ \frac{ \log 2 \delta^{-1} }{ \bar{n} } }
    \label{ineq:1_3}
\end{align}

Combining \eqref{ineq:1_6} and \eqref{ineq:1_3} we have
\begin{align}
    \sup_{ \Vert f \Vert_{ \mathcal{H}_{ \bar{k} } } \le 1 } \left\Vert  f\left( (\hat{\mathbb{P}}^{s}, \cdot) \mathbf{W} \right) - f\left( (\mathbb{P}^{s}, \cdot) \mathbf{W} \right) \right\Vert_{\infty} \le 2\sqrt{2} U_{k} L_{ k_{\gamma} } U_{k^{\prime}} \left\Vert \mathbf{W} \right\Vert_{HS} \sqrt{ \frac{ \log 2 \delta^{-1} }{ \bar{n} } }
    \label{ineq:1_4}
\end{align}

Conditionally to the draw of $\lbrace \mathbb{P}^{s} \rbrace_{1 \le s \le m}$, we can apply \eqref{ineq:1_4} to each $(\mathbb{P}^{s}, \hat{ \mathbb{P} }^{s} )$ the the union bound over $s = 1, \dots, m$ to get that with probability at least $1 - \delta$:
\begin{align}
        (I) \le 2\sqrt{2} L_{\ell} U_{k} L_{ k_{\gamma} } U_{k^{\prime}} \left\Vert \mathbf{W} \right\Vert_{HS} \sqrt{ \frac{ \log 2 \delta^{-1} + \log m }{ \bar{n} } }
\end{align}

\textbf{Bound of term (II)}

This section follows the idea of \cite{blanchard2011generalizing} so steps of proof that are largely unchanged are omitted. First, we define the conditional (idealized) test error for a given test distribution $\mathbb{P}^{t}_{XY}$ as
\begin{align}
    \mathcal{E}(f, \infty| \mathbb{P}^{t}_{XY}) := \mathbb{E}_{ (X^{t}, Y^{t}) \sim \mathbb{P}^{t}_{XY}}  \left[ \ell \left( f(\tilde{X}^{t} \mathbf{W}), Y^{t} \right) \right],
\end{align}

where $\tilde{X}^{t} = (P^{t}_{X}, X^{t})$.

Then (II) is further decomposed as
\begin{align}
    (II) \le & \frac{1}{m} \sum^{m}_{s=1} \frac{1}{n^{s}} \sum^{n^{s}}_{i=1} \left( \ell \left( f( \tilde{X}^{s}_{i} \mathbf{W} ), y^{s}_{i} \right) - \mathcal{E}(f, \infty \vert \mathbb{P}^{s}_{XY}) \right) + \frac{1}{m} \sum^{m}_{s=1} \left( \mathcal{E}(f, \infty \vert \mathbb{P}^{s}_{XY}) - \mathcal{E}(f, \infty) \right) \\
    := & (IIa) + (IIb)
\end{align}

\textbf{Bound of term (IIa)}

In the case where conditioning on $\lbrace \mathbb{P}^{s}_{XY} \rbrace_{1\le s \le m}$, the observations in $\mathcal{D} = \lbrace (x^{s}_{i}, y^{s}_{i}) \rbrace^{m, n^{s}}_{s=1, i=1}$ are now independent (but not identically distributed) for this conditional distribution. We can thus apply the McDiarmid inequality to the function
\begin{align}
    \zeta \left( \mathcal{D} \right) := \sup_{ \Vert f \Vert_{ \mathcal{H}_{ \bar{k} } } \le 1 } \frac{1}{m} \sum^{m}_{s=1} \frac{1}{n^{s}} \sum^{n^{s}}_{i=1} \left( \ell \left( f( \tilde{X}^{s}_{i} \mathbf{W} ), y^{s}_{i} \right) - \mathcal{E}(f, \infty \vert \mathbb{P}^{s}_{XY}) \right).
\end{align}

When $n^{s} = n^{s^{\prime}} = \bar{n}$ for all $s, s^{\prime}$, that with probability $1- \delta$ over the draw of $\mathcal{D}$, it holds
\begin{align}
    \left\vert \zeta - \mathbb{E}\left[ \zeta \vert \lbrace \mathbb{P}^{s}_{XY} \rbrace_{1\le s \le m} \right] \right\vert \le U_{l} \sqrt{ \frac{ \log{2\delta^{-1}} }{2m \bar{n}} }.
\end{align}

Then by the standard symmetrization technique, $\mathbb{E}\left[ \zeta \vert \lbrace \mathbb{P}^{s}_{XY} \rbrace_{1\le s \le m} \right]$ can be bounded via Rademacher complexity as:
\begin{align}
    \mathbb{E}\left[ \zeta \vert \lbrace \mathbb{P}^{s}_{XY} \rbrace_{1\le s \le m} \right] 
    \le & \frac{2}{m} \mathbb{E}_{(x^{s}_{i}, y^{s}_{i})} \mathbb{E}_{(\epsilon^{s}_{i})} \left[ \sup_{  \Vert f \Vert_{ \mathcal{H}_{ \bar{k} } } \le 1 } \sum^{m}_{s=1} \frac{1}{n^{s}}  \sum^{n^{s}}_{i=1} \epsilon^{s}_{i} \left( \ell \left( f( \tilde{X}^{s}_{i} \mathbf{W} ), y^{s}_{i} \right) \right) \vert \lbrace \mathbb{P}^{s}_{XY} \rbrace_{1\le s \le m} \right] \\
    \le & 2 L_{\ell} U_{k} U_{ k_{\gamma} } \left\Vert \mathbf{W} \right\Vert_{HS} \sqrt{ \frac{1}{m \bar{n}} },
\end{align}
where the last inequality is from the bound of the Rademacher complexity of the loss class $\ell \circ f$.

\textbf{Bound of term (IIb)}

Since the $\lbrace \mathbb{P}^{s}_{XY} \rbrace_{1\le s \le m}$ are i.i.d., the McDiarmid inequality can be applied to the function
\begin{align}
    \xi\left( \lbrace \mathbb{P}^{s}_{XY} \rbrace_{1\le s \le m} \right) := \sup_{ \Vert f \Vert_{ \mathcal{H}_{ \bar{k} } } \le 1 } \frac{1}{m} \sum^{m}_{s=1} \left( \mathcal{E}(f, \infty \vert \mathbb{P}^{s}_{XY}) - \mathcal{E}(f, \infty) \right),
\end{align}

then one obtains that with probability $1- \delta$ over the draw of $\lbrace \mathbb{P}^{s}_{XY} \rbrace_{1\le s \le m}$, it holds
\begin{align}
    \left\vert \xi - \mathbb{E}[\xi] \right\vert \le U_{\ell} \sqrt{ \frac{ \log \delta^{-1} }{ 2m } }.
\end{align}

Similarly, by the standard symmetrization technique, $\mathbb{E}[\xi]$ is bounded as
\begin{align}
    \mathbb{E}[\xi]
    \le & \frac{2}{m} \mathbb{E}_{ \lbrace \mathbb{P}^{s}_{XY} \rbrace_{1\le s \le m} } \mathbb{E}_{ (X^{s}, Y^{s})_{1 \le s \le m} } \mathbb{E}_{ (\epsilon^{s})_{1 \le s \le m} } \left[ \sup_{ \Vert f \Vert_{ \mathcal{H}_{ \bar{k} } } \le 1 } \sum^{m}_{s=1} \epsilon^{s}  \ell \left( f( \tilde{X}^{s} \mathbf{W} ), Y^{s} \right)  \right] \\
    \le & 2 L_{\ell} U_{k} U_{k_{\gamma}} \left\Vert \mathbf{W} \right\Vert_{HS} \sqrt{ \frac{ 1 }{ m } },
\end{align}
where the last inequality is again from the bound of the Rademacher complexity of the loss class $\ell \circ f$.

Finally, $\mathbf{W} = \bm{\Phi}^{T} \mathbf{B}$ and $\mathbf{K} = \bm{\Phi} \bm{\Phi}^{T}$ is invertible. It follows that $\tr( \mathbf{B}^{T} \mathbf{KB} )$ defines a norm consistent with the Hilbert-Schmidt norm $\Vert \mathbf{W} \Vert_{HS}$. By combining the above results we obtain the announced result.
\end{proof}

\section{Experimental Configurations}
Due to the difference in techniques adopted in different methods, there is/are different hyper-parameter(s) in each method require tuning in the experiments.

\begin{itemize}
    \item 1NN: since there is no hyper-parameter to be determined in 1NN, instances in source domains are directly combined for training. Then we apply the trained model on target domains and report the test accuracy.
    \item SVM: the regularization coefficient $C$ requires tuning in SVM. $C \in \lbrace 0.1, 0.5, 1.0, 2.0, 5.0, 10.0 \rbrace$ are validated in the experiments.
    \item KPCA and KFD: the kernel width $\sigma_{k}$ requires tuning. $\sigma_{k} \in \lbrace 0.1d_{M}, 0.2d_{M}, 0.5d_{M}, d_{M}, 2d_{M}, 5d_{M} \rbrace$, where $d_{M} = \text{median}\left( \Vert \bm{x}_{i} - \bm{x}_{j} \Vert^{2}_{2}  \right), \forall \bm{x}_{i}, \bm{x}_{j} \in \mathcal{D}$, are validated.
    \item E-SVM: four hyper-parameters ($\lambda_{1}, \lambda_{2}, C_{1}, C_{2}$) require tuning. $\lambda_{1} \in \lbrace 0.1, 1, 10 \rbrace$, $\lambda_{2} \in \lbrace 0.5\lambda_{1}, 1\lambda_{1}, 2\lambda_{1} \rbrace$, and $C_{1}, C_{2} \in \lbrace 0.1, 1, 10 \rbrace$ are validated.
    \item CCSA: two hyper-parameters ($lr, \alpha$) require tuning. learning rate $lr \in \lbrace 0.5, 1.0, 1.5 \rbrace$ and $\alpha \in \lbrace 0.1, 0.25, 0.4 \rbrace$ are validated.
    \item DICA: Two parameters ($\lambda$, $\epsilon$) require tuning. $\lambda \in \lbrace 1\mathrm{e}{-3}, 1\mathrm{e}{-2}, 1\mathrm{e}{-1}, 1.0, 1\mathrm{e}{1}, 1\mathrm{e}{2}, 1\mathrm{e}{3} \rbrace$ \\ and $\epsilon \in \lbrace 1\mathrm{e}{-3}, 1\mathrm{e}{-2}, 1\mathrm{e}{-1}, 1.0, 1\mathrm{e}{1}, 1\mathrm{e}{2}, 1\mathrm{e}{3} \rbrace$ were validated.
    \item SCA: Two parameters ($\beta, \delta$) require tuning. $\beta \in \lbrace 0.1, 0.3, 0.5, 0.7, 0.9 \rbrace$, \\ $\delta \in \lbrace 1\mathrm{e}{-3}, 1\mathrm{e}{-2}, 1\mathrm{e}{-1}, 1.0, 1\mathrm{e}{1}, 1\mathrm{e}{2}, 1\mathrm{e}{3}, 1\mathrm{e}{4}, 1\mathrm{e}{5}, 1\mathrm{e}{6} \rbrace$ were validated.
    
    \item CIDG: Three hyper-parameters ($\beta, \alpha, \gamma$) require tuning. $\beta \in \lbrace 0.1, 0.3, 0.5, 0.7, 0.9 \rbrace$, \\ $\gamma \in \lbrace 1\mathrm{e}{-3}, 1\mathrm{e}{-2}, 1\mathrm{e}{-1}, 1, 1\mathrm{e}{1}, 1\mathrm{e}{2}, 1\mathrm{e}{3}, 1\mathrm{e}{4}, 1\mathrm{e}{5}, 1\mathrm{e}{6} \rbrace$, and \\ $\alpha \in \lbrace 1, 1\mathrm{e}{1}, 1\mathrm{e}{2}, 1\mathrm{e}{3}, 1\mathrm{e}{4}, 1\mathrm{e}{5}, 1\mathrm{e}{6}, 1\mathrm{e}{7}, 1\mathrm{e}{8}, 1\mathrm{e}{9} \rbrace$, were validated.
    
    \item MDA: Three hyper-parameters ($\beta, \alpha, \gamma$) require tuning. $\beta \in \lbrace 0.1, 0.3, 0.5, 0.7, 0.9 \rbrace$, \\ $\gamma \in \lbrace 1\mathrm{e}{-3}, 1\mathrm{e}{-2}, 1\mathrm{e}{-1}, 1.0, 1\mathrm{e}{1}, 1\mathrm{e}{2}, 1\mathrm{e}{3}, 1\mathrm{e}{4}, 1\mathrm{e}{5}, 1\mathrm{e}{6} \rbrace$, and \\ $\alpha \in \lbrace 1, 1\mathrm{e}{1}, 1\mathrm{e}{2}, 1\mathrm{e}{3}, 1\mathrm{e}{4}, 1\mathrm{e}{5}, 1\mathrm{e}{6}, 1\mathrm{e}{7}, 1\mathrm{e}{8}, 1\mathrm{e}{9} \rbrace$, were validated.
\end{itemize}

For feature extraction methods (i.e., KPCA and KFD) and kernel-based DG methods (i.e., DICA, SCA, CIDG, and MDA), 
in real data experiments, different number of leading eigenvectors (corresponds to the dimension of the transformed subspace) that contribute to certain proportions (i.e. $ \lbrace 0.2, 0.4, 0.6, 0.8, 0.92,$ $ 0.94, 0.96, 0.98 \rbrace$) of the sum of all eigenvalues are tested and the highest accuracies are reported for each method.

\section{Synthetic Experimental Results Visualization}
In this section, we show the data distribution of source domains in the transformed domain-invariant subspace $\mathbb{R}^{q}$ of the synthetic experiment for kernel-based DG methods: SCA, CIDG, MDA, which are proposed for classification problems. The results are given in Figure \ref{fig:syn_re_1} and \ref{fig:syn_re_2}.

We observe from the results that: 1) the transformation learned from MDA performs the best in terms of the separation of different classes of target domains; 2) the overlapped region in source domains(green and red classes) is handled slightly better in MDA than in CIDG; 3) SCA has difficulty in separating instances of different classes in part of the cases.

\begin{figure}[b]
	\begin{subfigure}{.24\linewidth}
		\centering
		\includegraphics[width=\linewidth]{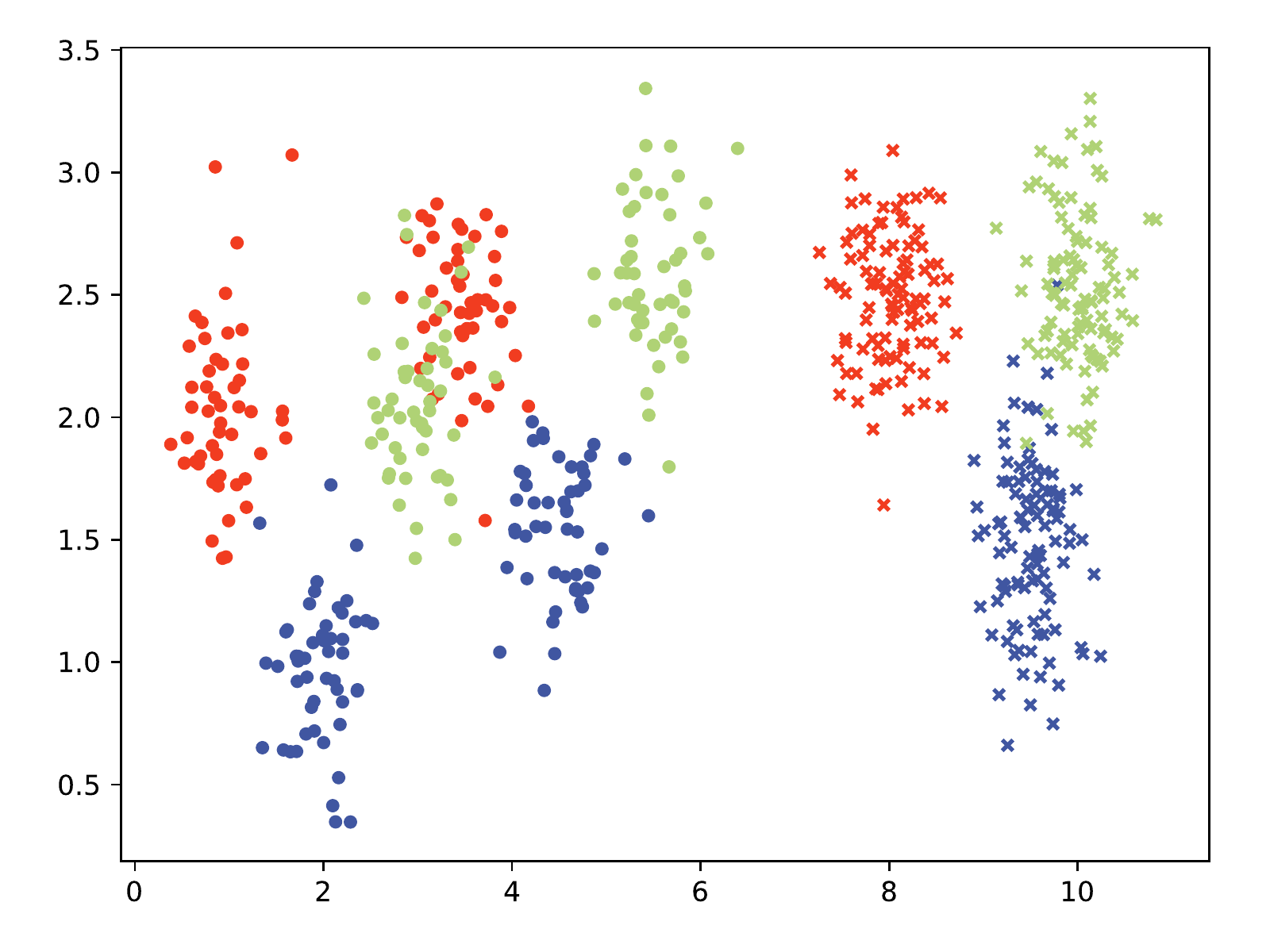}
		\caption{(2a, 2a) Raw data}
		\label{Fig:1_1}
	\end{subfigure}
	\begin{subfigure}{.24\linewidth}
		\centering
		\includegraphics[width=\linewidth]{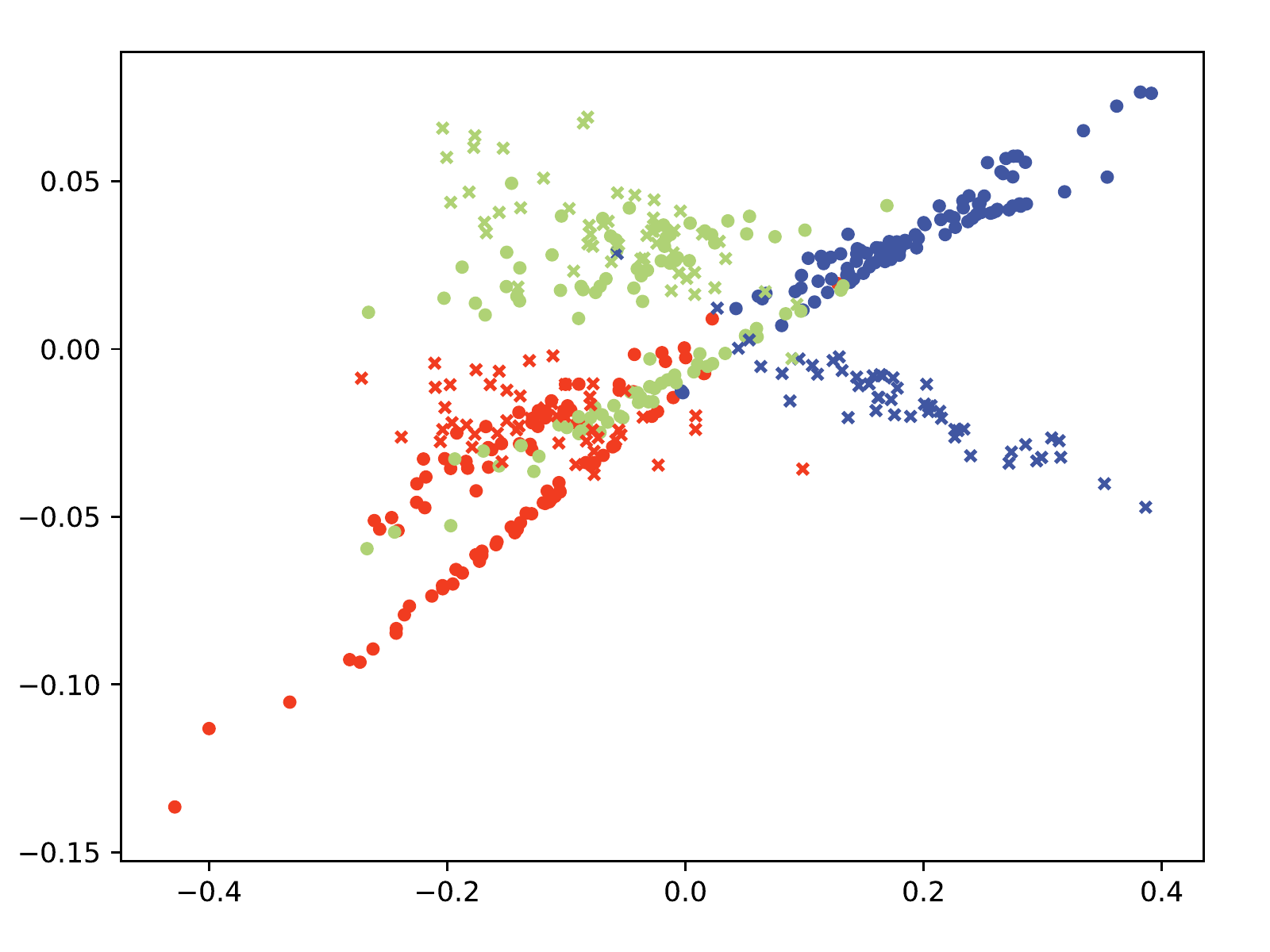}
		\caption{SCA(79.33\%)}
		\label{Fig:1_2}
	\end{subfigure} 
	\begin{subfigure}{.24\linewidth}
		\centering
		\includegraphics[width=\linewidth]{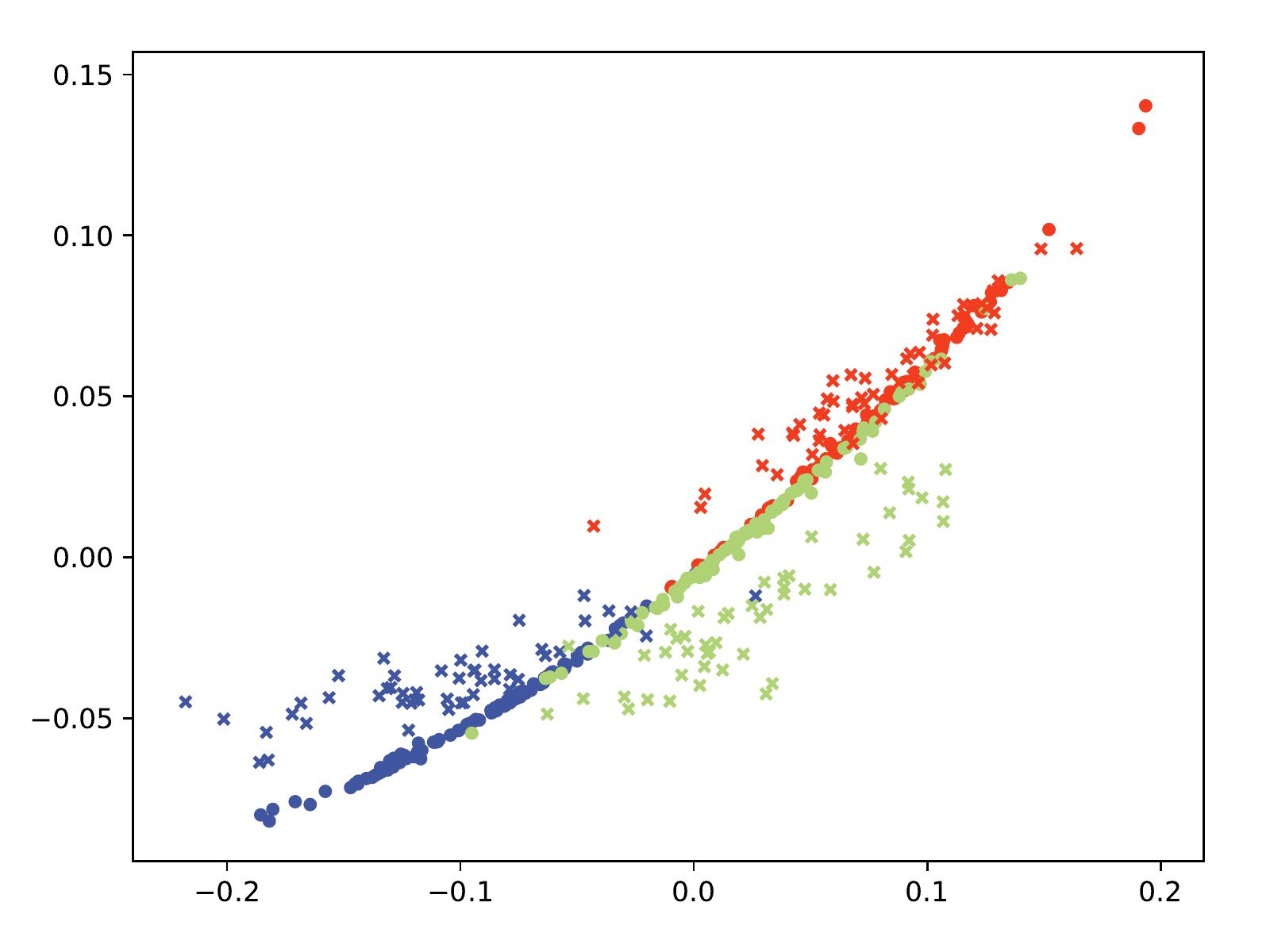}
		\caption{CIDG(90.67\%)}
		\label{Fig:1_3}
	\end{subfigure}
	\begin{subfigure}{.24\linewidth}
		\centering
		\includegraphics[width=\linewidth]{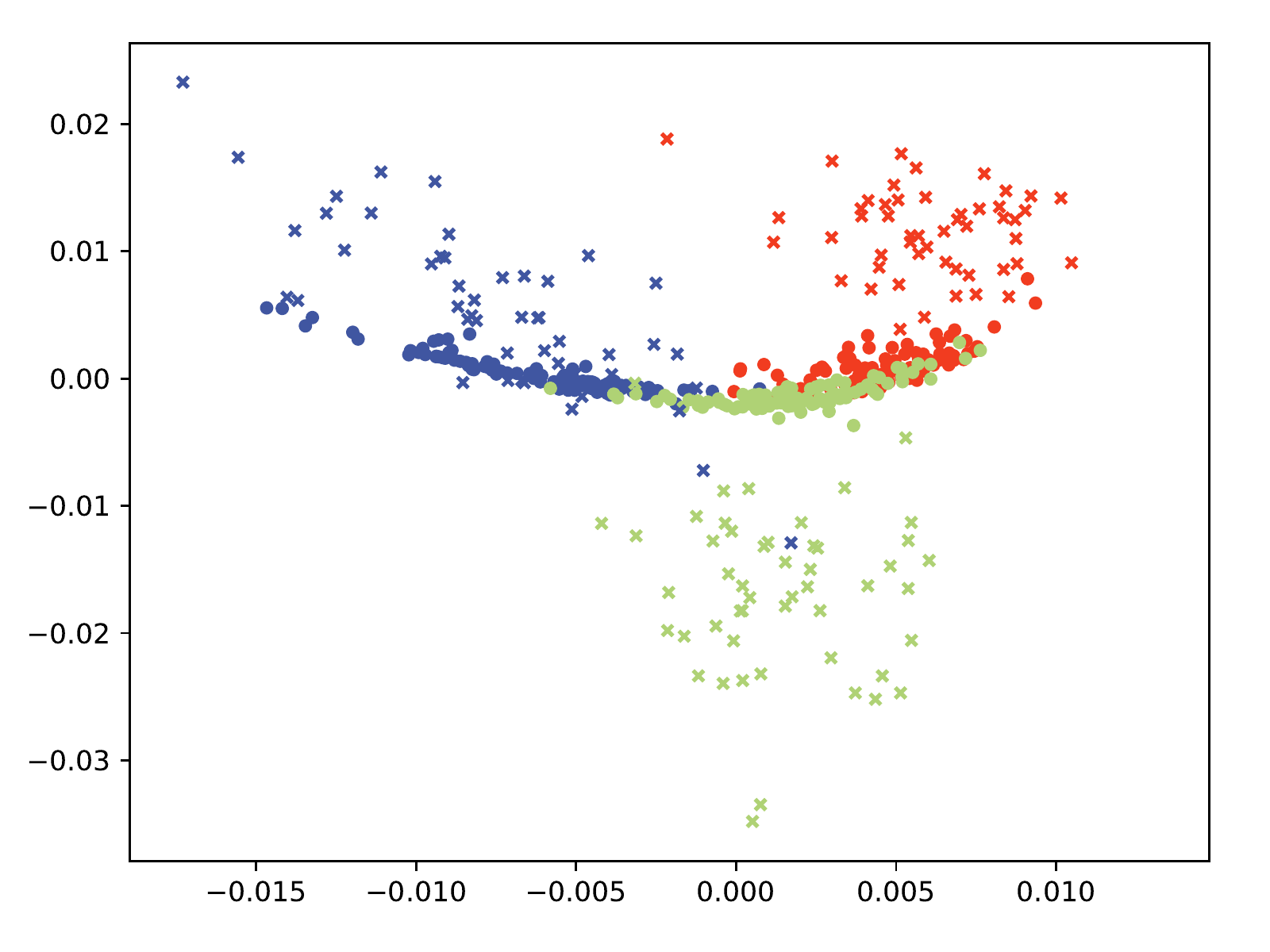}
		\caption{MDA(96.67\%)}
		\label{Fig:1_4}
	\end{subfigure} \\
	
	\begin{subfigure}{.24\linewidth}
		\centering
		\includegraphics[width=\linewidth]{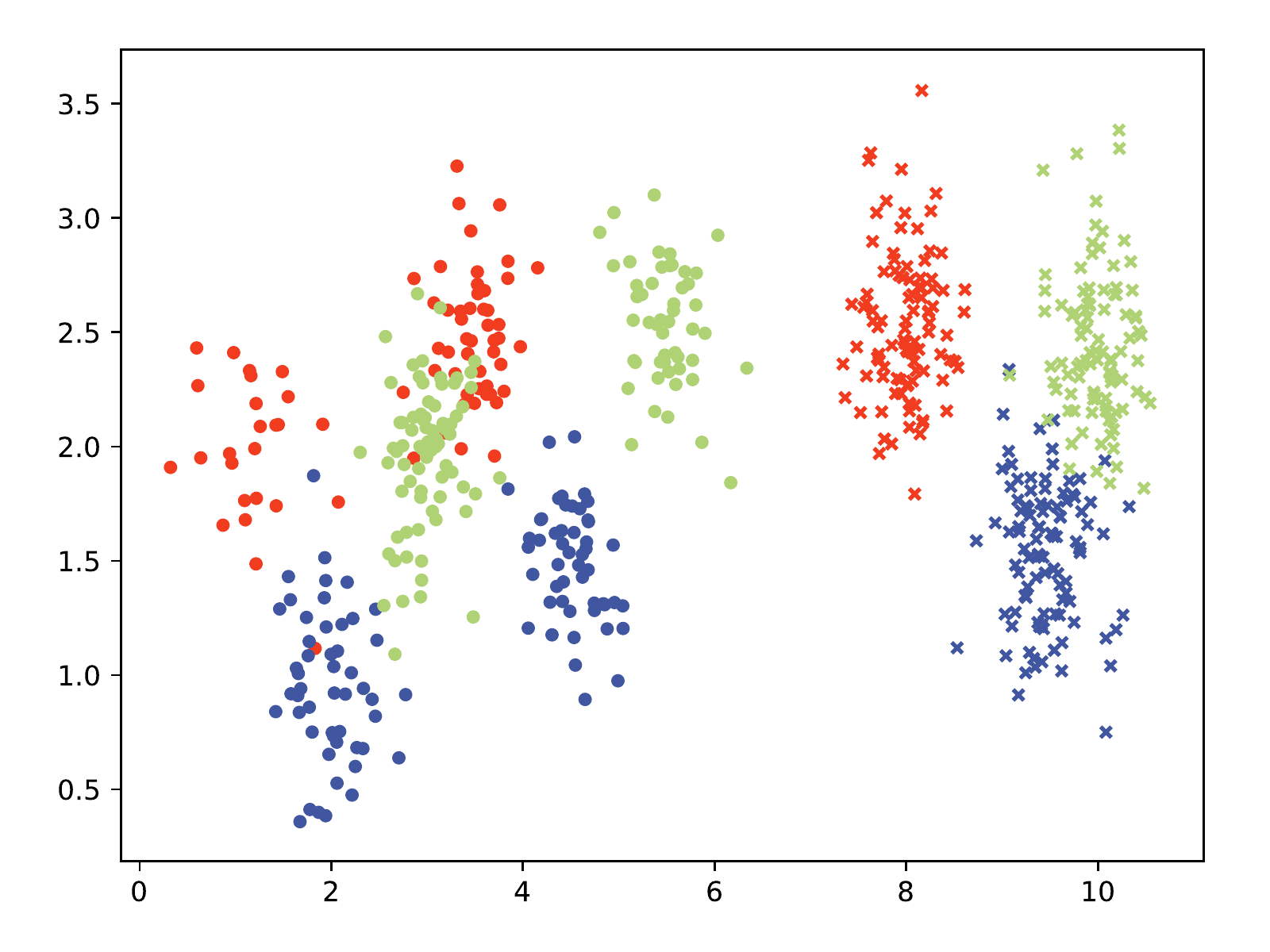}
		\caption{(2b, 2a) Raw data}
		\label{Fig:2_1}
	\end{subfigure}
	\begin{subfigure}{.24\linewidth}
		\centering
		\includegraphics[width=\linewidth]{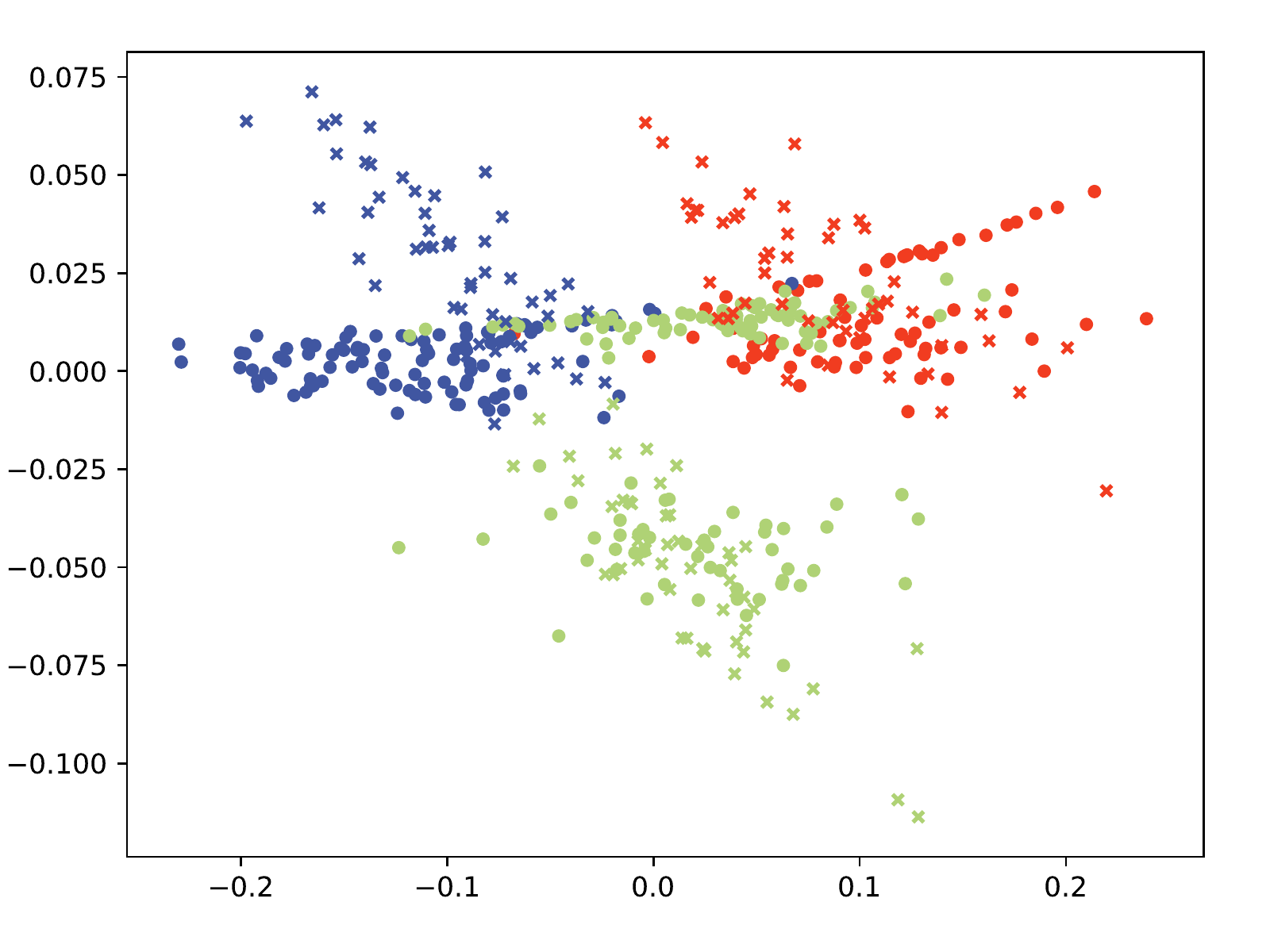}
		\caption{SCA(72.00\%)}
		\label{Fig:2_2}
	\end{subfigure}
	\begin{subfigure}{.24\linewidth}
		\centering
		\includegraphics[width=\linewidth]{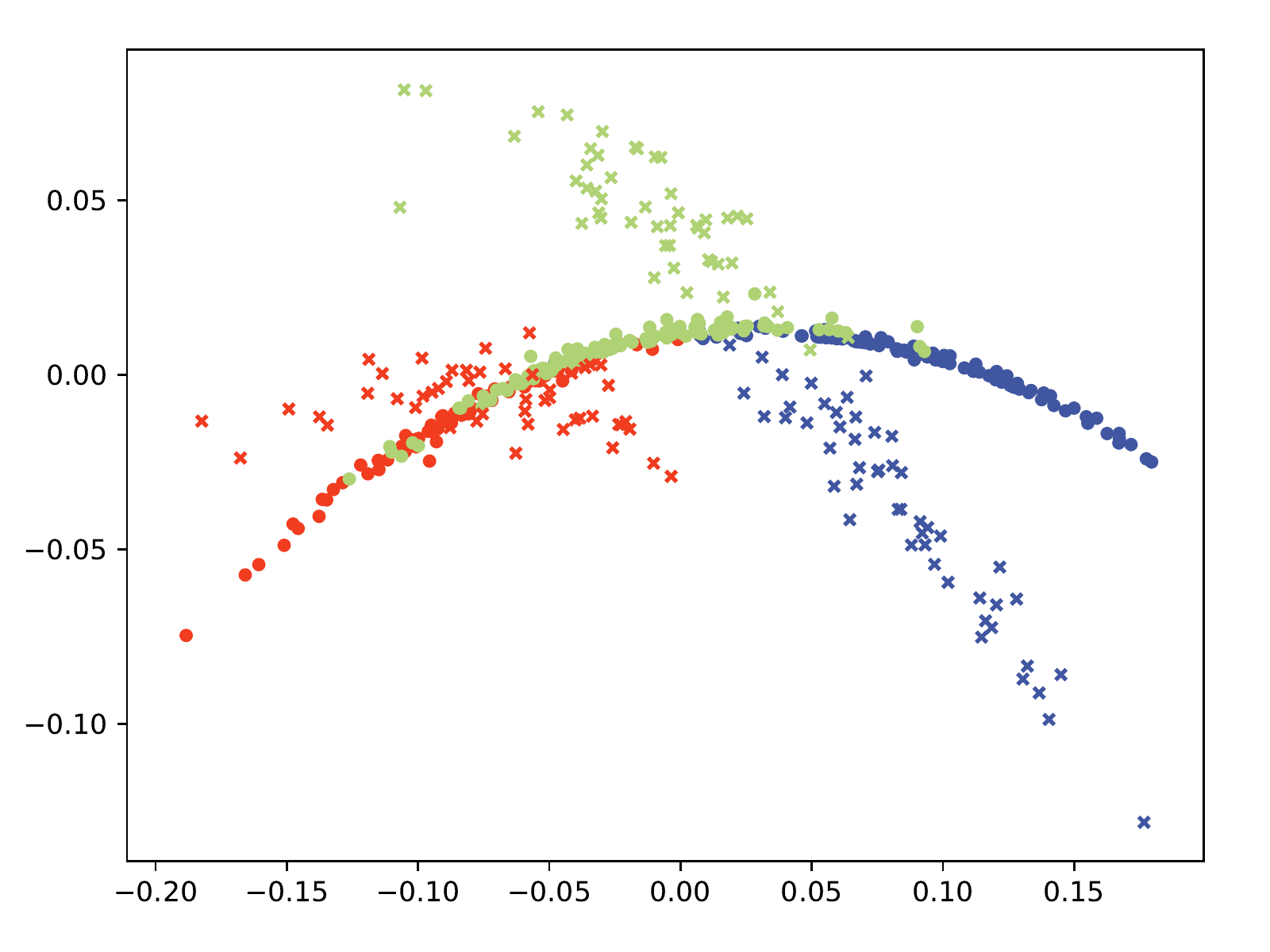}
		\caption{CIDG(87.33\%)}
		\label{Fig:2_3}
	\end{subfigure}
	\begin{subfigure}{.24\linewidth}
		\centering
		\includegraphics[width=\linewidth]{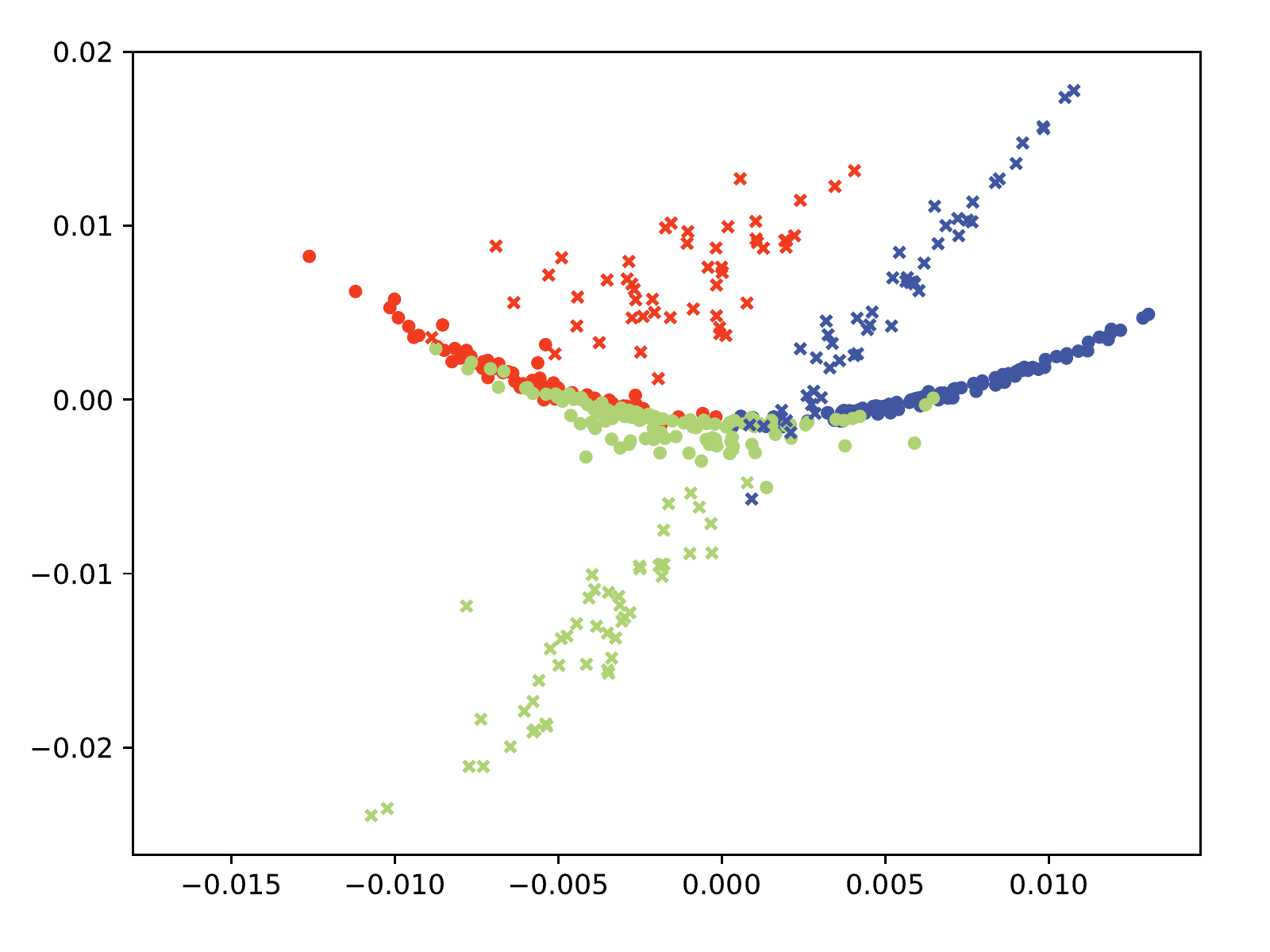}
		\caption{MDA(96.00\%)}
		\label{Fig:2_4}
	\end{subfigure} \\

	\begin{subfigure}{.24\linewidth}
		\centering
		\includegraphics[width=\linewidth]{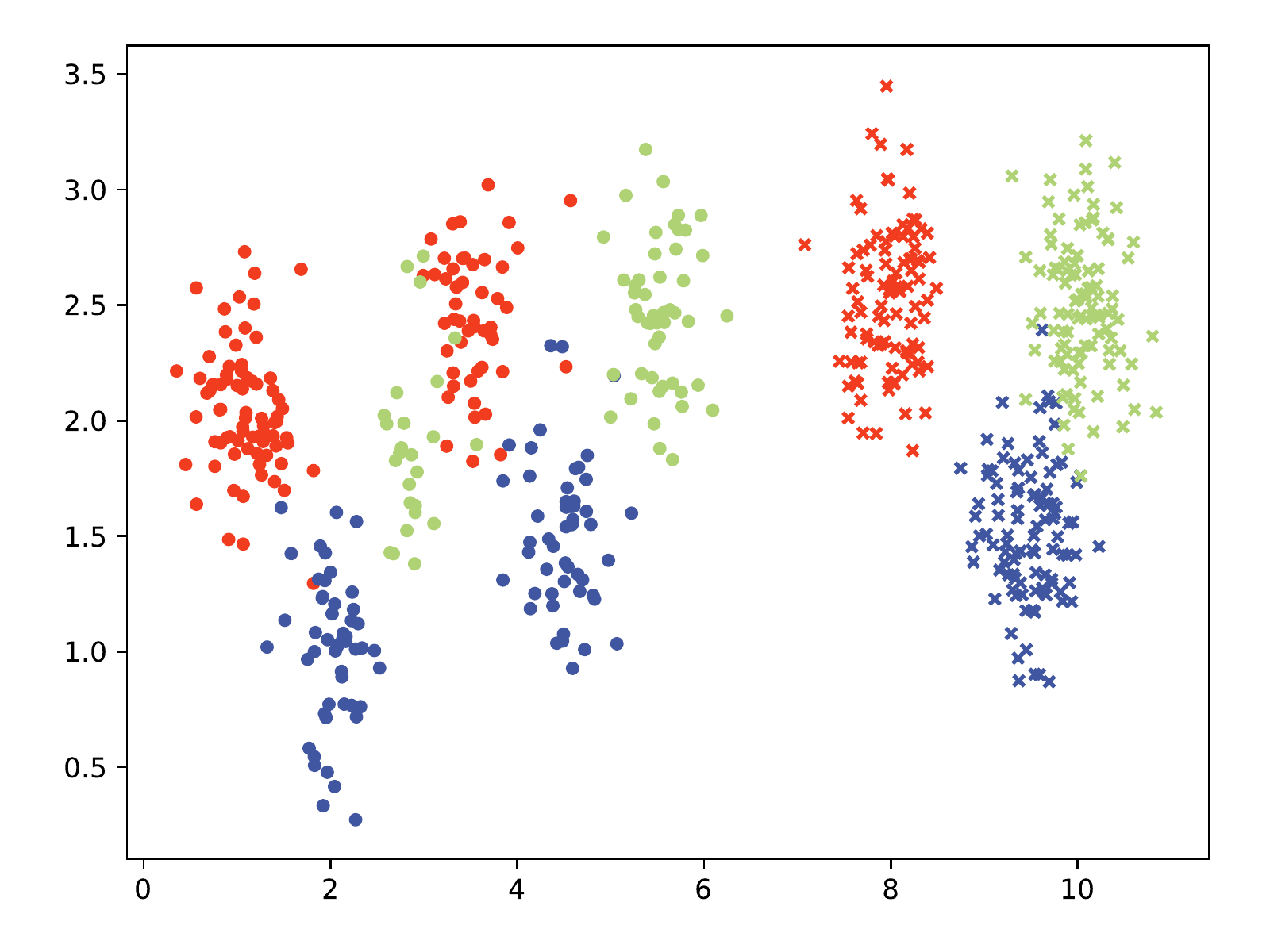}
		\caption{(2c, 2a) Raw data}
		\label{Fig:3_1}
	\end{subfigure}
	\begin{subfigure}{.24\linewidth}
		\centering
		\includegraphics[width=\linewidth]{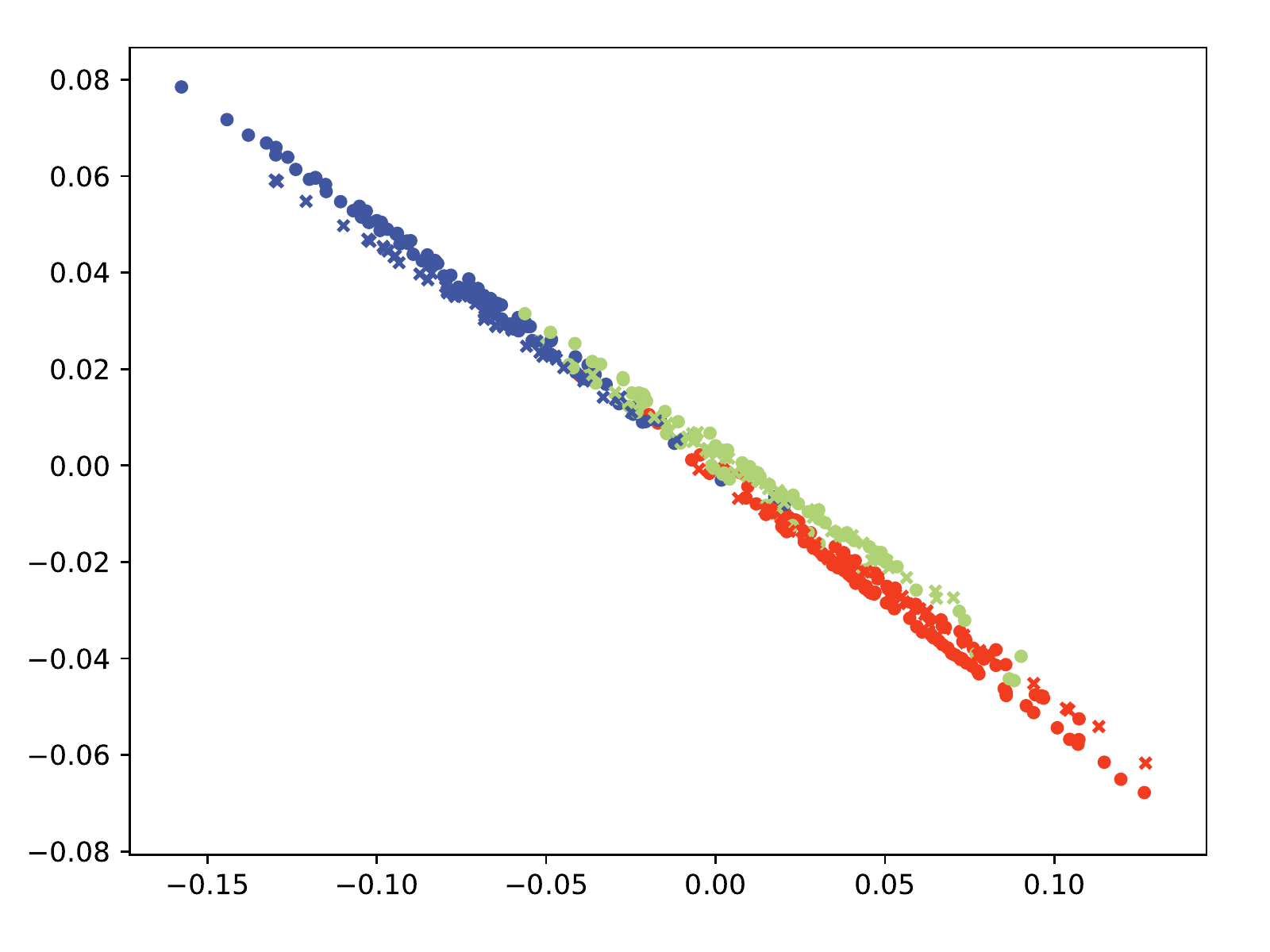}
		\caption{SCA(84.67\%)}
		\label{Fig:3_2}
	\end{subfigure}
	\begin{subfigure}{.24\linewidth}
		\centering
		\includegraphics[width=\linewidth]{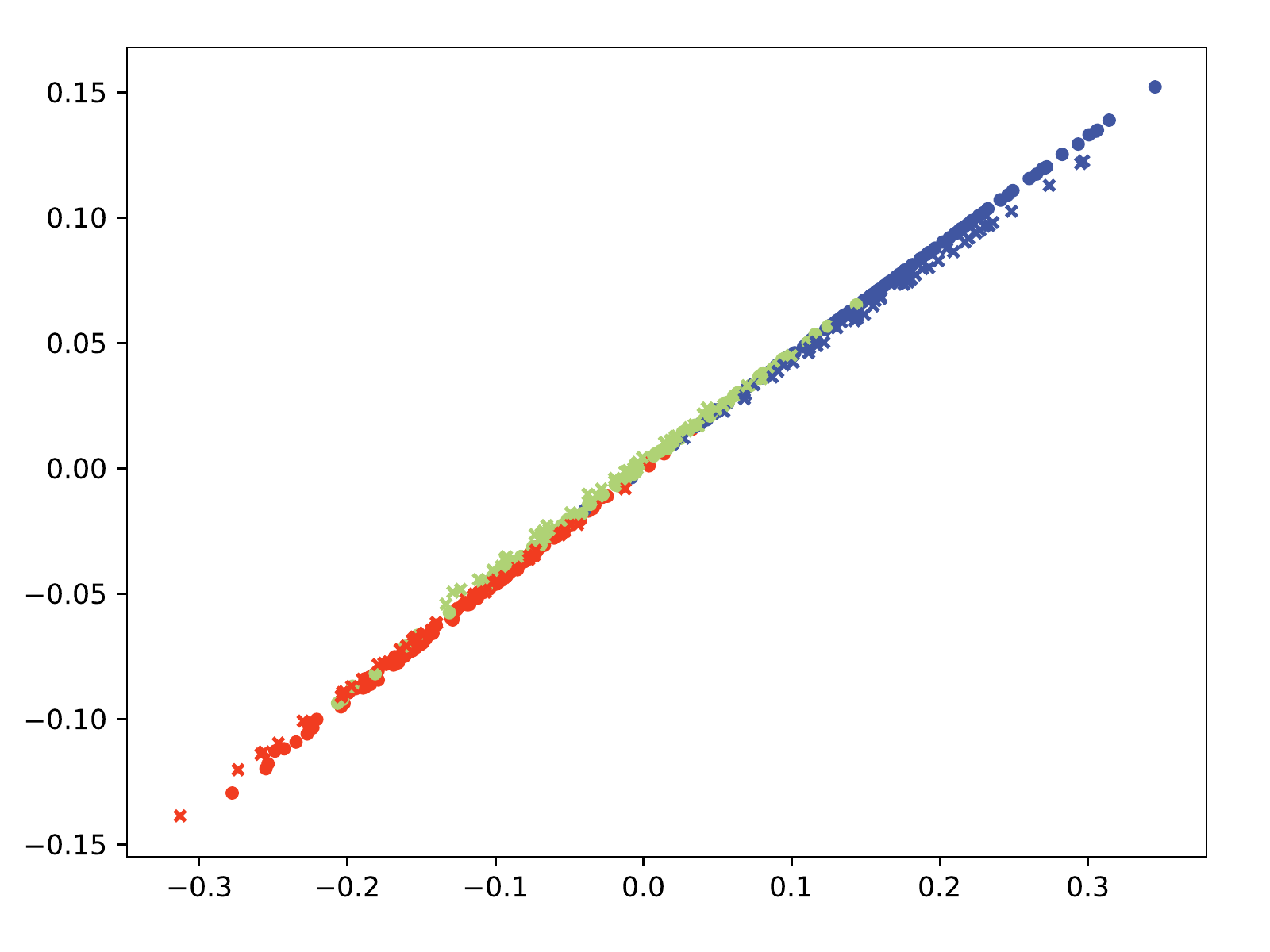}
		\caption{CIDG(74.67\%)}
		\label{Fig:3_3}
	\end{subfigure}
	\begin{subfigure}{.24\linewidth}
		\centering
		\includegraphics[width=\linewidth]{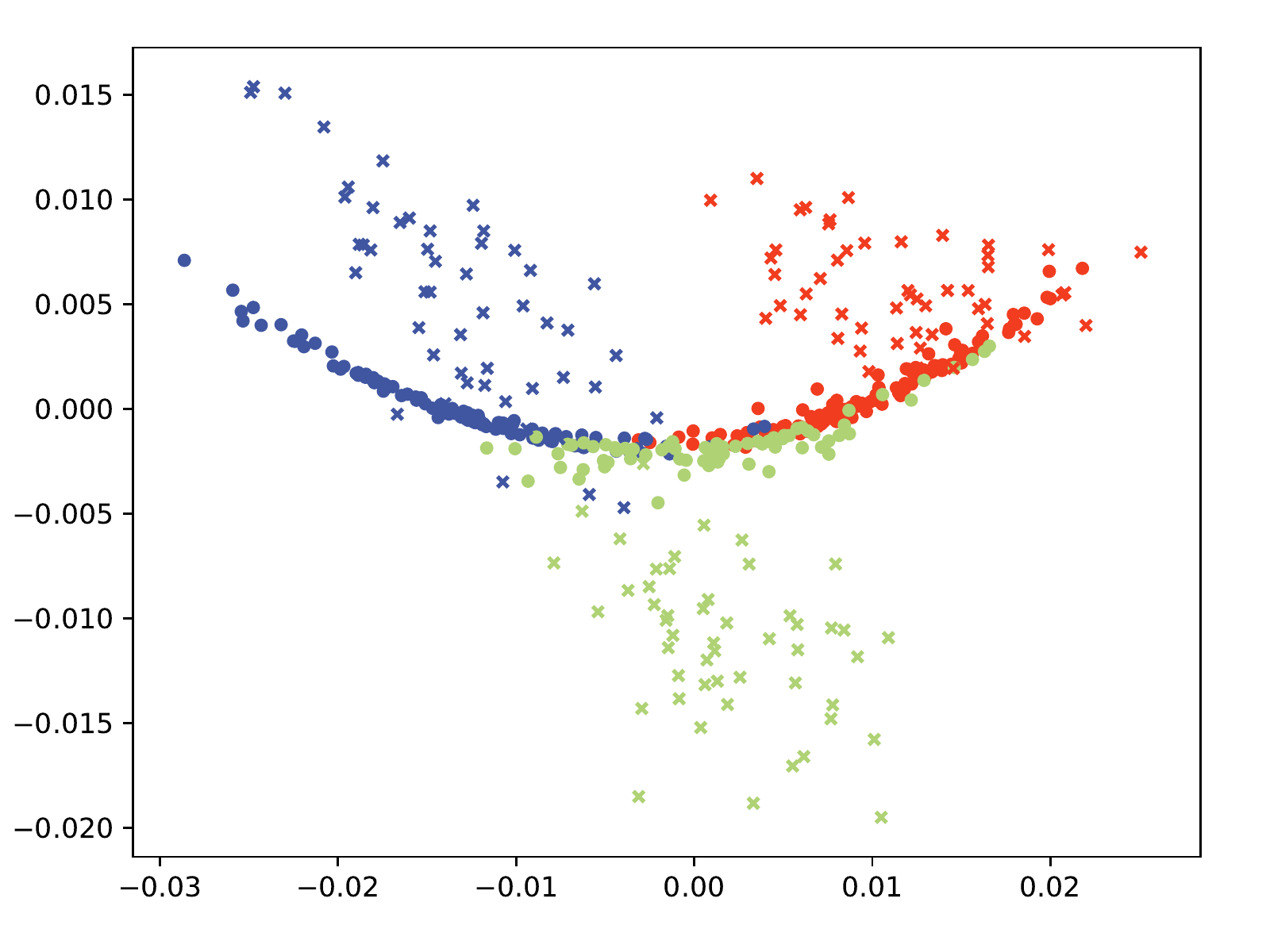}
		\caption{MDA(97.33\%)}
		\label{Fig:3_4}
	\end{subfigure} \\
	
	\begin{subfigure}{.24\linewidth}
		\centering
		\includegraphics[width=\linewidth]{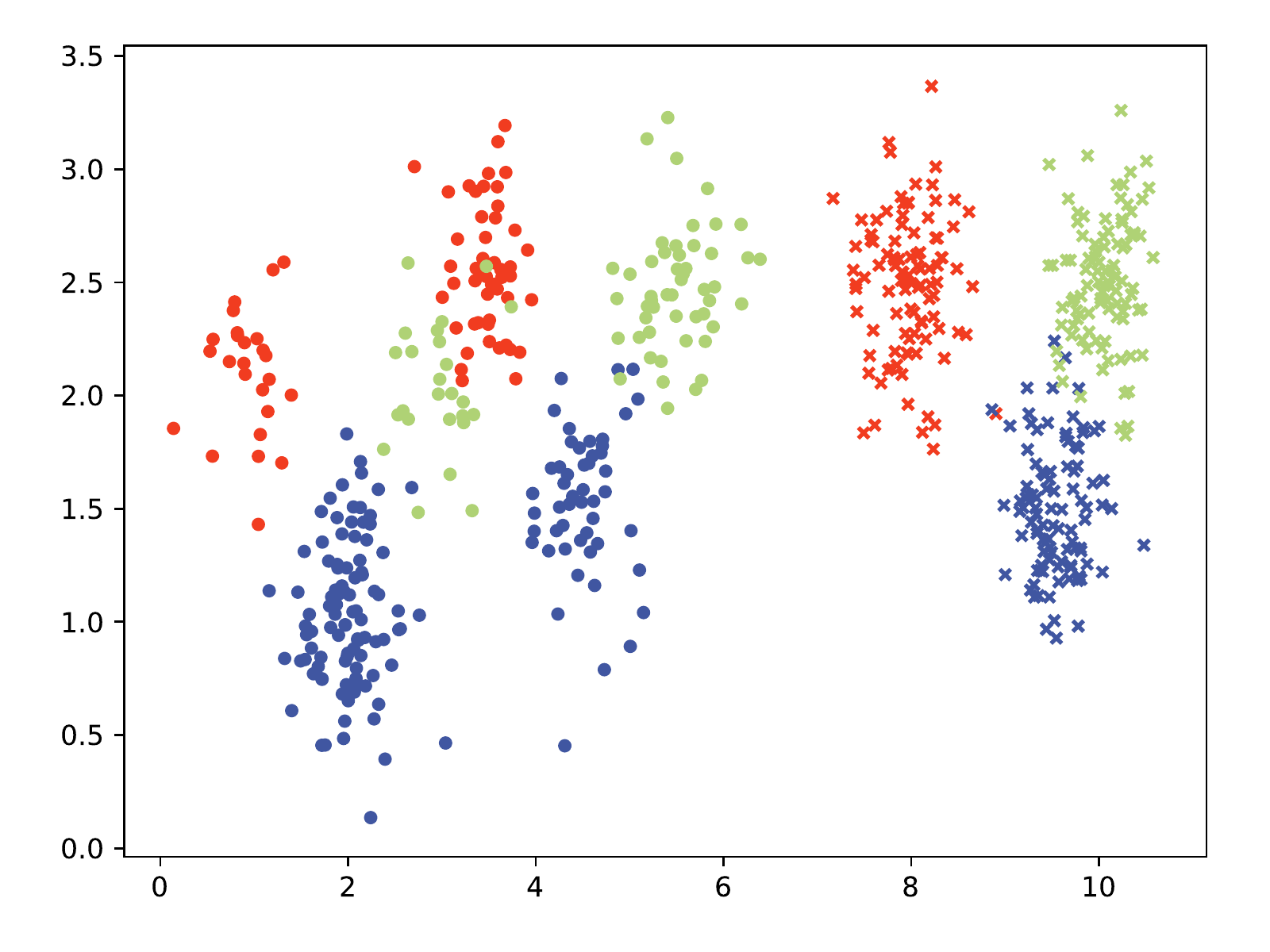}
		\caption{(2d, 2a) Raw data}
		\label{Fig:4_1}
	\end{subfigure}
	\begin{subfigure}{.24\linewidth}
		\centering
		\includegraphics[width=\linewidth]{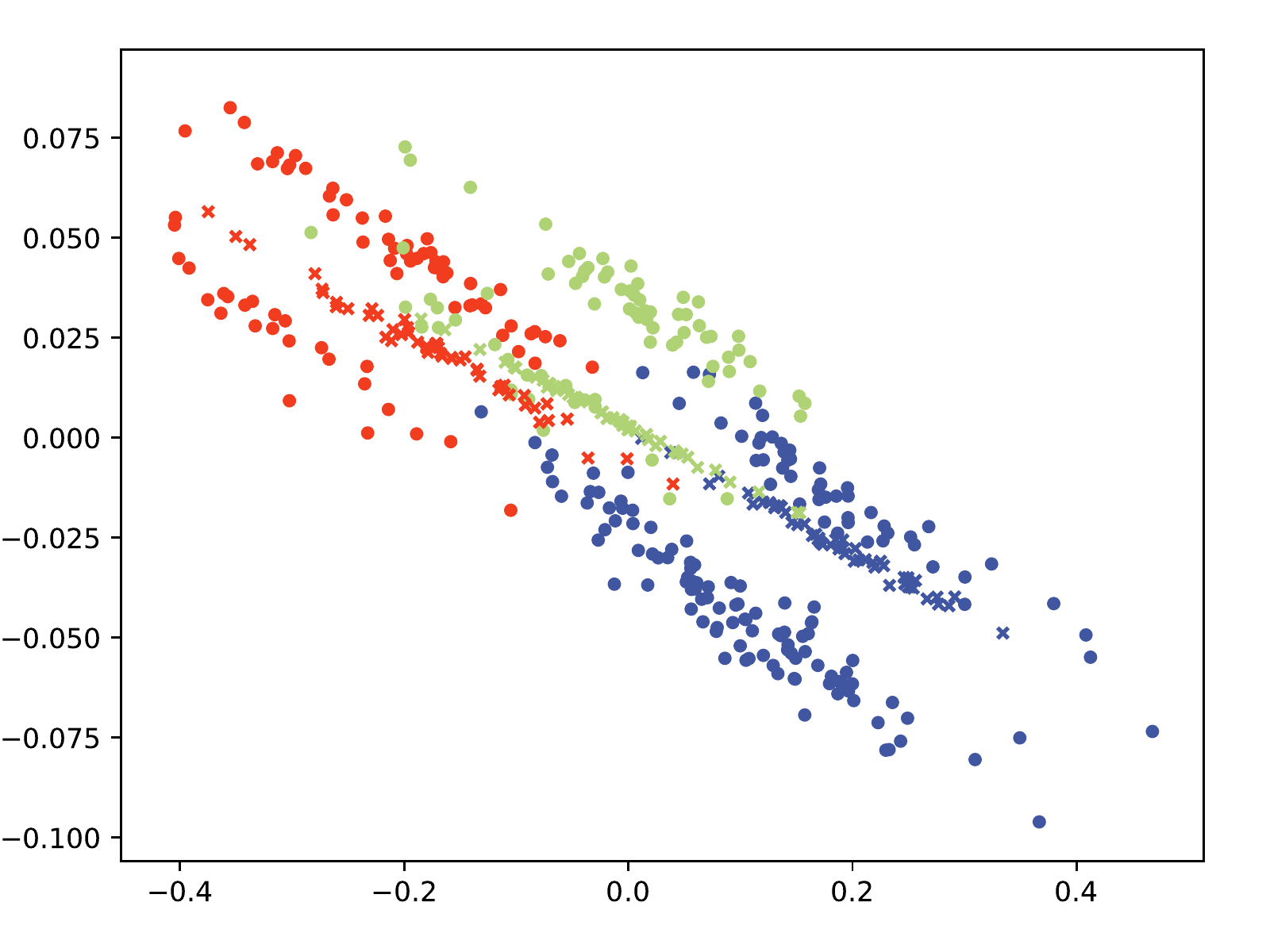}
		\caption{SCA(57.33\%)}
		\label{Fig:4_2}
	\end{subfigure}
	\begin{subfigure}{.24\linewidth}
		\centering
		\includegraphics[width=\linewidth]{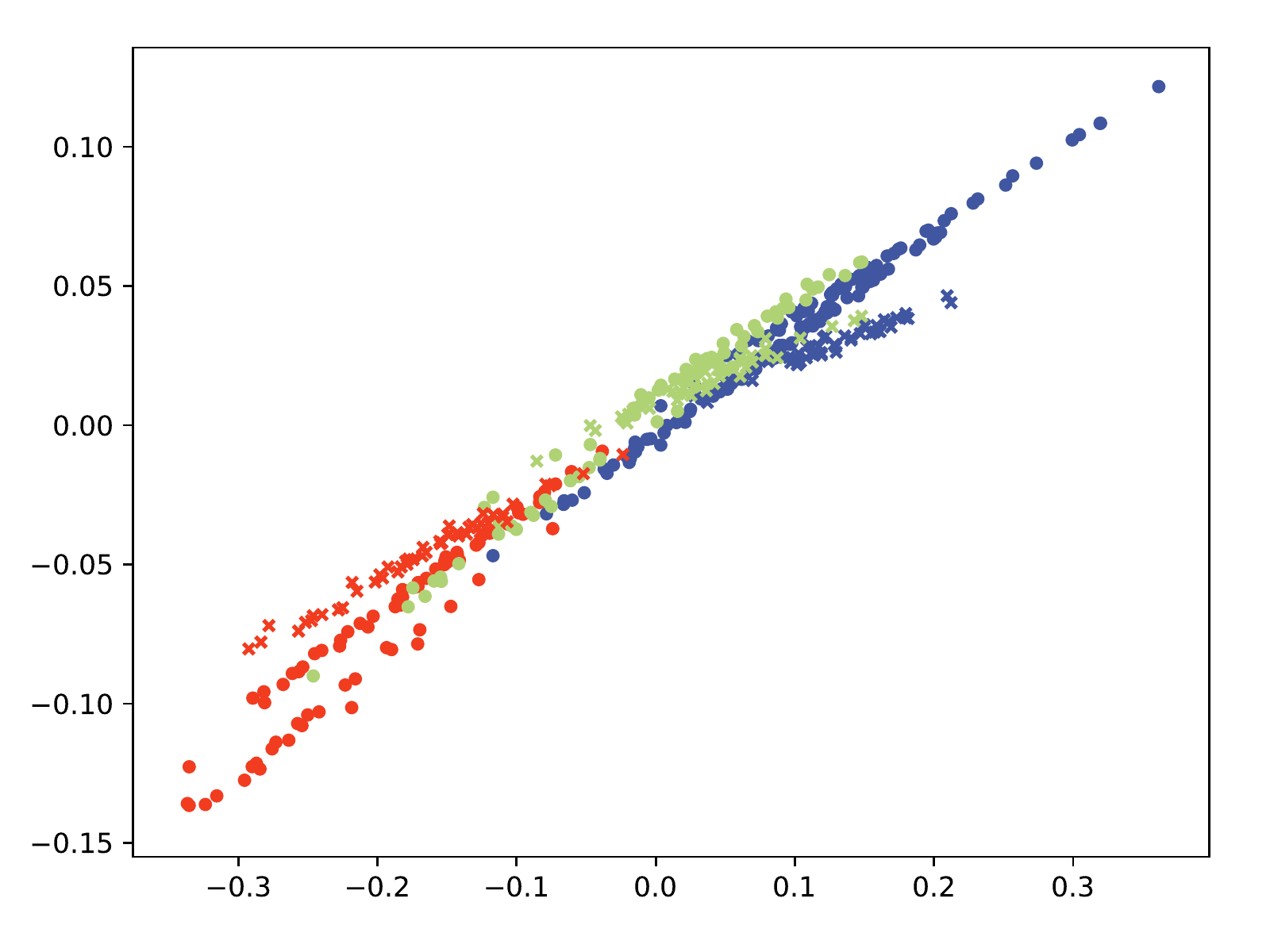}
		\caption{CIDG(77.33\%)}
		\label{Fig:4_3}
	\end{subfigure}
	\begin{subfigure}{.24\linewidth}
		\centering
		\includegraphics[width=\linewidth]{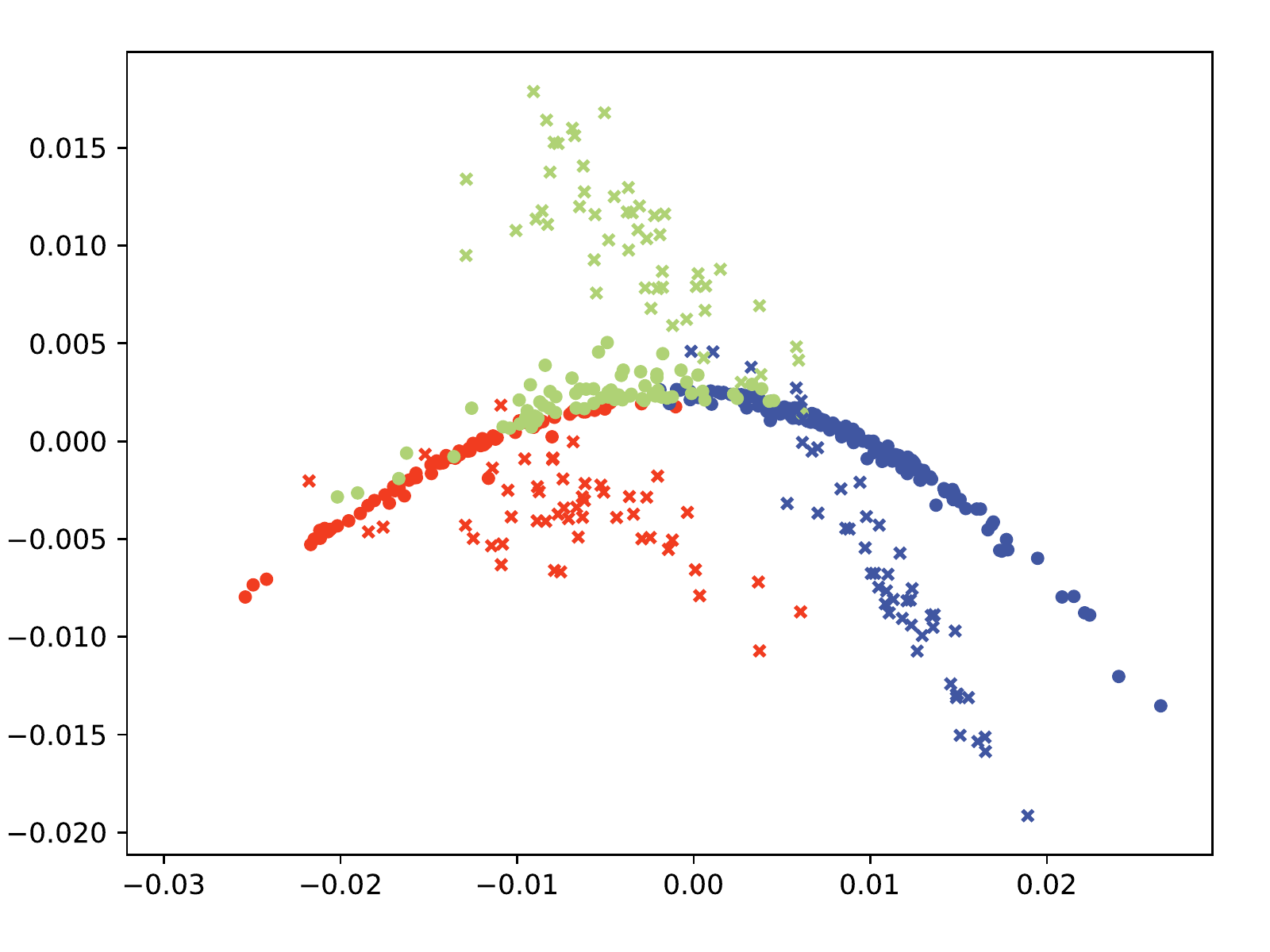}
		\caption{MDA(94.00\%)}
		\label{Fig:4_4}
	\end{subfigure} \\
	
	\begin{subfigure}{.24\linewidth}
		\centering
		\includegraphics[width=\linewidth]{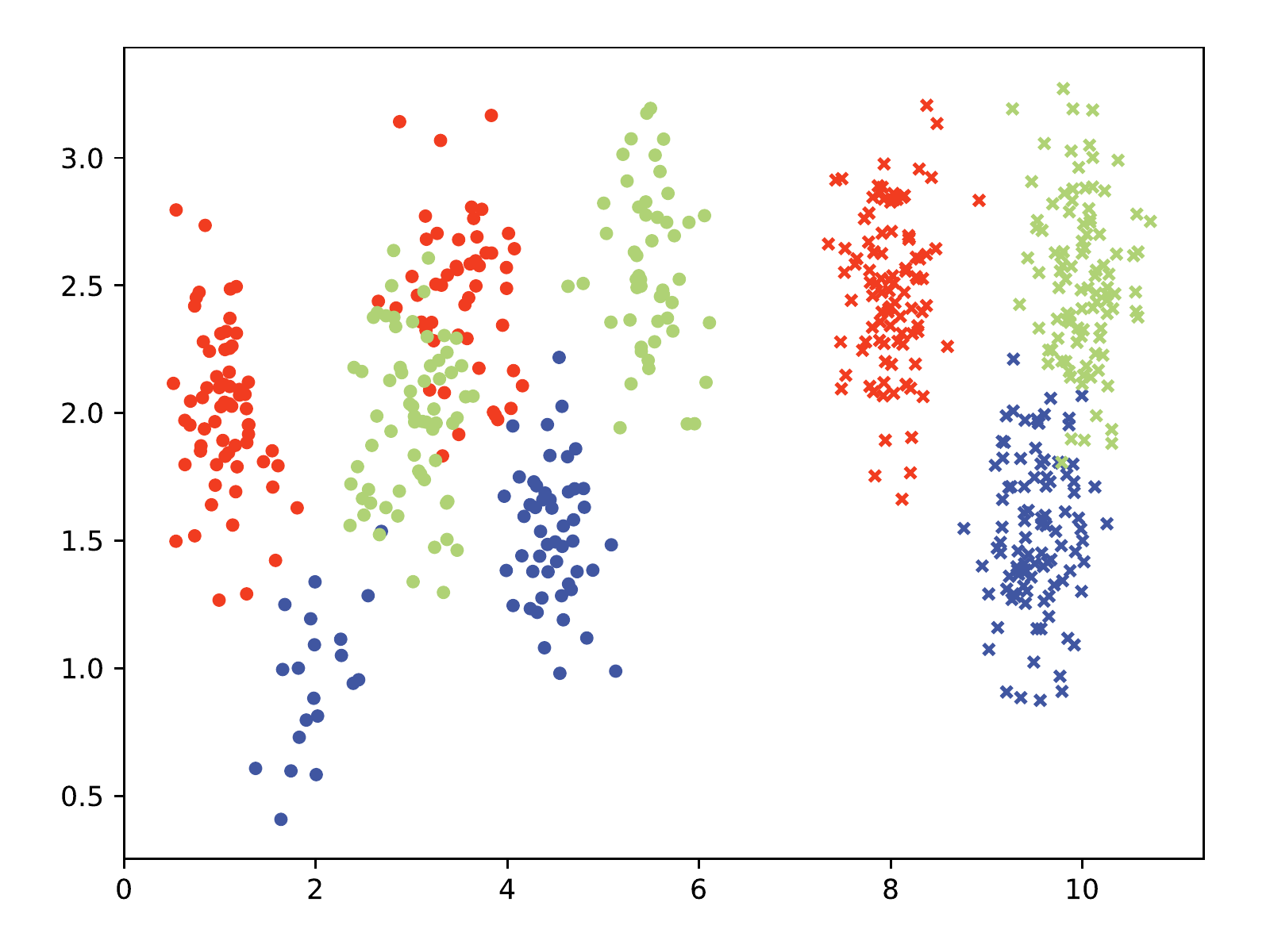}
		\caption{(2e, 2a) Raw data}
		\label{Fig:5_1}
	\end{subfigure}
	\begin{subfigure}{.24\linewidth}
		\centering
		\includegraphics[width=\linewidth]{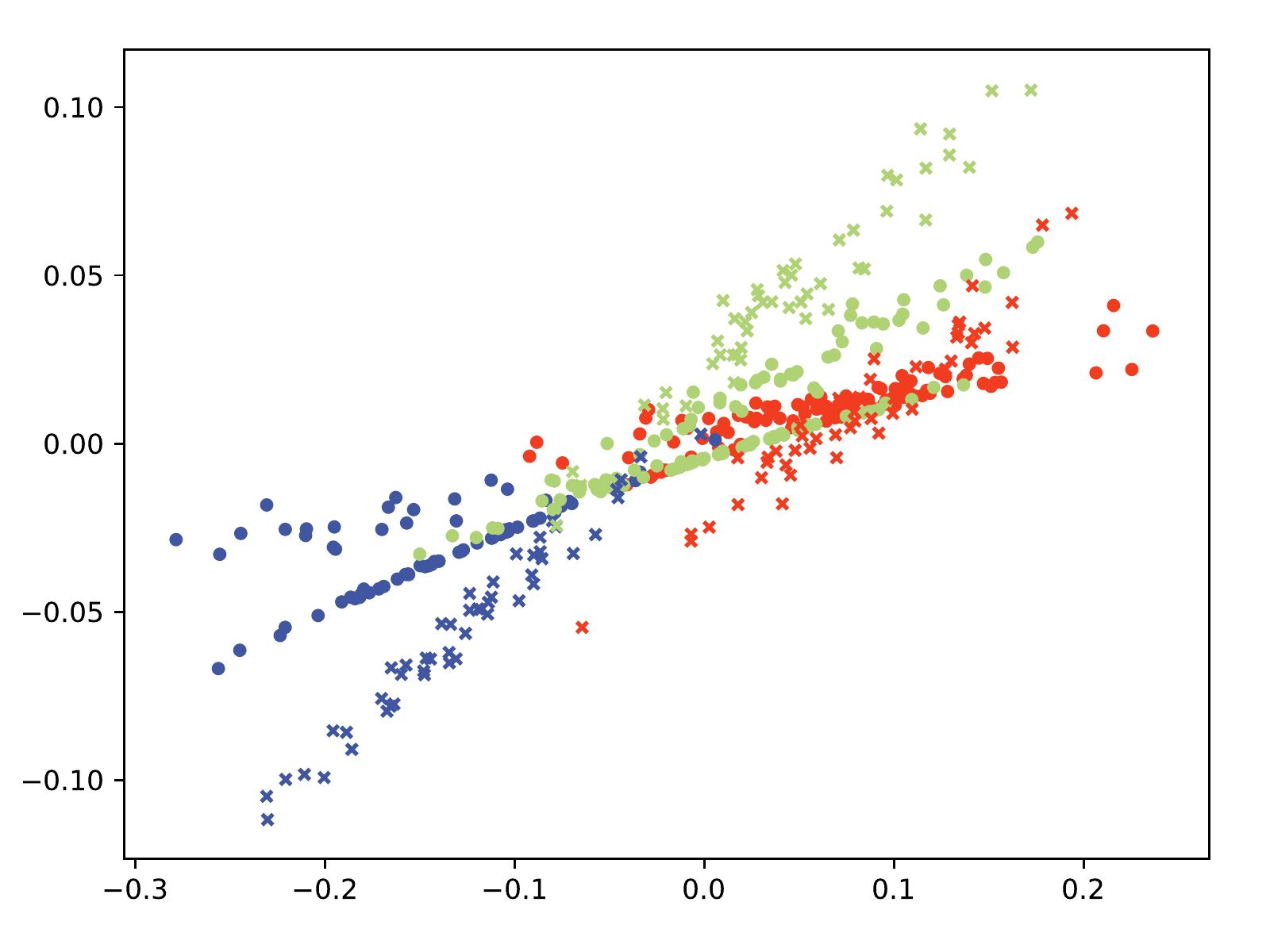}
		\caption{SCA(76.00\%)}
		\label{Fig:5_2}
	\end{subfigure}
	\begin{subfigure}{.24\linewidth}
		\centering
		\includegraphics[width=\linewidth]{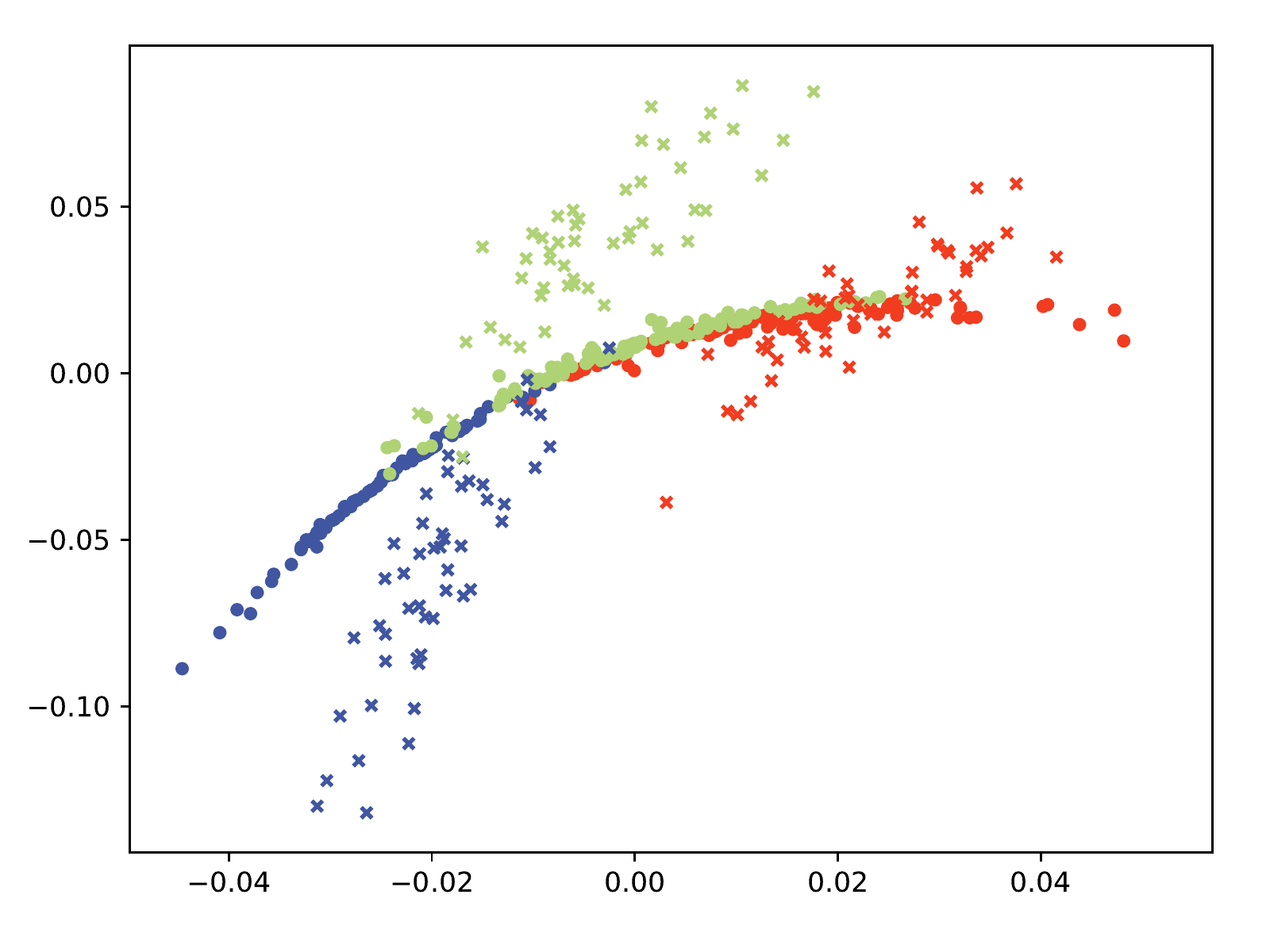}
		\caption{CIDG(86.67\%)}
		\label{Fig:5_3}
	\end{subfigure}
	\begin{subfigure}{.24\linewidth}
		\centering
		\includegraphics[width=\linewidth]{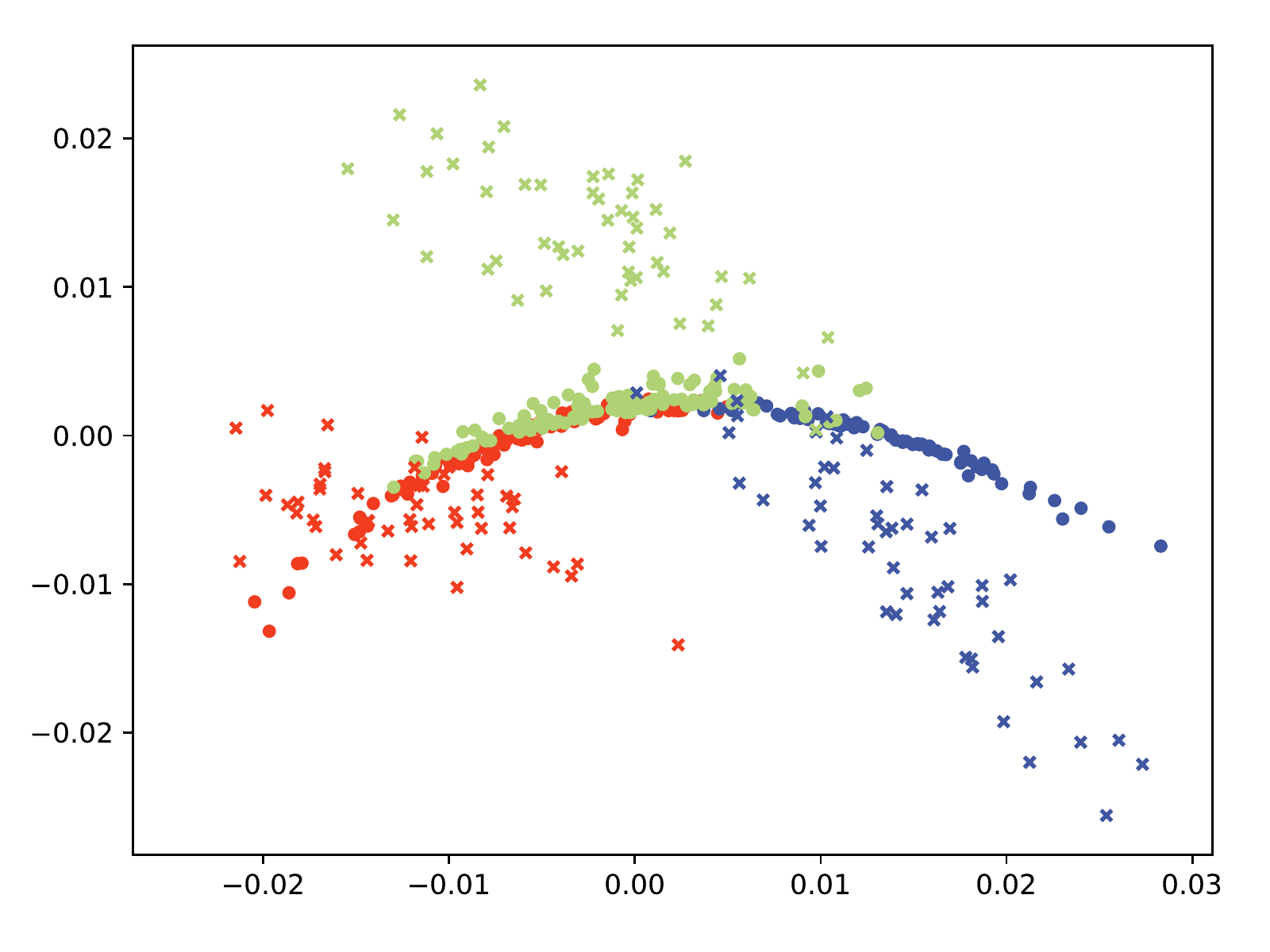}
		\caption{MDA(94.00\%)}
		\label{Fig:5_4}
	\end{subfigure}
	\caption{Visualization of transformed data in $\mathbb{R}^{q}$ of cases (2a, 2a), (2b, 2a), (2c, 2a), (2d, 2a), (2e, 2a). Each row corresponds to a case of class-prior distributions. Each column corresponds to a DG methods. The first column shows the distribution of the raw data. Different colors denote different classes. Circle marker denotes the data of source domain and cross marker denotes the data of target domain.}
	\label{fig:syn_re_1}
\end{figure}

\begin{figure}[b]
	\begin{subfigure}{.24\linewidth}
		\centering
		\includegraphics[width=\linewidth]{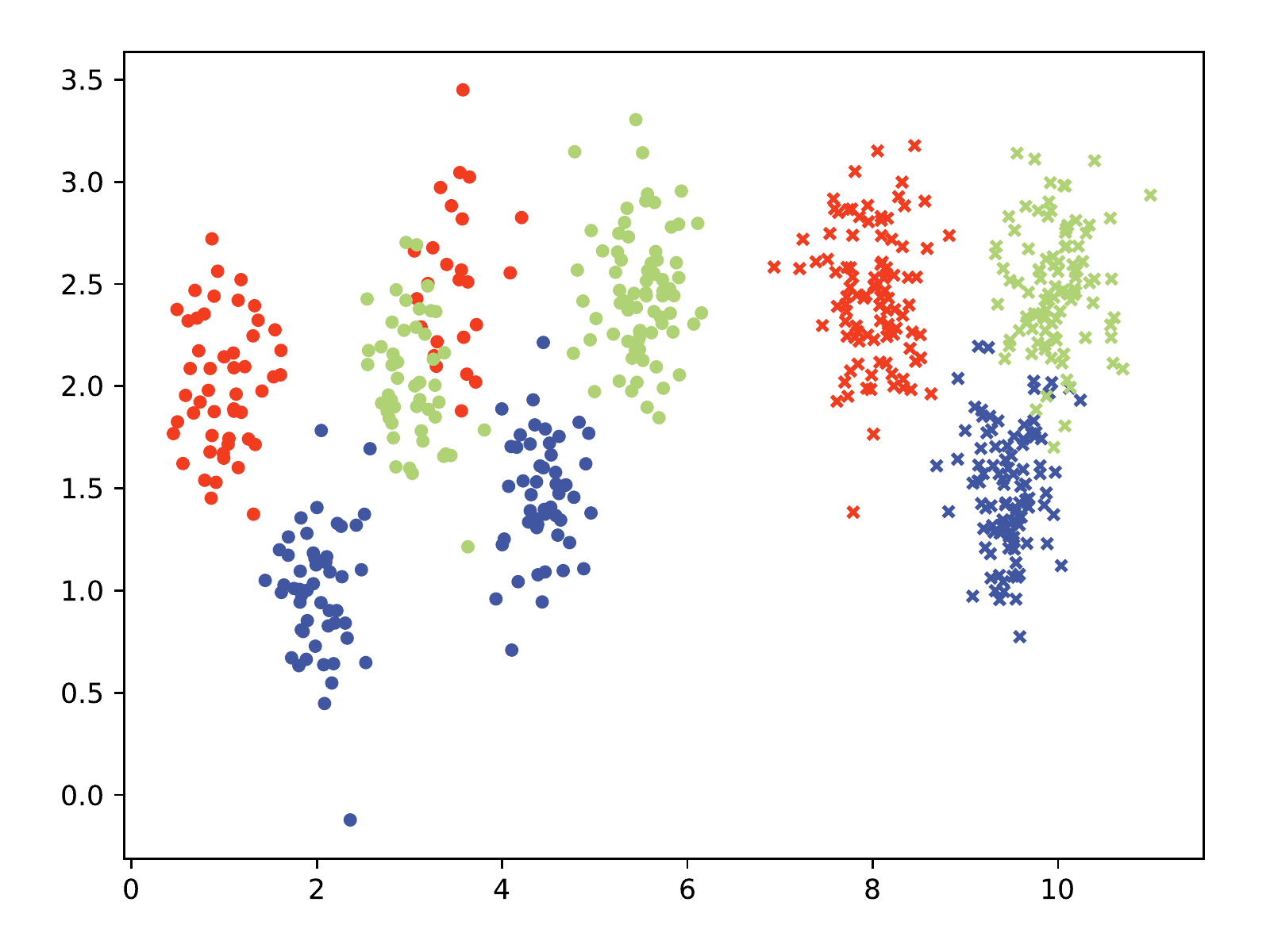}
		\caption{(2a, 2b) Raw data}
	\end{subfigure}
	\begin{subfigure}{.24\linewidth}
		\centering
		\includegraphics[width=\linewidth]{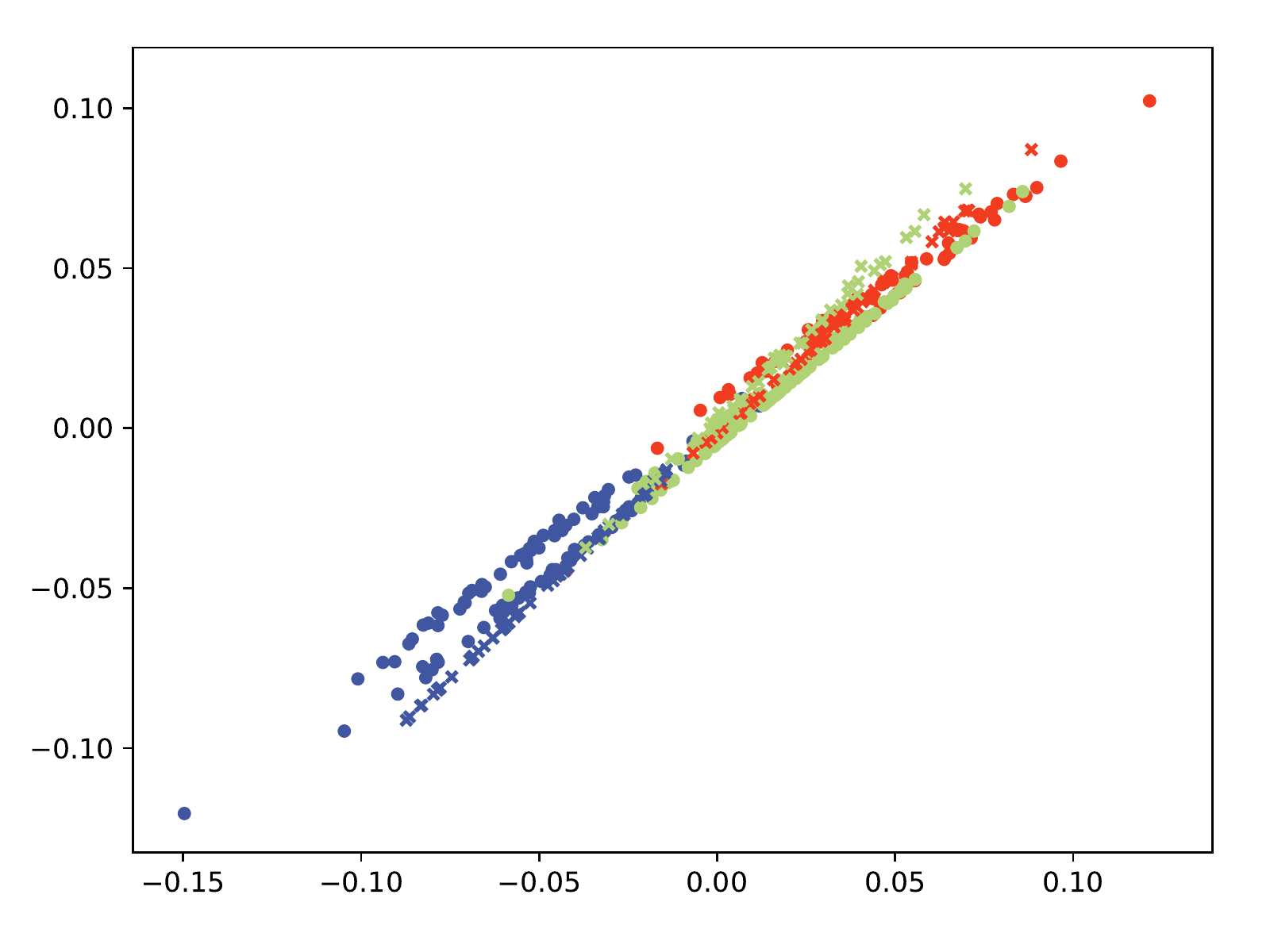}
		\caption{SCA(59.33\%)}
	\end{subfigure} 
	\begin{subfigure}{.24\linewidth}
		\centering
		\includegraphics[width=\linewidth]{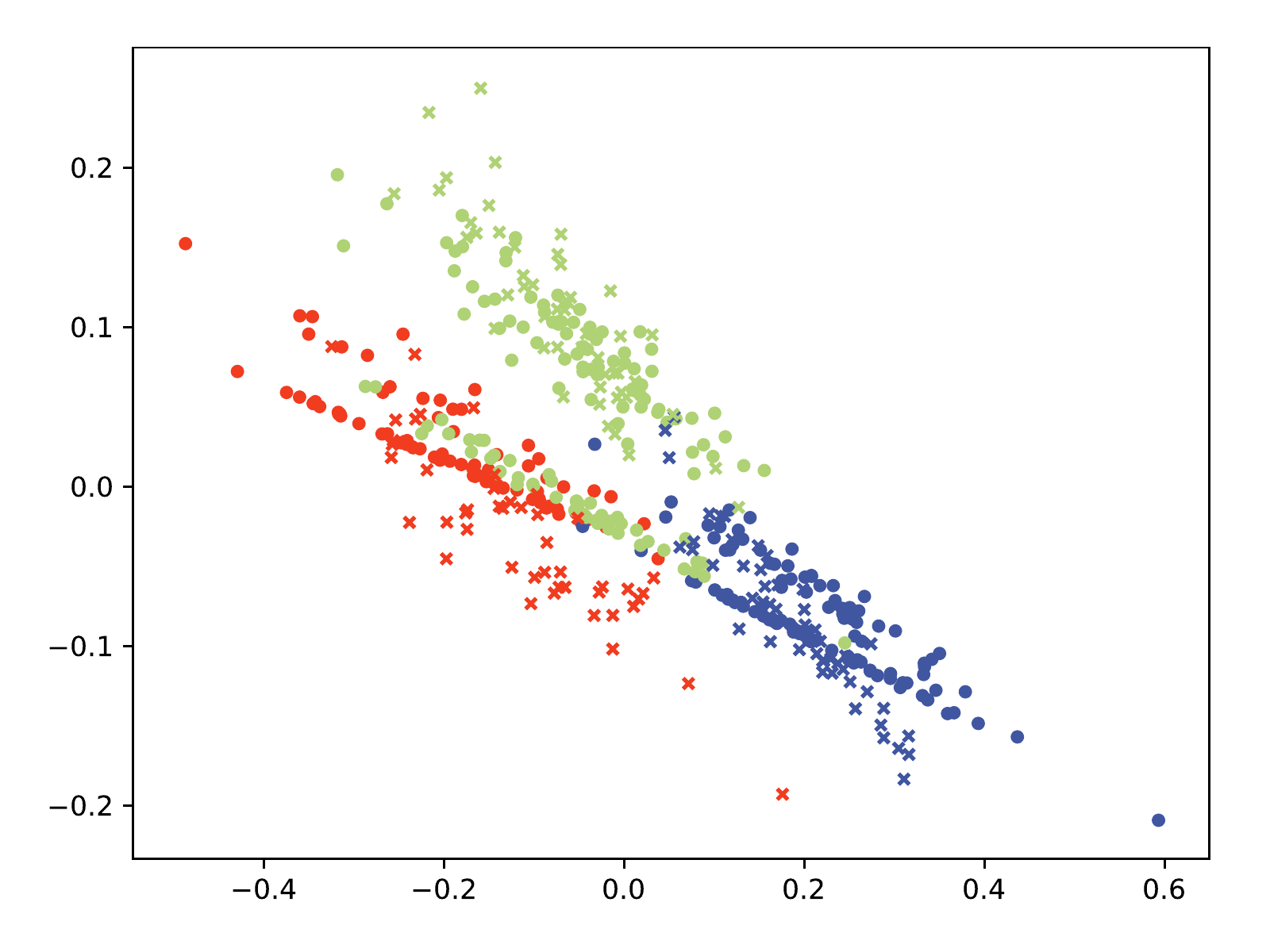}
		\caption{CIDG(83.33\%)}
	\end{subfigure}
	\begin{subfigure}{.24\linewidth}
		\centering
		\includegraphics[width=\linewidth]{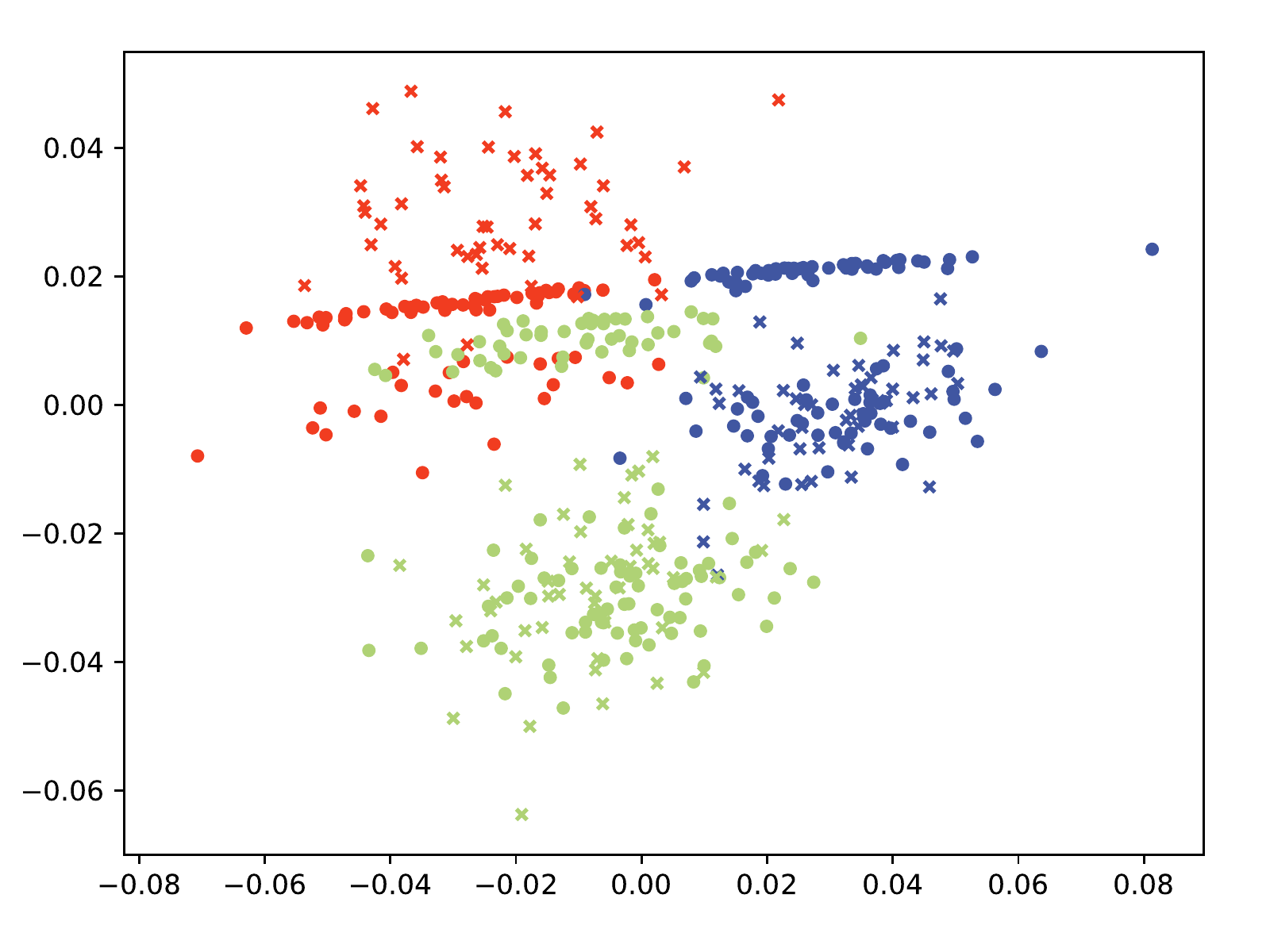}
		\caption{MDA(91.33\%)}
	\end{subfigure} \\
	
	\begin{subfigure}{.24\linewidth}
		\centering
		\includegraphics[width=\linewidth]{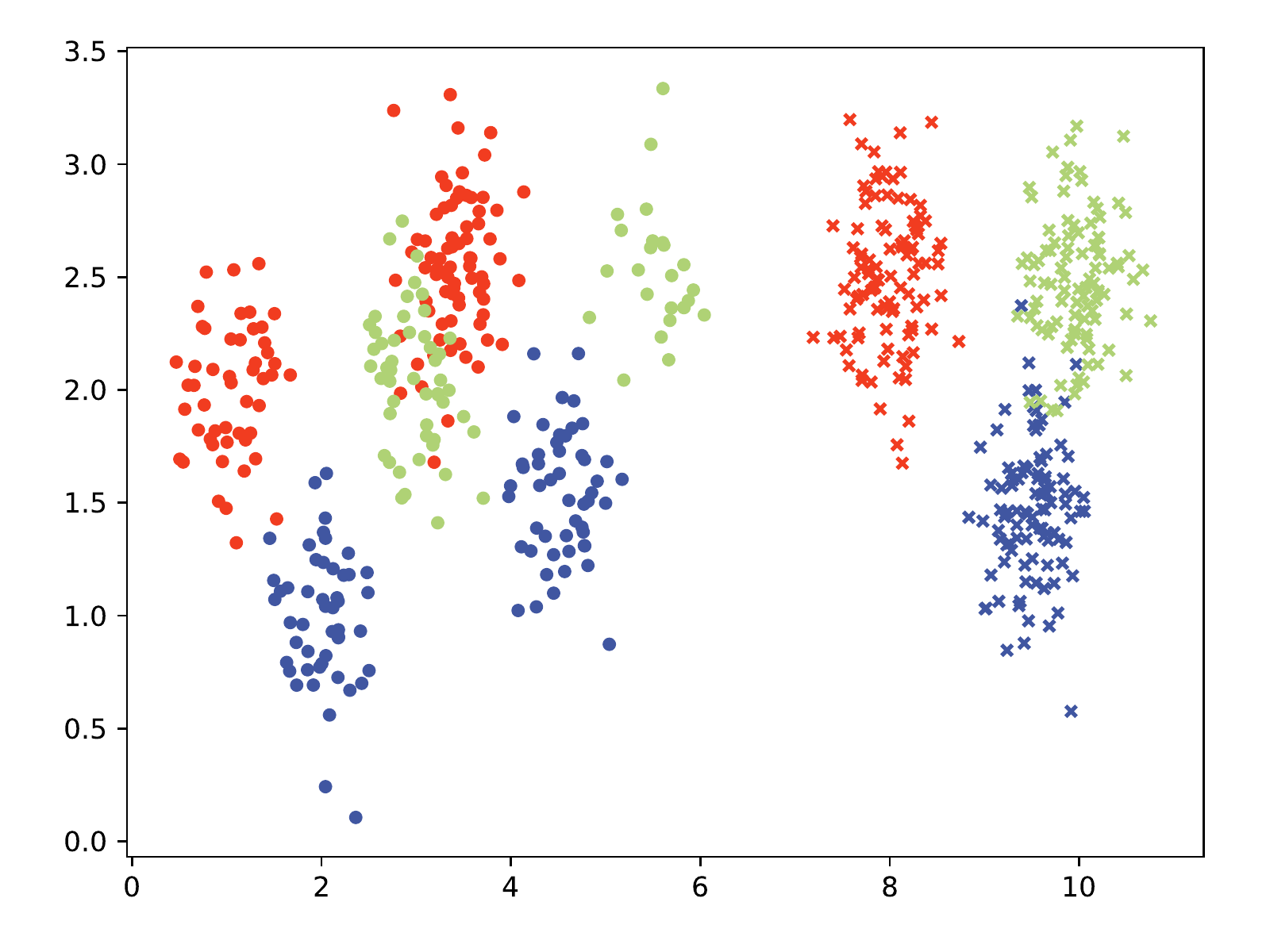}
		\caption{(2a, 2c) Raw data}
	\end{subfigure}
	\begin{subfigure}{.24\linewidth}
		\centering
		\includegraphics[width=\linewidth]{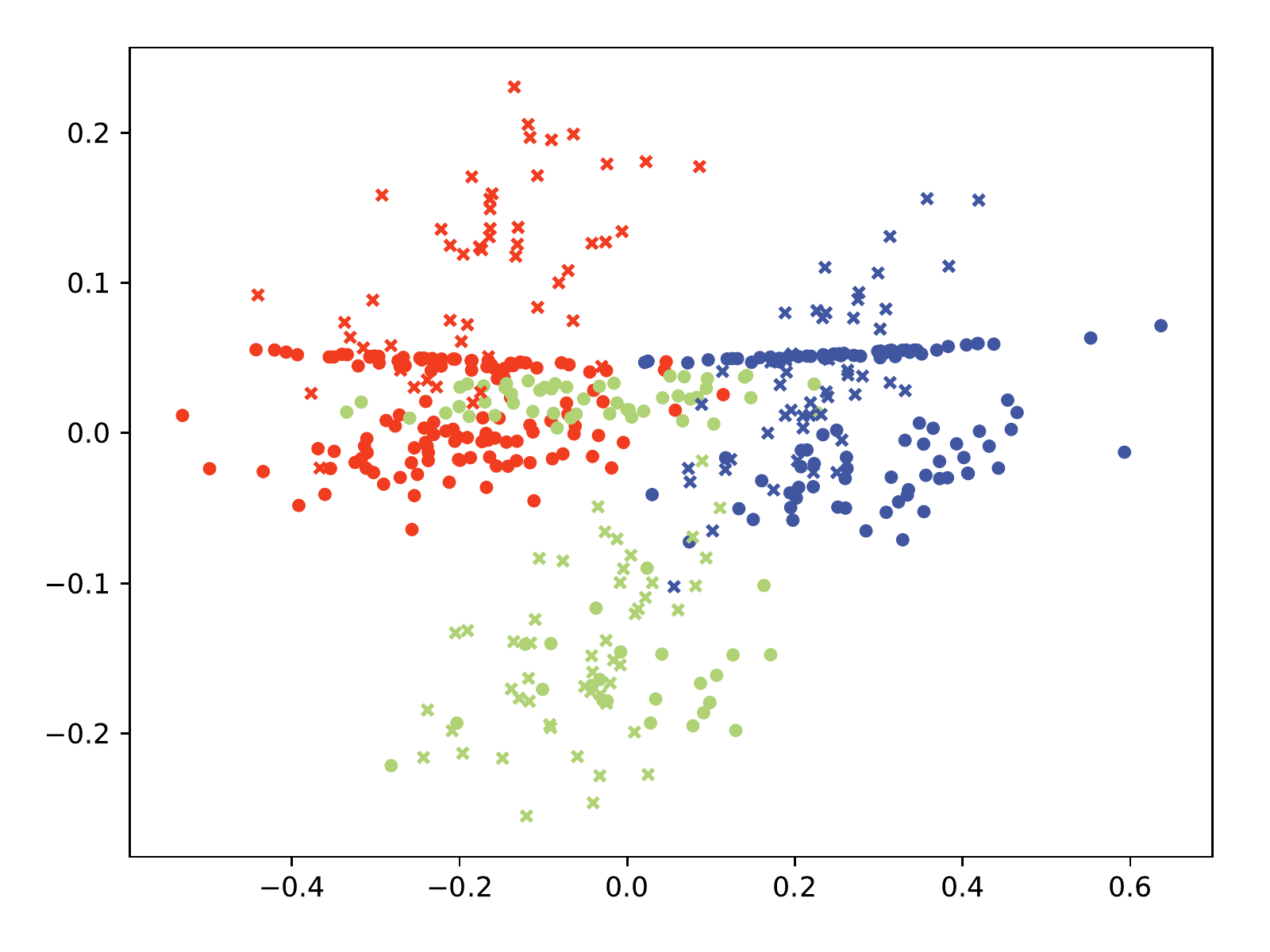}
		\caption{SCA(84.67\%)}
	\end{subfigure}
	\begin{subfigure}{.24\linewidth}
		\centering
		\includegraphics[width=\linewidth]{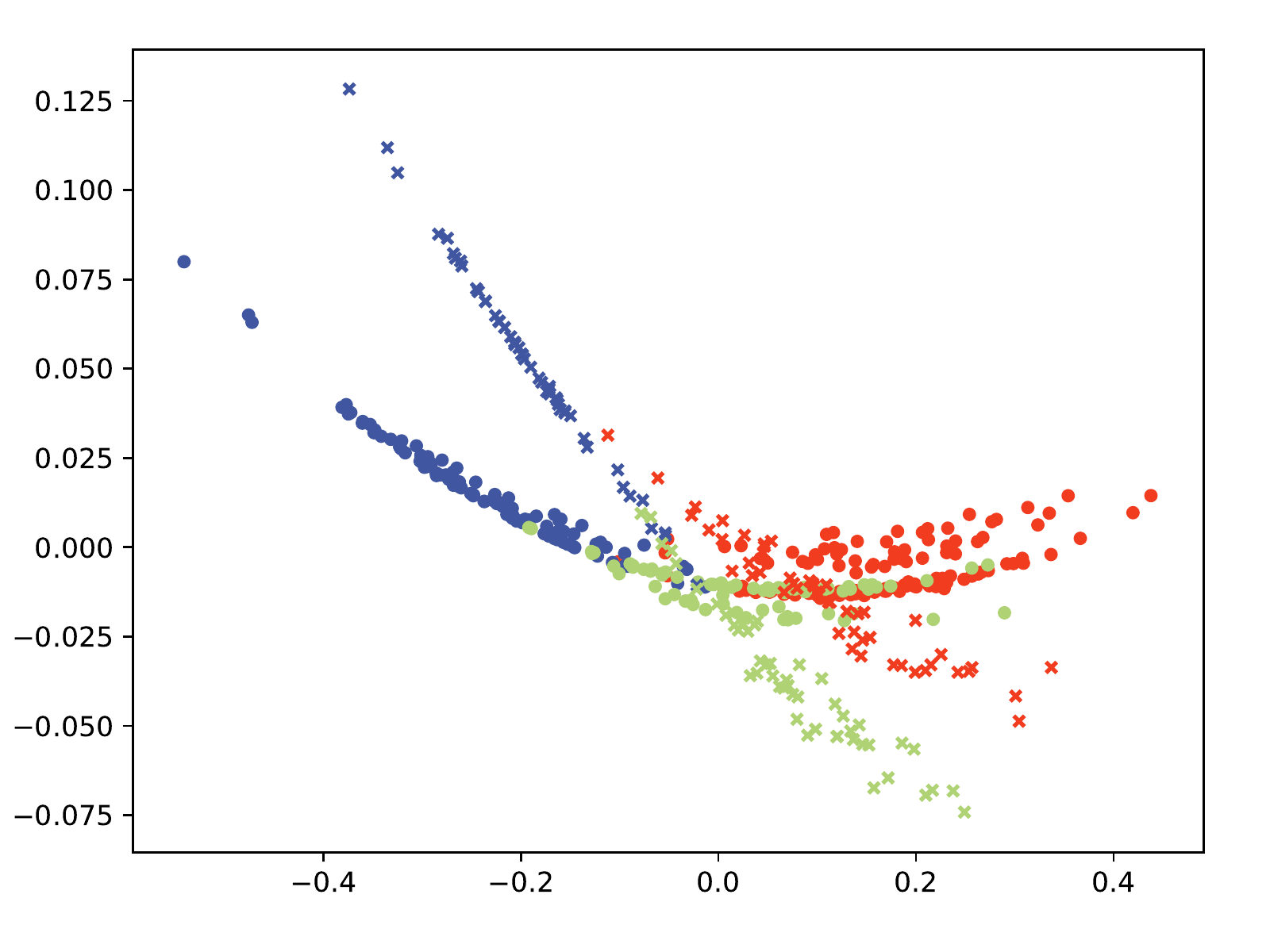}
		\caption{CIDG(92.00\%)}
	\end{subfigure}
	\begin{subfigure}{.24\linewidth}
		\centering
		\includegraphics[width=\linewidth]{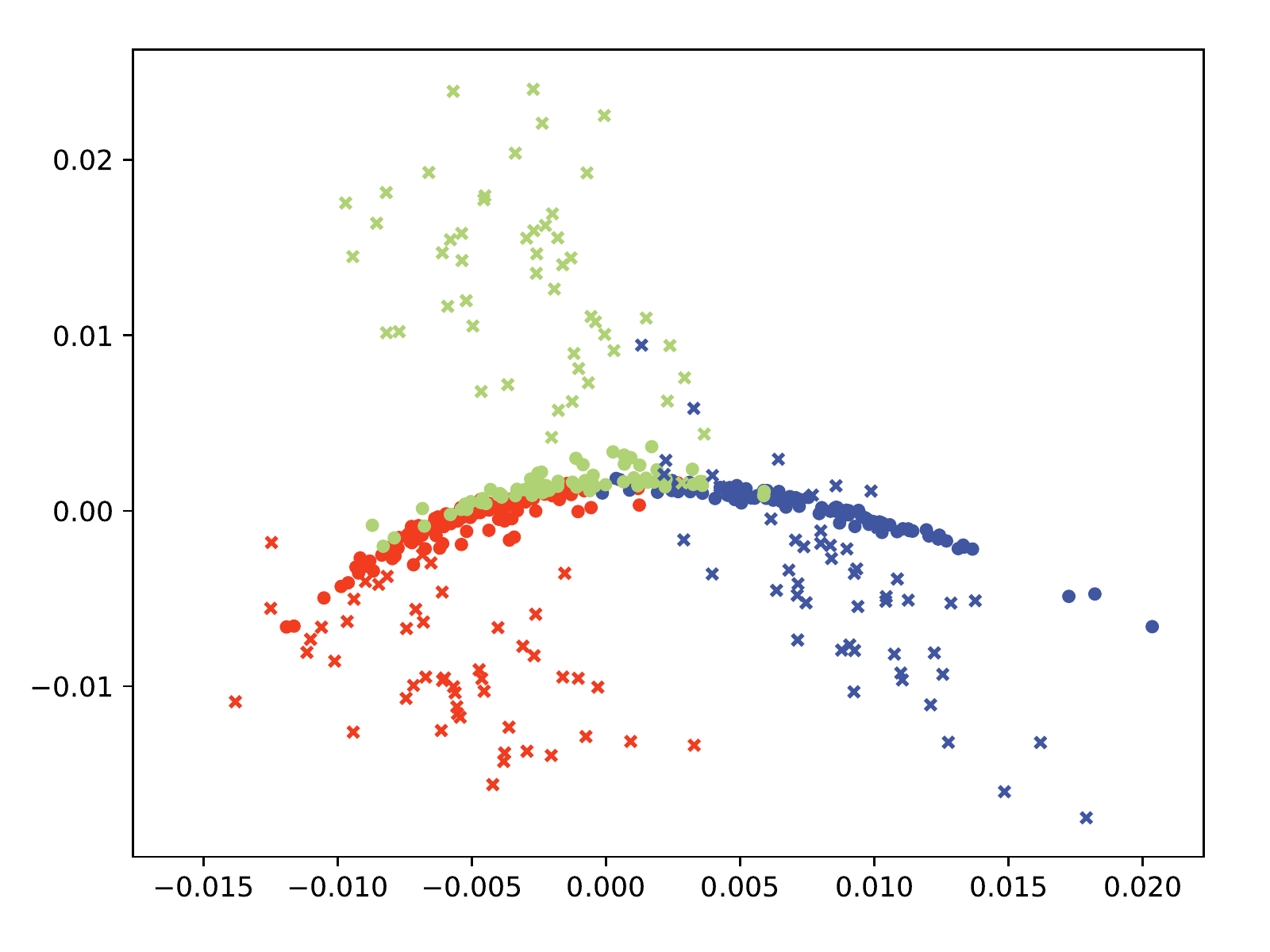}
		\caption{MDA(95.33\%)}
	\end{subfigure} \\

	\begin{subfigure}{.24\linewidth}
		\centering
		\includegraphics[width=\linewidth]{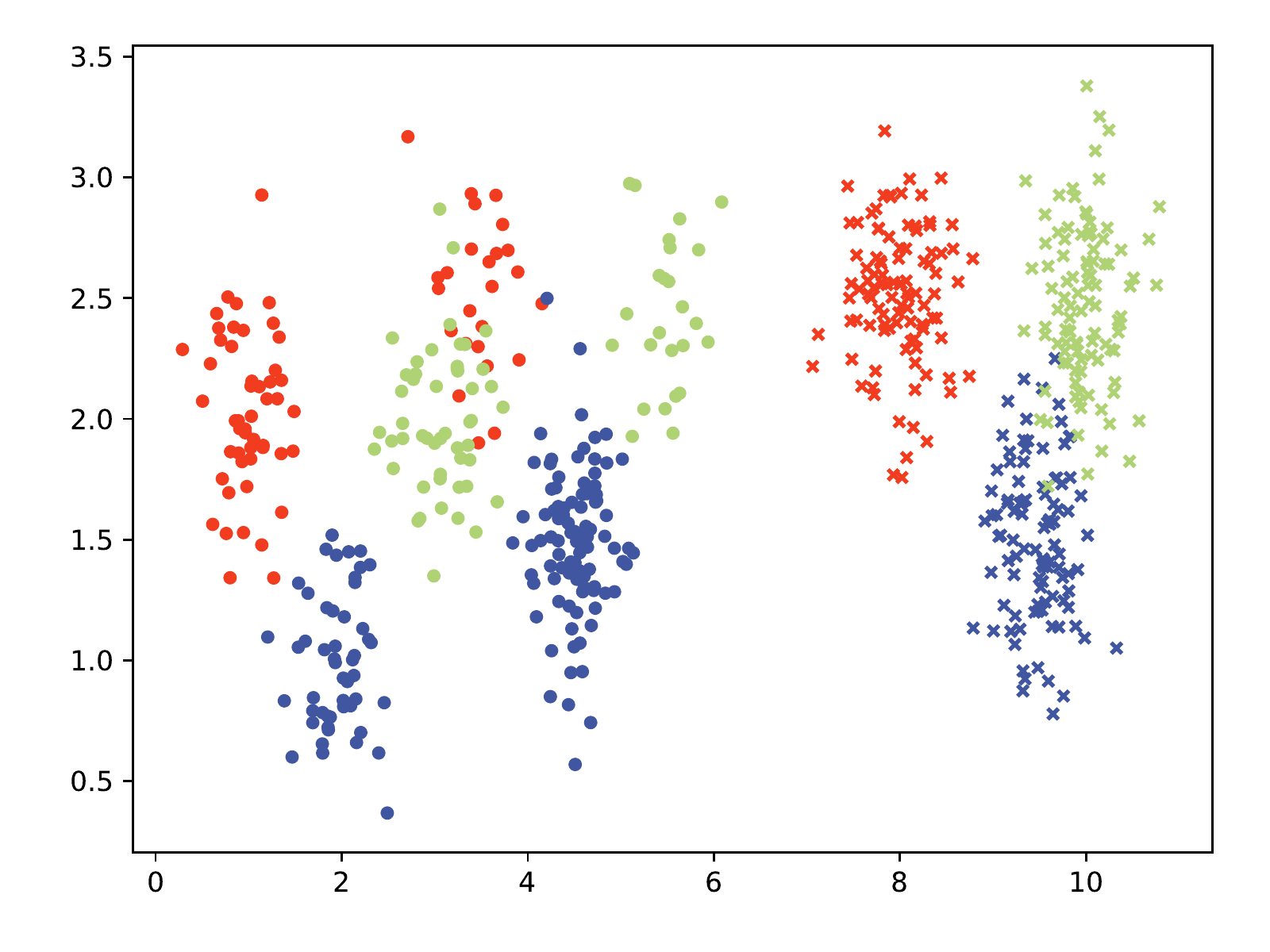}
		\caption{(2a, 2d) Raw data}
	\end{subfigure}
	\begin{subfigure}{.24\linewidth}
		\centering
		\includegraphics[width=\linewidth]{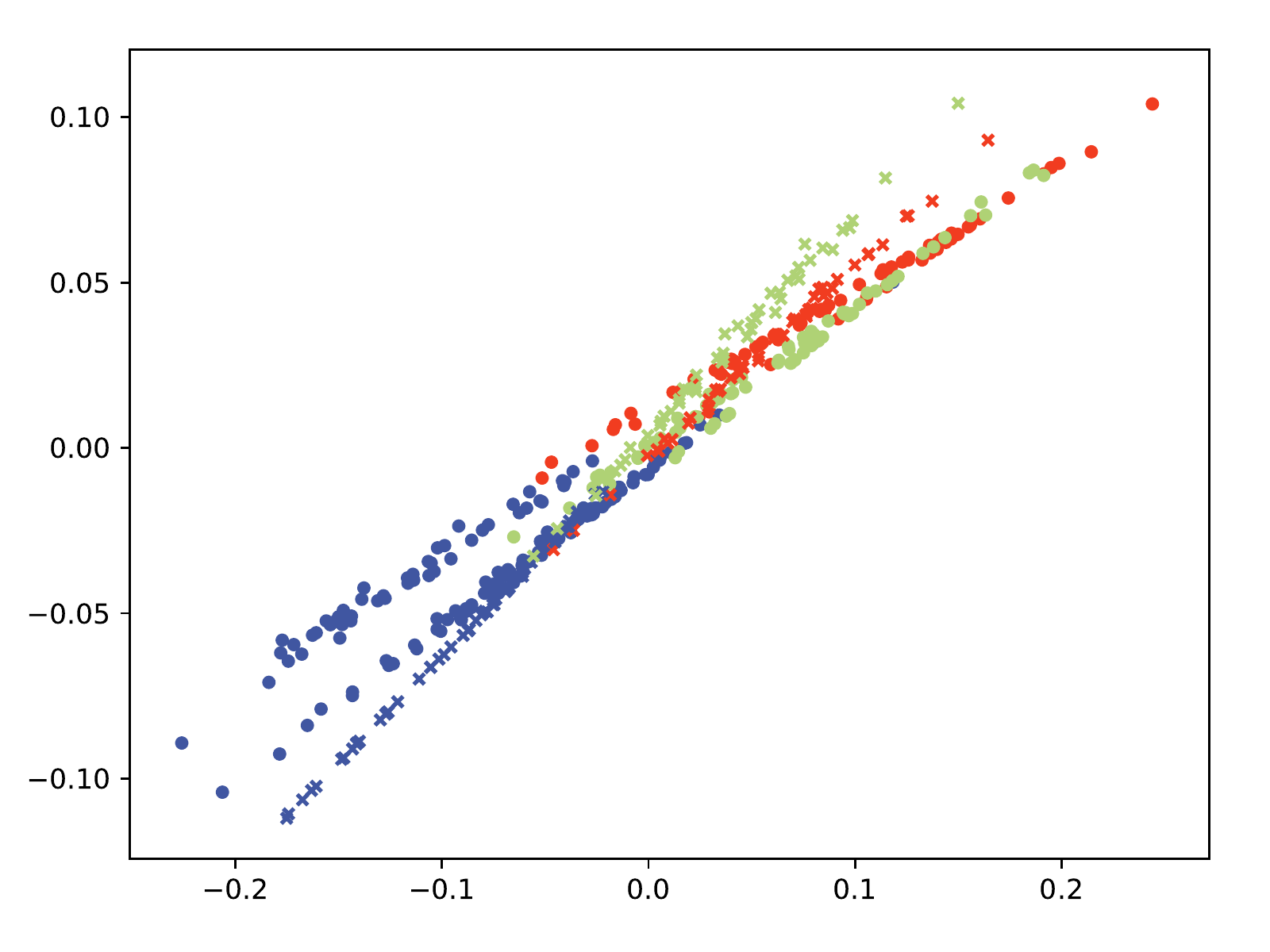}
		\caption{SCA(61.33\%)}
	\end{subfigure}
	\begin{subfigure}{.24\linewidth}
		\centering
		\includegraphics[width=\linewidth]{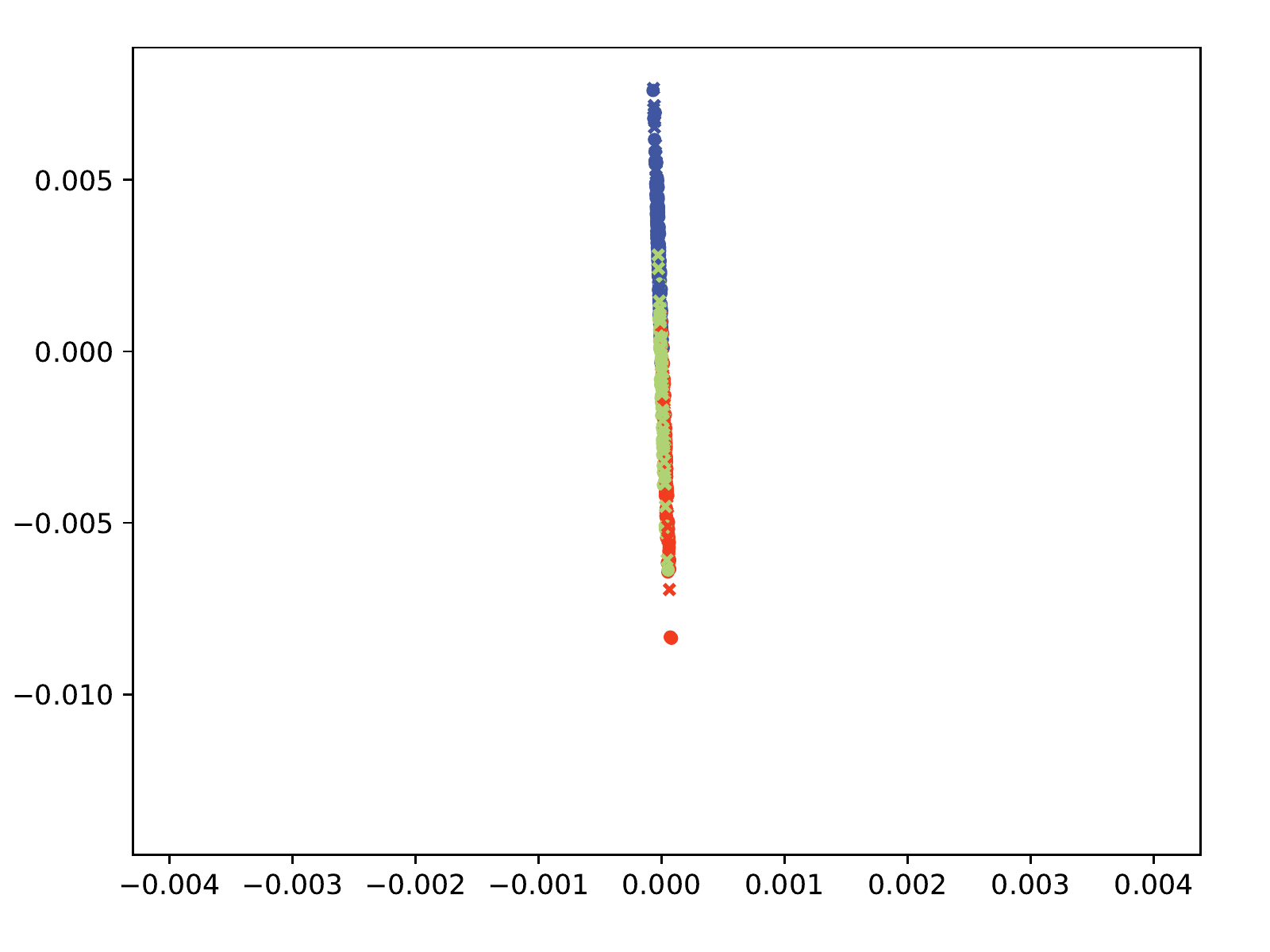}
		\caption{CIDG(82.00\%)}
	\end{subfigure}
	\begin{subfigure}{.24\linewidth}
		\centering
		\includegraphics[width=\linewidth]{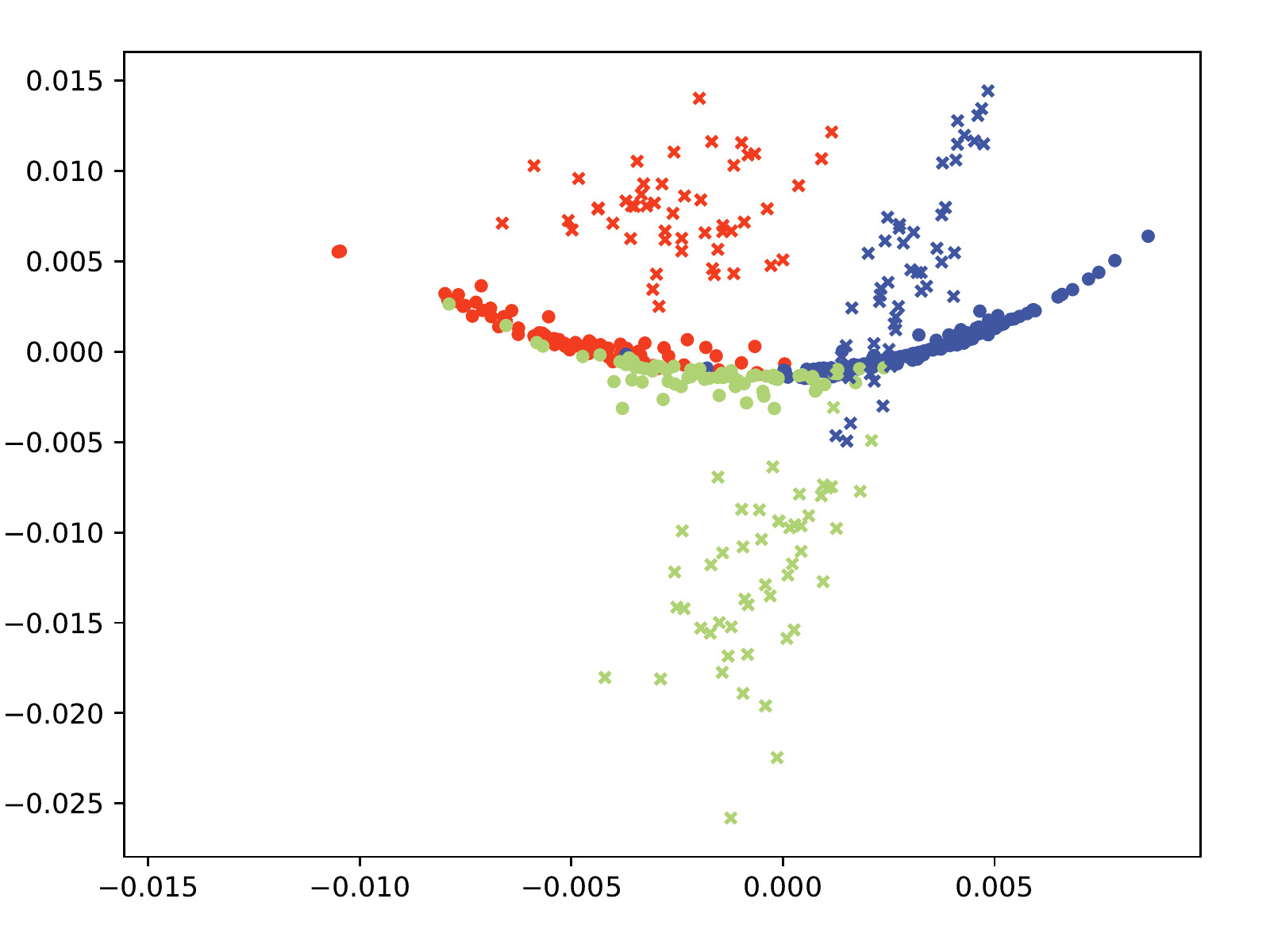}
		\caption{MDA(94.00\%)}
	\end{subfigure} \\
	
	\begin{subfigure}{.24\linewidth}
		\centering
		\includegraphics[width=\linewidth]{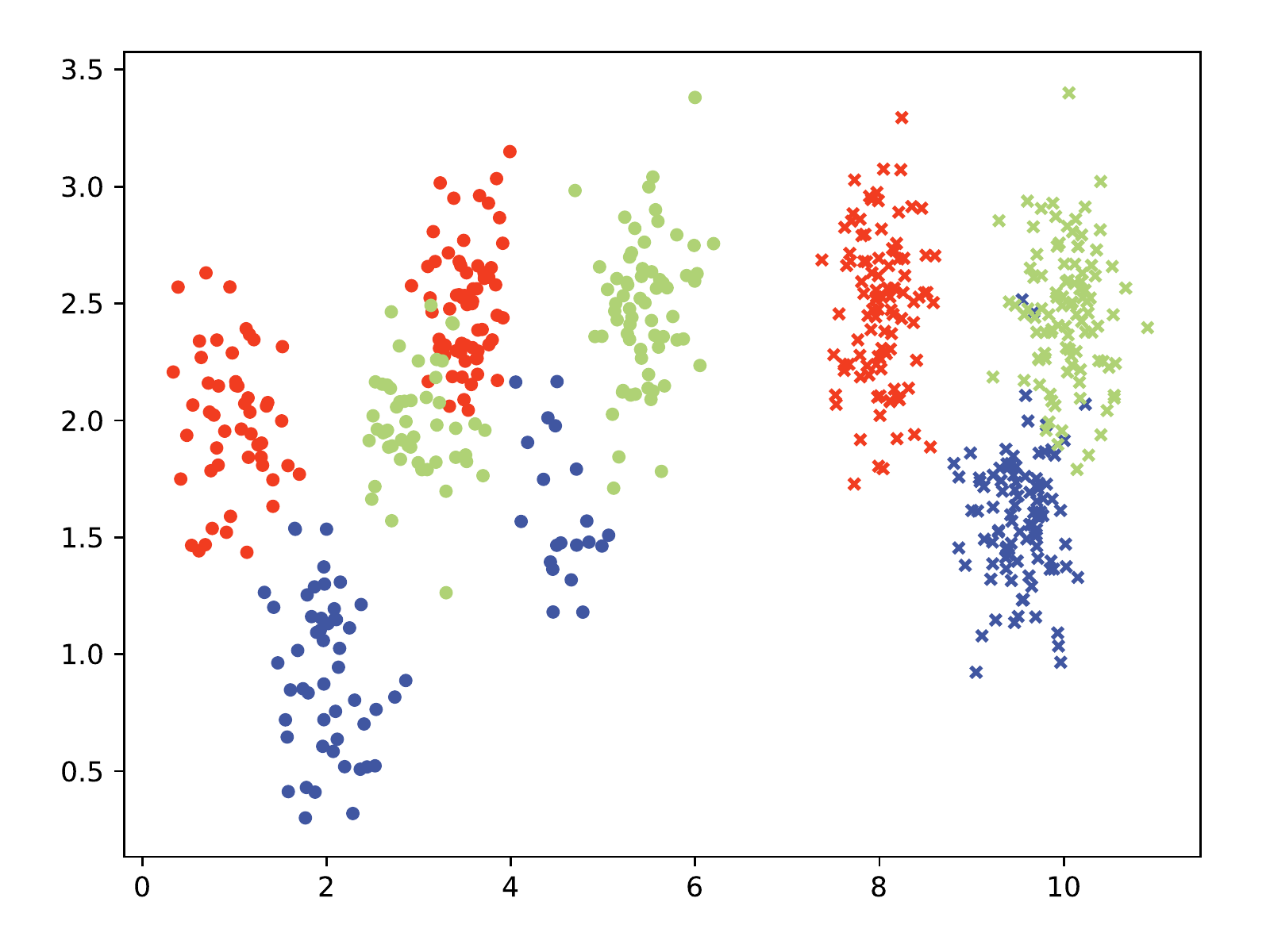}
		\caption{(2a, 2e) Raw data}
	\end{subfigure}
	\begin{subfigure}{.24\linewidth}
		\centering
		\includegraphics[width=\linewidth]{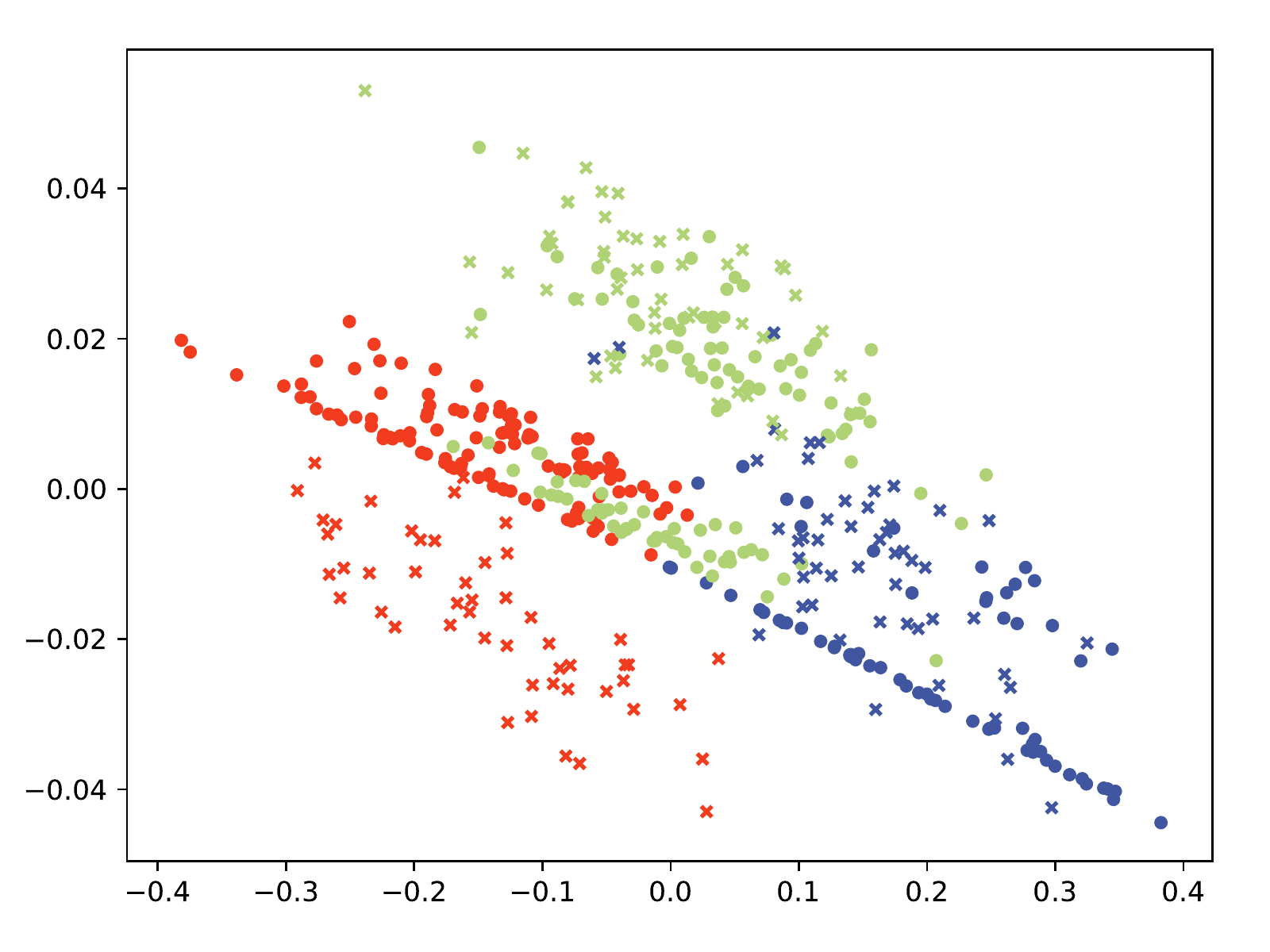}
		\caption{SCA(81.33\%)}
	\end{subfigure}
	\begin{subfigure}{.24\linewidth}
		\centering
		\includegraphics[width=\linewidth]{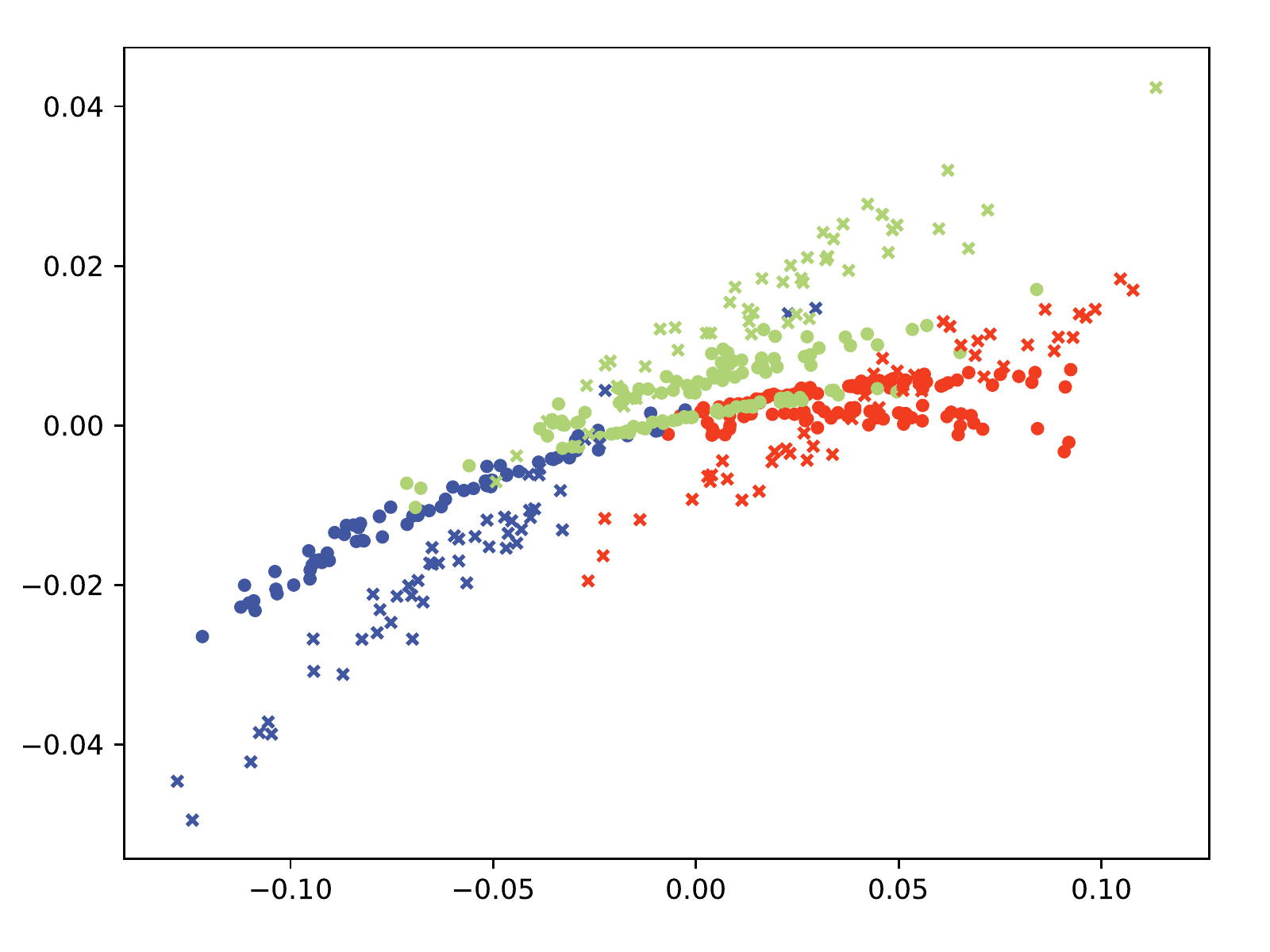}
		\caption{CIDG(86.00\%)}
	\end{subfigure}
	\begin{subfigure}{.24\linewidth}
		\centering
		\includegraphics[width=\linewidth]{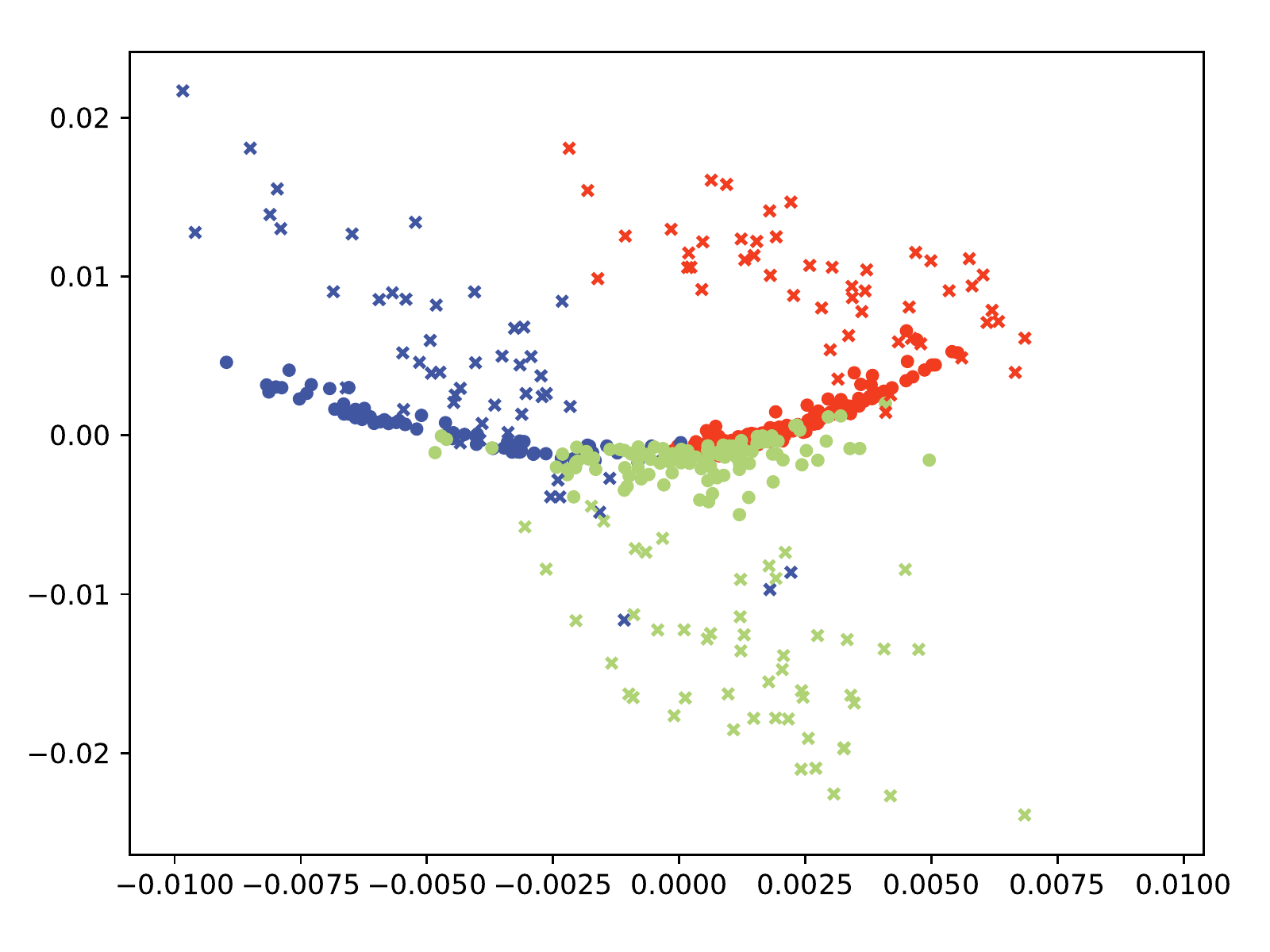}
		\caption{MDA(94.00\%)}
	\end{subfigure}

	\caption{Visualization of transformed data in $\mathbb{R}^{q}$ of cases (2a, 2b), (2a, 2c), (2a, 2d), (2a, 2e). Each row corresponds to a case of class-prior distributions. Each column corresponds to a DG methods. The first column shows the distribution of the raw data. Different colors denote different classes. Circle marker denotes the data of source domain and cross marker denotes the data of target domain.}
	\label{fig:syn_re_2}
\end{figure}

\clearpage
\section{Related Work}
Compared with domain adaptation, domain generalization is a younger line of research. \cite{blanchard2011generalizing} are the first to formalize the domain generalization of classification tasks. Motivated by automatic gating of flow cytometry data, they adopted kernel-based methods and derived the dual of a kind of cost-sensitive SVM to solve for the optimal decision function. A feature projection-based method called Domain Invariant Component Analysis (DICA; \citep{muandet2013domain}) was then proposed in 2013. DICA was the first to bring the idea of learning a shared subspace into domain generalization. It finds a transformation to a subspace in which the differences between marginal distributions $\mathbb{P}(X)$ over domains are minimized while preserving the functional relationship between $Y$ and $X$. 

Along this line, subsequent feature projection-based methods have been proposed. Scatter Component Analysis (SCA; \citep{ghifary2017scatter}) is the first unified framework for both domain adaptation and domain generalization. It combines domain scatter, kernel principal component analysis and kernel Fisher discriminant analysis into an objective and trades between them to learn the transformation. Unlike previous works, the authors of Conditional Invariant Domain Generalization (CIDG; \citep{AAAI1816595}) are the first to analyze domain generalization of classification tasks from causal perspective and thus consider more general cases where both $\mathbb{P}(Y|X)$ and $\mathbb{P}(X)$ vary across domains. They combine total scatter of class-conditional distributions, scatter of class prior-normalized marginal distributions, and kernel Fisher discriminant analysis to achieve the goal of domain generalization.

Besides the aforementioned methods in general, domain generalization problem also attracted extensive attention of computer vision community. \cite{khosla2012undoing} proposed a max-margin framework (Undo-Bias) in which each domain is assumed to be controlled by the sum of the visual world and a bias. A modified SVM-based method is adopted for solving the weights and biases in the model. Unbiased Metric Learning (UML; \citep{fang2013unbiased}), which is based on a learning-to-rank framework, first learns a set of distance metrics and then validate to select the one with best generalization ability. \cite{xu2014exploiting} adopted exemplar-SVM and introduced a nuclear norm based regularizer into the objective to learn a set of more robust examplar-SVMs for domain generalization purpose. \citet{ghifary2015domain} introduced Multi-task Autoencoder (MTAE), a feature learning algorithm that uses a multi-task strategy to learn unbiased object features, where the task is the data reconstruction. More recently, domain generalization methods based on deep neural networks \citep{Motiian_2017_ICCV, 8237853, lihl2018domain, Li_2018_ECCV} were proposed to cope with the problem induced by distribution shift.

\end{document}